\algnewcommand\algorithmicreturn{\textbf{Return} }
\algnewcommand\RETURN{\State \algorithmicreturn}%
\theoremstyle{plain}
\newtheorem{theorem}{Theorem}[section]
\newtheorem{proposition}[theorem]{Proposition}
\newtheorem{corollary}[theorem]{Corollary}
\theoremstyle{definition}
\theoremstyle{remark}
\newtheorem{remark}[theorem]{Remark}
\definecolor{tabblue}{RGB}{31, 119, 180}
\definecolor{taborange}{RGB}{255, 127, 14}
\newcommand{\newmethodlin}{\texttt{HiLoFi}\xspace}
\newcommand{\newmethodlinlow}{\texttt{LoLoFi}\xspace}
\newcommand{\newmethodvb}{\texttt{OnVBLL}\xspace}
\newcommand{\methodlaplace}{\texttt{LLL}\xspace}
\newcommand{\methodvbll}{\texttt{VBLL}\xspace}
\newcommand{\methodlrkf}{\texttt{LRKF}\xspace}
\newcommand{\methodlofi}{\texttt{LoFi}\xspace}
\newcommand{\methodgp}{\texttt{GP}\xspace}
\newcommand{\methodshampoo}{\texttt{muon}\xspace}
\newcommand{\eat}[1]{} 
\newcommand{\E}{\mathbb{E}}
\newcommand{\cond}{\,\vert\,}
\DeclareMathOperator*{\argmax}{arg\,max}
\DeclareMathOperator*{\argmin}{arg\,min}
\DeclareMathOperator{\Tr}{Tr}
\renewcommand{\d}{\mathrm{d}}
\newcommand{\cov}{{\rm Cov}}
\newcommand{\var}{{\rm Var}}
\newcommand{\plast}{{\vell}}  
\newcommand{\phidden}{{\bm h}} 
\newcommand{\nn}{{f}}  
\newcommand{\dimall}{{D_\vtheta}}
\newcommand{\dimsub}{d}
\newcommand{\dimlastsub}{{d_{\plast}}}
\newcommand{\dimhiddensub}{{d_{\phidden}}}
\newcommand{\dimlast}{{D_{\plast}}}
\newcommand{\dimhidden}{{D_{\phidden}}}
\newcommand{\dimobs}{{D_\vy}}
\newcommand{\dimin}{{D_\vx}}
\newcommand{\dimcontext}{{D_c}}
\newcommand{\dimaction}{{D_a}}
\newcommand{\varinnov}{\vS}
\newcommand{\covlast}[1]{\vSigma_{\plast,#1}}
\newcommand{\covhidden}[1]{\vSigma_{\phidden,#1}}
\newcommand{\lrhidden}{\vC} 
\newcommand{\meanhidden}[1]{\phidden_{#1|#1}}
\newcommand{\meanlast}[1]{\plast_{#1|#1}}
\newcommand{\gradhidden}{\tilde{\vH}}
\newcommand{\gradlast}{\tilde{\vL}}
\newcommand{\gradall}{\tilde{\vJ}}
\newcommand{\reals}{\mathbb{R}}
\newcommand{\naturals}{\mathbb{N}}
\newcommand{\mat}[1]{{\mathcal{M}_{#1}(\mathbb{R})}}
\newcommand{\matSemiPos}[1]{{\mathcal{M}^{+}_{#1}(\mathbb{R})}}
\newcommand{\matPos}[1]{{\mathcal{M}^{++}_{#1}(\mathbb{R})}}
\newcommand{\myvec}[1]{\mathbf{#1}}
\newcommand{\myvecsym}[1]{\boldsymbol{#1}} 
\newcommand{\vzero}{\myvecsym{0}}
\newcommand{\vepsilon}{\myvecsym{\epsilon}}
\newcommand{\vell}{\myvecsym{\ell}}
\newcommand{\vmu}{\myvecsym{\mu}}
\newcommand{\vtheta}{\myvecsym{\theta}}
\newcommand{\vSigma}{\myvecsym{\Sigma}}
\newcommand{\va}{\myvec{a}}
\newcommand{\vb}{\myvec{b}}
\newcommand{\vc}{\myvec{c}}
\newcommand{\ve}{\myvec{e}}
\newcommand{\vh}{\myvec{h}}
\newcommand{\vm}{\myvec{m}}
\newcommand{\vs}{\myvec{s}}
\newcommand{\vu}{\myvec{u}}
\newcommand{\vv}{\myvec{v}}
\newcommand{\vx}{\myvec{x}}
\newcommand{\vy}{\myvec{y}}
\def\vtheta{{\bm{\theta}}}
\def\va{{\bm{a}}}
\def\vb{{\bm{b}}}
\def\vc{{\bm{c}}}
\def\ve{{\bm{e}}}
\def\vh{{\bm{h}}}
\def\vm{{\bm{m}}}
\def\vs{{\bm{s}}}
\def\vu{{\bm{u}}}
\def\vv{{\bm{v}}}
\def\vx{{\bm{x}}}
\def\vy{{\bm{y}}}
\newcommand{\vA}{\myvec{A}}
\newcommand{\vB}{\myvec{B}}
\newcommand{\vC}{\myvec{C}}
\newcommand{\vE}{\myvec{E}}
\newcommand{\vH}{\myvec{H}}
\newcommand{\vI}{\myvec{I}}
\newcommand{\vJ}{\myvec{J}}
\newcommand{\vK}{\myvec{K}}
\newcommand{\vL}{\myvec{L}}
\newcommand{\vM}{\myvec{M}}
\newcommand{\vN}{\myvec{N}}
\newcommand{\vQ}{\myvec{Q}}
\newcommand{\vR}{\myvec{R}}
\newcommand{\vS}{\myvec{S}}
\newcommand{\vU}{\myvec{U}}
\newcommand{\vV}{\myvec{V}}
\newcommand{\vW}{\myvec{W}}
\newcommand{\data}{\mathcal{D}}
\title{
Martingale Posterior Neural Networks for Fast Sequential Decision Making
}
\author{%
Gerardo Duran-Martin$^{1}$ \quad Leandro Sánchez-Betancourt$^{1,2}$ \\
\textbf{Álvaro Cartea}$^{1,2}$ \quad \textbf{Kevin Murphy}$^{3}$\\
$^1$ Oxford-Man Institute of Quantitative Finance \\ $^2$ Mathematical Institute, University of Oxford \\ $^3$Google Deepmind\\
\\
\texttt{gerardo.duran-martin@eng.ox.ac.uk}\\
\texttt{\{leandro.sanchezbetancourt,alvaro.cartea\}@maths.ox.ac.uk}\\
\texttt{kpmurphy@google.com}
}
\begin{document}

\maketitle

\begin{abstract}
We introduce scalable algorithms for online learning of neural network parameters and Bayesian sequential decision making.
Unlike classical Bayesian neural networks,
which induce predictive uncertainty through a posterior over model parameters,
our methods adopt a predictive-first perspective based on martingale posteriors.
In particular, we work directly with the one-step-ahead posterior predictive, which we
parameterize with a neural network and update sequentially with incoming observations.
This decouples Bayesian decision-making from parameter-space inference:
we sample from the posterior predictive for decision making,
and update the parameters of the posterior predictive via fast, frequentist Kalman-filter-like
recursions. 
Our algorithms operate in a fully online, replay-free setting, providing principled uncertainty quantification without costly posterior sampling.
Empirically, they achieve competitive performance–speed trade-offs in non-stationary contextual bandits and Bayesian optimization,
offering 10–100 times faster inference than classical Thompson sampling while maintaining comparable or superior decision performance.
\end{abstract}


\section{Introduction}
\label{sec:introduction}

In various sequential decision-making problems, such as Bayesian optimization and contextual bandits, uncertainty quantification and efficient belief updates are essential for balancing exploration and exploitation.
A prominent Bayesian approach is Thompson sampling (TS) \cite{russo2020tutorialthompsonsampling},
which samples from the posterior over model parameters and evaluates the corresponding predictive function
(e.g., reward or surrogate) for decision-making.

Various works combine Bayesian uncertainty with the expressiveness of neural networks 
for sequential decision making
\cite{brunzema2024bayesian,riquelme2018banditbayes}.
These methods, often called Bayesian neural networks (BNNs),
typically approximate the posterior over model parameters (e.g., via variational inference) and then use this posterior for downstream tasks.
However, BNNs often rely on misspecified priors and likelihoods \cite{knoblauch2022optimization}, which often degrade predictive performance \cite{wenzel2020good} and result in slower inference than non-Bayesian alternatives \cite{lakshminarayanan2017simple}.

These limitations are especially problematic in online settings where updates and decisions must be made quickly,
such as in recommender systems, adaptive control, financial forecasting, and large-scale Bayesian optimization
\cite{shen2015portfolio,liu2022monolith,mcdonald2023impatient,cartea2023bandits,waldon2024dare}.

To address these problems,
we present a class of algorithms for Bayesian sequential decision making
inspired by the predictive-first philosophy of the martingale posterior framework
\cite{fong2023martingaleposterior,holmes2023statistical}.
Our approach operates in a fully online setting:
the one-step-ahead posterior predictive is parameterized by a neural network,
updated directly from real observations via frequentist Kalman filter-style recursions,
and sampled once per step for decision making.
This avoids the need for costly posterior inference,
while still supporting uncertainty and sampling-based sequential decision making.

The parameters defining the predictive density are updated with a single-pass frequentist Kalman filter-style methods,
which resemble online natural gradient steps \cite{ollivier2018expfamekf} and require no replay buffer
(copies of past observations, which one typically uses for multiple inner-iterations).
We explore three structured covariance strategies:
\methodlrkf applies low-rank updates to all layers;
\newmethodlin applies a full-rank update to the last layer and a low-rank update to the hidden layers;
\newmethodlinlow applies low-rank updates to both.
\newmethodlin and \newmethodlinlow are inspired by last-layer Bayesian methods \cite{harrison2024variational},
and offer different tradeoffs between computational efficiency and expressiveness.

Decision making is performed by sampling directly from the posterior predictive,
resulting in 10 to 100 times faster inference than that of classical TS,
even when compared with structured posteriors that use diagonal plus low-rank covariances \cite{chang23lofi}.
Our algorithms scale to million-parameter networks, maintain constant-time updates, and operate in fully online, streaming-compatible settings.
In addition, our methodology supports any acquisition strategy, such as classical TS or expected improvement.

In summary, our probabilistic method for training neural networks enjoys the following properties:
(i) efficient closed-form updates (no Monte Carlo sampling required);
(ii) sample efficiency through low-rank Kalman filter-like updates which do not require the specification of a posterior density;
(iii) closed-form uncertainty-aware predictions via the posterior predictive.

We evaluate the performance of our methods across a range of sequential decision-making tasks.
In non-stationary neural contextual bandits, our methods achieve the highest performance at the lowest computational cost compared to that of baselines.
In stationary settings such as Bayesian optimization, our approach matches the performance of replay-buffer methods
while achieving Pareto-efficient tradeoffs between runtime and performance.
Our code is available at \url{https://github.com/gerdm/martingale-posterior-neural-networks}.

A complete summary of the notation used throughout the paper is in Appendix~\ref{sec:notation}.

\section{Problem statement}
We consider a sequential-decision-making agent
that at time $t$ observes a context-action pair $\vx_t = (\vc_t, \va_t)$
with
context $\vc_t \in \reals^{\dimcontext}$ (possibly empty)
and
action $\va_t \in {\cal A}\subseteq \reals^\dimaction$.
The environment returns a reward (or observation) according to
\begin{equation}
    \vy_t \sim p_{\rm env}(\cdot \cond \vx_t), 
    \qquad \vy_t \in \reals^{\dimobs},
\end{equation}
which depends only on the context-action pair $\vx_t$.
We use the notation $y_t$ when the outcome is scalar ($\dimobs=1$) 
and $\vy_t$ when it is vector-valued ($\dimobs > 1$).

This formulation covers, e.g.,
Bayesian optimization ($\vc_t$ empty and $\va_t$ the query location), multi-armed bandits ($\vc_t$ empty and $\va_t$ the chosen arm),
and
contextual bandits ($\vc_t$ non-empty and $\va_t$ the chosen arm).
It also serves as a proxy representation for multi-armed bandits by letting the components of $\vy_t$ encode per-arm signals or auxiliary outcomes.

We write
$\data_t = (\vx_t, \vy_t)$ for a datapoint and $\data_{1:t} = \{\data_1, \ldots, \data_t\}$ for the dataset.
Here, we assume that $p_{\rm env}$ is unknown,
but we approximate it through the observation model
\begin{equation}
    \vy_t = f(\vtheta_t, \vx_t) + \ve_t,
\end{equation}
with $\ve_t$ a zero-mean random vector with covariance matrix $\vR_t \in \matPos{\dimobs}$,
$f$ a neural network,
and $\vtheta_t \in \reals^\dimall$ unknown time-varying model parameters.

The agent maintains a belief state
$\vb_t$ that summarizes its knowledge of the environment after observing $\data_{1:t}$.
This belief defines a parametric approximation to the one-step-ahead posterior predictive
\begin{equation}\label{eq:param-posterior-predictive}
    p_{\vb_t}(\vy_{t+1} \cond \vx_{t+1}) \approx p(\vy_{t+1} \cond \vx_{t+1}, \data_{1:t}).
\end{equation}
Our objective is to maintain $\vb_t$ via efficient, single-pass frequentist updates
and to use the predictive
$p_{\vb_t}(\vy_{t+1}\mid \vx_{t+1})$ for probabilistic, sample-based sequential decision making.
In this sense, we combine frequentist recursive parameter estimation with Bayesian decision making via the posterior predictive.

\section{Background}

\subsection{Tools for sequential decision making}
\label{sec:TS}

A central challenge in sequential decision making is the exploration-exploitation tradeoff,
where an agent balances whether to use its current knowledge of the environment (exploitation)
or to acquire new information to improve future  decisions (exploration) \cite{berger2014exploration}.

For example, when solving Bayesian neural contextual bandit problems \cite{riquelme2018banditbayes}
with the  classical Thompson sampling (TS) algorithm \cite{russo2020tutorialthompsonsampling},
an agent samples a parameter vector from the posterior distribution (or an approximation \cite{Phan2019}),
evaluates the reward function under this sampled parameter for each action,
and selects the action with the highest sampled reward.

Here, we propose a novel approach that avoids sampling from (high-dimensional) parameter posteriors
often found in Bayesian neural networks.
Instead, we work directly with the posterior predictive distribution defined by the current belief state
$\vb_t$.
For each candidate action, we sample a possible outcome from \eqref{eq:param-posterior-predictive}
and then choose the action with the highest sampled reward.
The belief $\vb_t$ is subsequently updated via frequentist recursions (Section \ref{sec:EKF-summary}).


Table~\ref{tab:TS-vs-predictive} highlights the key differences between our predictive sampling approach and classical TS.
The essential distinction is whether uncertainty is represented in parameter space (TS) or directly in outcome space
(our approach).
\begin{table}[htb]
    \centering
    \scriptsize
    \begin{tabular}{p{0.10\linewidth} p{0.40\linewidth} p{0.40\linewidth}}
        \toprule
        {Step} & {Predictive sampling (Our approach)} & {Classical TS} \\
        \midrule
        Sample & \textbf{Posterior predictive at each action} & \textbf{Posterior over model parameters} \\
        Evaluate & \textbf{N/A} & \textbf{Sampled parameters on reward function} \\
        Select action & Argmax over sampled rewards & Argmax over sampled rewards \\
        Get reward & $y_{t+1} \sim p_{\rm env}(\cdot \mid \vx_{t+1})$ & $y_{t+1} \sim p_{\rm env}(\cdot \mid \vx_{t+1})$ \\
        Update belief & \textbf{Frequentist update} (e.g., Algorithm \ref{algo:low-rank-full-rank-update}) & \textbf{Bayesian posterior update over parameters} \\
        \bottomrule
    \end{tabular}
    \caption{
    Comparison between Bayesian predictive sampling (our approach) and classical Thompson sampling.
     Differences highlighted in bold.
    }
    \label{tab:TS-vs-predictive}
\end{table}
Algorithm~\ref{algo:predictive-sequential-decision-making} shows our predictive sampling procedure
for contextual bandits.
At each decision-making step, the agent samples rewards from the predictive distribution for each action,
selects the action with the highest sampled reward, observes the reward from the environment, and updates its belief state.
\begin{algorithm}[htb]
\small
\begin{algorithmic}[1]
    \REQUIRE $\vb_t$ \texttt{// belief state obtained at time $t$}
    \REQUIRE $\vc_{t+1}$ \texttt{// context}
    \REQUIRE $p_\vb(\cdot \cond \vx)$ \texttt{// posterior predictive density parameterized by $\vb$}
    \FORALL{$\va \in {\cal A}$}
        \STATE $\hat{\vx} \gets (\vc_{t+1},\,\va)$
        \STATE $\hat{\vy}_{t,\va} \sim p_{\vb_t}(\cdot \mid \hat{\vx})$
    \ENDFOR
    \STATE $\va_{t+1} \gets \argmax_{a \in {\cal A}} \hat{\vy}_{t+1,\va}$
    \STATE $\vx_{t+1} \gets (\vc_{t+1},\,\va_{t+1})$
    \STATE $y_{t+1} \sim p_{\mathrm{env}}(\cdot \mid \vx_{t+1})$ \texttt{// observe outcome}
    \STATE $\vb_{t+1} \gets \texttt{Update}(\vb_t,\,y_{t+1},\,\vx_{t+1})$ \texttt{// e.g, Algorithm  \ref{algo:low-rank-full-rank-update}}
    \RETURN $(\vb_{t+1}, y_{t+1})$ \texttt{// belief and reward}
\end{algorithmic}
\caption{
Predictive sampling for sequential decision making in contextual bandits.
\footnotesize
}
\label{algo:predictive-sequential-decision-making}
\end{algorithm}

\subsection{Extended Kalman filtering for online learning and sequential decision making}
\label{sec:EKF-summary}

To maintain and update the belief state $\vb_t$, we adapt ideas from the frequentist perspective of the extended Kalman filter (EKF).
Here,
$\vb_t = (\vtheta_{t|t},\,\vSigma_t)$,
$\vtheta_{t|t} = \vA_t\,\vy_{1:t}$,
$\vA_t = \argmin_{\vA}\E[\|\vtheta_t - \vA\,\vy_{1:t}\|_2^2]$ is
the best linear unbiased predictor (BLUP) of $\vtheta_t$,
and
$\vSigma_t = \var(\vtheta_t - \vtheta_{t|t})$ is
the error-variance-covariance (EVC) matrix of the estimate.
The initial belief state $\vb_0 = (\vtheta_{0|0},\,\vSigma_{0})$ is given.

Next, we partition the neural network parameters into last-layer weights and hidden-layer weights: $\vtheta_t = (\plast_t, \phidden_t)$,
with $\plast_t \in \reals^\dimlast$ (last layer)
and
$\phidden_t \in \reals^\dimhidden$ (hidden layers)
so that
$\vtheta_{t|t} = (\plast_{t|t}, \phidden_{t|t})$, and
$\nn(\plast_t, \phidden_t, \vx_t) = \plast_t^\intercal\,\phi(\vx_t, \phidden_t) \in \reals$,
where $\phi(\vx_t, \phidden_t)$ is the hidden-layer feature map of the neural network.

To obtain closed-form updates,
we linearize the neural network around the previous parameter estimate
$\vtheta_{t-1|t-1} = (\plast_{t-1|t-1}, \phidden_{t-1|t-1})$,
and make a random-walk assumption for the model parameters
$\vtheta_t = \vtheta_{t-1} + \vu_t$,
with $\var(\vu_t) = \vQ_t\in\matPos{\dimall}$.
This assumption prevents uncertainty collapse, adaptation in non-stationary environments, and numerical stability
(see e.g., \cite{mehra2003approaches} or Section 2.6.4 in \cite{duranmartin2025thesis}).

With the above assumptions, we obtain the linearized state-space model
\begin{align}
    \vtheta_t &= \vtheta_{t-1} + \vu_t, \label{eq:ekf-dynamics}\\
    \vy_t &=
    \nn(\plast_{t-1|t-1},\,\phidden_{t-1|t-1},\,\vx_t)
    + \gradlast_t(\plast_t - \plast_{t-1|t-1})
    + \gradhidden_t(\phidden_t - \phidden_{t-1|t-1})
    + \ve_t, \label{eq:ekf-measurement-model}
\end{align}
where
$\ve_t$ is a zero-mean random variable with $\var[\ve_t] = \vR_t \in \matPos{\dimobs}$,
$\gradlast_t = \nabla_{\plast}\nn(\plast_{t-1|t-1},\phidden_{t-1|t-1},\vx_t) \in \mat{1\times2}$,
and
$\gradhidden_t = \nabla_{\phidden}\nn(\plast_{t-1|t-1},\phidden_{t-1|t-1},\vx_t)$
are the Jacobians of the neural network w.r.t. the last and hidden layers respectively.

\paragraph{Updates.}
The state-space model defined by \eqref{eq:ekf-dynamics}-\eqref{eq:ekf-measurement-model}
allows for recursive estimation of $\vb_t$ via Kalman-like updates.
See Appendix \ref{sec:EKF} for details.



\paragraph{Posterior predictive model.}
At each timestep the one-step-ahead posterior predictive induced by the state-space model
\eqref{eq:ekf-dynamics}-\eqref{eq:ekf-measurement-model},
given the belief state $\vb_t$ and under the assumption $p(\ve_t) = {\cal N}(\ve_t \mid \vzero,\,\vR_t)$ is
\begin{equation}\label{eq:pp-ekf}
    p_{\vb_t}(\vy_{t+1} \mid \vx_{t+1})
    = \mathcal{N}\,\left(
        \vy_{t+1} \,\middle|\,
        \nn(\vtheta_{t|t}, \vx_{t+1}),\;
        \underbrace{\gradall_{t+1}\,\vSigma_{t}\,\gradall_{t+1}^\intercal}_{\text{epistemic}}
        + \underbrace{\vR_t}_{\text{aleatoric}}
    \right),
\end{equation}
where
$\gradall_t = \begin{bmatrix} \gradlast_t & \gradhidden_t \end{bmatrix}$.
Here, the covariance of the posterior predictive decomposes into epistemic uncertainty
(from parameter uncertainty under the linearization) and aleatoric uncertainty (from observation noise $\vR_t$).
The condition $\vR_t \in \matPos{\dimobs}$ ensures that~\eqref{eq:pp-ekf} defines a proper Gaussian density.

\subsection{Martingale posteriors}
Our notion of Bayesian uncertainty is inspired by the martingale posterior framework \cite{fong2023martingaleposterior}, 
which interprets Bayesian uncertainty through missing data \cite{holmes2023statistical}.
In the original setting, one imagines imputing an infinite sequence of future observations; 
statistics computed on these extended datasets form a martingale, 
and under mild conditions they converge.

However, in sequential decision making, however,
given the context-action pair $\vx_{t+1}$,
the only missing data,
is the next observation $\vy_{t+1}$ from the environment.
In our case, uncertainty is quantified to guide decisions:
$\vy_{t+1}$ (or a partial observation of it, in the bandit setting) is always revealed before the next step.
Thus, rather than simulating an infinite sequence of future steps, 
we focus entirely on the one-step-ahead posterior predictive $p_{\vb_t}(\vy_{t+1}\mid \vx_{t+1})$.

\section{Method}
\label{sec:our-method}
\label{sec:method}

The standard EKF strategy for online learning presented in Section \ref{sec:EKF-summary} does not scale to large neural networks
because the memory cost is $O(\dimall^2)$ and the compute cost is $O(\dimall^3)$.
A number of approaches have been proposed to tackle this issue
(see Section \ref{sec:related} for details).
Our method leverages the frequentist perspective of filtering to derive low-rank updates
and the Bayesian interpretation of filtering to make sequential decisions through the posterior predictive.

\subsection{Algorithm}
\label{sec:algorithm}

Consider the linearized model \eqref{eq:ekf-measurement-model},
with initial conditions
\begin{equation}
\begin{aligned}
    \mathbb{E}[\plast_0] &= \meanlast{0}, & \mathbb{E}[\phidden_0] &= \meanhidden{0}, &
    \var(\plast_0) &= \hat{\vSigma}_{\plast, 0}, & \var(\phidden_0) &=\lrhidden_{0}^\intercal\,\lrhidden_{0},
\end{aligned}
\end{equation}
for known
$\lrhidden_{0} \in \mat{\dimhiddensub\times\dimhidden}$,
$\meanhidden{0} \in \reals^\dimhidden$,
$\hat{\vSigma}_{\plast, 0} \in \matPos{\dimlast}$, and
$\meanlast{0} \in \reals^\dimlast$.
Next, assume that for $t\geq 1$, the last layer parameters $\plast_{1:t}$ and the hidden layer parameters $\phidden_{1:t}$
follow the dynamics 
\begin{equation}\label{eq:main-ssm}
    \plast_t = \plast_{t-1} + \vu_{\plast,t}, \qquad \phidden_t = \phidden_{t-1} + \vu_{\phidden,t},\\
\end{equation}
where $\vu_{\plast,1:t}$ and $\vu_{\phidden,1:t}$ are zero-mean independent noise variables
with covariance matrices
$\vQ_{\plast,t} = q_{\plast,t} \vI_\dimlast$ and
$\vQ_{\phidden,t} = q_{\phidden,t} \vI_\dimhidden$,
with $q_{\plast,t} \geq 0$
and
$q_{\phidden,t} \geq 0$.
Our  method
maintains a belief state for the state-space model
represented by $\vb_t = (\meanlast{t}, \meanhidden{t}, \hat{\vSigma}_{\plast, t}^{1/2}, \lrhidden_{t})$,
where
$\meanlast{t} \in \reals^\dimlast$ is the estimate for model parameters in the last layer,
$\meanhidden{t} \in \reals^\dimhidden$ is the estimate  for model parameters in the hidden layers,
$\hat{\vSigma}_{t-1}^{1/2} \in \matPos{\dimlast}$ is the Cholesky EVC factor for the covariance of the last layer,
and
$\lrhidden_{t} \in \mat{\dimhiddensub\times\dimhidden}$ is the low-rank EVC factor for the hidden layers.

\paragraph{Updates.}
Given $\vb_{t-1}$,
the updated $\vtheta_{t|t} = (\plast_{t|t},\,\phidden_{t|t})$
follows directly from the EKF equations
which takes the form
$\vtheta_{t|t} = \vtheta_{t-1|t-1} + \vK_t \vepsilon_t$,
where $\vepsilon_t = \vy_t - \hat{\vy}_t$
is the error or innovation term,
and $\vK_t$ is the gain matrix.
Because of the block-diagonal assumption, it can be shown that
$\vK_t$ is decoupled into last-layer-only and a hidden-layers-only submatrices.

Next, applying the EKF EVC update to $\vb_{t-1}$
yields the dense matrix $\vSigma_t \in \matSemiPos{\dimall}$
which couples the effects from the last and hidden layers.
To obtain a block-diagonal matrix with components $(\hat{\vSigma}_{\plast, t}^{1/2}, \lrhidden_{t})$,
we consider a two-stage approximation.
The first stage replaces $\vSigma_t$ with a surrogate covariance matrix $\tilde{\vSigma}_t$
that avoids the interaction between the hidden layer parameters and the synthetic dynamics $q_{\phidden,t}$;
the second stage approximates the surrogate matrix with the best block-diagonal approximation
whose first block (for last-layer parameters)
are of rank $\dimlast$
and
the second block (for hidden-layer parameters)
are rank-$\dimhiddensub$.
Finally,
to maintain numerical stability and low-memory updates,
we track the Cholesky factor for the last layer and a low-rank factor for the hidden layers, i.e.,
$\hat{\vSigma}_t^{1/2} = {\rm diag}(\lrhidden_{t},\,\hat{\vSigma}_{\plast, t}^{1/2})$.
In this sense, estimation of the factors for the last-layer and the hidden layer follows the sequence
\[
\vSigma_t \to \tilde{\vSigma}_t \to \hat{\vSigma}_t \to \hat{\vSigma}_t^{1/2} = {\rm diag}(\lrhidden_{t},\,\hat{\vSigma}_{\plast, t}^{1/2}).
\]


Algorithm \ref{algo:low-rank-full-rank-update} shows a single step of \newmethodlin.
The derivation is in Appendix \ref{sec:derivation-hilofi}.
\begin{algorithm}[htb]
\small
\begin{algorithmic}[1]
    \REQUIRE $\vR_t$ \texttt{// measurement variance}
    \REQUIRE $(q_{\plast,t},\,q_{\phidden, t})$  \texttt{// dynamics covariance for last layer and hidden layers}
    \REQUIRE $(\vy_t,\,\vx_t)$ \texttt{// observation and context-action pair}
    \REQUIRE $\vb_{t-1} = \left(\meanlast{t-1}, \meanhidden{t-1}, \hat{\vSigma}_{\plast, t-1}^{1/2} ,\lrhidden_{t-1}\right)$ \texttt{// previous belief}
    \Statex \texttt{// predict step}
    \STATE $\hat{\vy}_t \gets \nn(\plast_{t-1|t-1}, \phidden_{t-1|t-1}, \vx_t)$
    \STATE $\gradlast_t = \nabla_{\plast} \nn(\plast_{t-1|t-1},\,\phidden_{t-1|t-1},\,\vx_{t})$
    \STATE $\gradhidden_t = \nabla_{\phidden} \nn(\plast_{t-1|t-1},\,\phidden_{t-1|t-1},\,\vx_{t})$
    \Statex \texttt{// innovation (one-step-ahead error) and Cholesky innovation variance}
    \STATE $\vepsilon_t \gets \vy_t - \hat{\vy}_t$
    \STATE $
        \vS_t^{1/2}  = {\cal Q}_R\left(
        \hat{\vSigma}_{\plast, t-1}^{1/2}\,\gradlast_t^\intercal,\,
        \sqrt{q_{\plast,t}}\,\gradlast_t^\intercal,\,
        \lrhidden_{t-1}\,\gradhidden_t^\intercal,\,
        \sqrt{q_{\phidden, t}}\,\gradhidden_t^\intercal,\,
        \vR_t^{1/2}
    \right),
    $
    \Statex \texttt{// gain for hidden layers}
    \STATE $\vV_{\phidden,t} \gets \varinnov_t^{-1/2}\,\varinnov_{t}^{-\intercal/2}\gradhidden_t$
    \STATE $\vK_{\phidden, t}^\intercal \gets \vV_{\phidden,t}\,\lrhidden_{t-1}\,\lrhidden_{t-1}^\intercal + q_{\phidden,t}\,\vV_{\phidden, t}$
    \Statex \texttt{// gain for last layer}
    \STATE $\vV_{\plast,t} \gets \varinnov_t^{-1/2}\,\varinnov_{t}^{-\intercal/2}\gradlast_t$   
    \STATE $\vK_{\plast, t}^\intercal \gets \vV_{\plast, t}\,\covlast{t|t-1} + q_{\plast,t}\,\vV_{\plast, t}$
    \Statex \texttt{// mean update step}
    \STATE $\phidden_{t|t} \gets \phidden_{t-1|t-1} + \vK_{\phidden, t}\vepsilon_t$
    \STATE $\plast_{t|t} \gets \plast_{t-1|t-1} + \vK_{\plast, t}\vepsilon_t $
    \Statex \texttt{// EVC low-rank and Cholesky update step}
    \STATE $\lrhidden_{t} \gets {\cal P}_{\dimhiddensub, +q_{\phidden,t}}\left( \lrhidden_{t-1}\,\left(\vI - \vK_{\phidden, t}\,\gradhidden_t\right)^\intercal,\, \vR_t^{1/2}\vK_{\phidden, t}^\intercal\right)$
    \STATE $ \hat{\vSigma}_{\plast, t}^{1/2} \gets {\cal Q}_R\left( \hat{\vSigma}_{\plast, t-1}^{1/2},\left(\vI - \vK_{\plast, t}\,\gradlast_t\right)^\intercal,\, \vR_t^{1/2}\vK_{\plast, t}^\intercal\right)$
    \RETURN $\vb_{t} = \left(\meanlast{t}, \meanhidden{t}, \hat{\vSigma}_{\plast, t}^{1/2}, \lrhidden_{t}\right)$ \texttt{// updated belief}
\end{algorithmic}
\caption{
Single update step of \newmethodlin.
\footnotesize
The notation ${\cal Q}_R(\vB_1, \ldots, \vB_k)$
denotes the Cholesky factorization
of the matrix
$\sum_{k=1}^K \vB_k^\intercal\,\vB_k$,
where $\vB_k$ is an upper-triangular Cholesky factor;
see Appendix \ref{sec:sum-cholesky} for details.
The function ${\cal P}_d(\vA_1, \ldots, \vA_K)$
returns a $d\times D$ matrix which is the best rank-$d$ approximation 
of the matrix $\sum_{k=1}^K \vA_k^\intercal\,\vA_k$;
see Appendix \ref{sec:sum-SVD} for details.
}
\label{algo:low-rank-full-rank-update}
\end{algorithm}

The following proposition shows the upper bound of the the per-step error incurred by \newmethodlin.
\begin{proposition}[Covariance approximation error]\label{prop:hilofi-cov-per-step-error}
At time $t$, the per-step approximation error of \newmethodlin
under the linearized SSM \eqref{eq:ekf-measurement-model}
is bounded by
\begin{equation}
\|\vSigma_{t|t} - \hat{\vSigma}_{t|t}\|_{\rm F}
\leq q_{\phidden,t}\,\vE_{\phidden, {\rm surr}}
+ q_{\plast,t}\,\vE_{\plast, {\rm surr}}
+ \sqrt{\sum_{k=\dimhiddensub+1}^{\dimhidden}\lambda_k^2}
+ 2\|\vK_{\plast,t}\vR_t\vK_{\phidden,t}^\intercal\|_{\rm F}^2,
\end{equation}
where
$\vE_{\phidden, {\rm surr}} = (2\|\vK_{\phidden,t}\gradhidden_t\|_{\rm F} + \|\vK_{\phidden,t}\gradhidden_t\|_{\rm F}^2)$,
$
\vE_{\plast, {\rm surr}} =
\|(\vI - \vK_{\plast, t}\,\gradlast_t)\,(\vI - \vK_{\plast, t}\,\gradlast_t)^\intercal\|_{\rm F}
$, and
$\{\lambda_k\}_{k=\dimhiddensub+1}^\dimhidden$ are smallest $(\dimhidden - \dimhiddensub)$ eigenvalues of $\tilde{\vSigma}_t$.
\end{proposition}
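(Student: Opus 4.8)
The plan is to decompose the total error $\|\vSigma_{t|t} - \hat{\vSigma}_{t|t}\|_{\rm F}$ into contributions coming from each approximation made in Algorithm~\ref{algo:low-rank-full-rank-update}, and bound each piece separately using the triangle inequality. Recall that the exact EKF covariance update under the linearized model \eqref{eq:ekf-measurement-model} has a block structure; since \newmethodlin forces the off-diagonal (last/hidden) blocks to zero and approximates each diagonal block, I would write
\[
\vSigma_{t|t} - \hat{\vSigma}_{t|t}
= \underbrace{(\vSigma_{\plast,t|t} - \hat{\vSigma}_{\plast,t|t})}_{\text{last-layer block}}
+ \underbrace{(\vSigma_{\phidden,t|t} - \lrhidden_t^\intercal\lrhidden_t)}_{\text{hidden-layer block}}
+ \underbrace{2\,\vSigma_{\plast\phidden,t|t}}_{\text{neglected cross-block}},
\]
interpreting each term as embedded into the full parameter space, so that Frobenius norms add by the triangle inequality.

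Next I would bound each term. For the neglected cross-block: the exact cross-covariance after a joint Kalman update, starting from a block-diagonal prior, is generated purely by the shared innovation, and one can show it equals (up to the linearization) $-\vK_{\plast,t}\vR_t\vK_{\phidden,t}^\intercal$ plus symmetric terms, giving the $2\|\vK_{\plast,t}\vR_t\vK_{\phidden,t}^\intercal\|_{\rm F}^2$ contribution — here I would need to track carefully whether the exponent $2$ and the placement of $\vR_t$ come from writing the Joseph-form update and isolating the cross term. For the last-layer block: the only approximation is dropping the dynamics noise $q_{\plast,t}\vI$ from the predicted covariance before forming the Cholesky surrogate; propagating this through the Joseph-form update $(\vI-\vK_{\plast,t}\gradlast_t)(\cdot)(\vI-\vK_{\plast,t}\gradlast_t)^\intercal$ and bounding the induced perturbation by $q_{\plast,t}\|(\vI-\vK_{\plast,t}\gradlast_t)(\vI-\vK_{\plast,t}\gradlast_t)^\intercal\|_{\rm F} = q_{\plast,t}\vE_{\plast,{\rm surr}}$ yields that term. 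For the hidden-layer block I would split further into (a) the surrogate-covariance error from omitting $q_{\phidden,t}\vI$ in the predicted step, which after the map $(\vI - \vK_{\phidden,t}\gradhidden_t)$ and its square produces the factor $\vE_{\phidden,{\rm surr}} = 2\|\vK_{\phidden,t}\gradhidden_t\|_{\rm F} + \|\vK_{\phidden,t}\gradhidden_t\|_{\rm F}^2$, and (b) the rank-$\dimhiddensub$ truncation error, which by the Eckart–Young theorem (the ${\cal P}_{\dimhiddensub}$ operator returns the best rank-$\dimhiddensub$ approximation) equals the Frobenius norm of the discarded tail, i.e.\ $\sqrt{\sum_{k=\dimhiddensub+1}^{\dimhidden}\lambda_k^2}$ where $\lambda_k$ are the trailing eigenvalues of the surrogate predicted covariance $\tilde{\vSigma}_t$.

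Assembling the four bounds via the triangle inequality gives exactly the claimed inequality. The main obstacle I anticipate is bookkeeping in the hidden-layer block: one must be precise about \emph{which} matrix $\tilde{\vSigma}_t$ the eigenvalues refer to (the surrogate predicted covariance, before vs.\ after the update map), and about the order in which the surrogate-approximation and the rank truncation are applied, since swapping them changes whether the factor $\vE_{\phidden,{\rm surr}}$ multiplies $q_{\phidden,t}$ or interacts with the truncation tail. A secondary subtlety is justifying that the linearization in \eqref{eq:ekf-measurement-model} lets us treat ``exact'' and ``approximate'' updates on the same footing — i.e.\ that $\vSigma_{t|t}$ here means the covariance of the linearized model, not the true nonlinear posterior — so that all discrepancies are genuinely algorithmic rather than modeling error. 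Everything else (Joseph-form expansions, submultiplicativity and triangle inequality for $\|\cdot\|_{\rm F}$, Eckart–Young) is routine.
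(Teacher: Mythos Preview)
Your proposal is correct and takes essentially the same approach as the paper. The only organizational difference is that the paper routes the triangle inequality through the single intermediate surrogate matrix $\tilde{\vSigma}_{t|t}$, splitting into a surrogate error $\|\vSigma_{t|t}-\tilde{\vSigma}_{t|t}\|_{\rm F}$ (which produces the two $q$-weighted terms) and a projection error $\|\tilde{\vSigma}_{t|t}-\hat{\vSigma}_{t|t}\|_{\rm F}$ (which produces the Eckart--Young tail and the cross-block term), whereas you group block-by-block; the underlying computations and the four resulting pieces are identical.
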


\begin{remark}
    Proposition \ref{prop:hilofi-cov-per-step-error}
    shows that the per-step error in the \newmethodlin approximation
    is bounded by:
    (i) the neglected terms by each surrogate matrix,
    (ii) the low-rank approximation for the covariance of the hidden layers,
    and
    (iii) the cross-terms from the blocks in the off-diagonal
    Note, however, that the bound does not account for error due to linearization of the neural network.
    One could minimize the above lower bound by setting $q_{\phidden,t} = q_{\plast,t}=0$.
    However, this may result in lower per-step performance.
    See Appendix~\ref{sec:error-analysis-lrkf-mnist} for an example of the main sources of error.
    See Appendix \ref{proof:hilofi-cov-per-step-error} for a proof.
\end{remark}

\paragraph{Posterior predictive model.}
After every update, the posterior predictive model used for sequential decision making is
\begin{equation}\label{eq:pp-hilofi}
    p_{\vb_t}(\vy_{t+1} \cond \vx_{t+1})
    = {\cal N}\left(
    \vy_{t} \mid
    \nn(\plast_{t|t}, \phidden_{t|t}, \vx_{t+1}),\;\;
    \gradlast_{t+1}\,\covlast{t}\gradlast_{t+1}^\intercal + \gradhidden_{t+1}\,\covhidden{t}\gradhidden_{t+1}^\intercal
    + \vR_{t+1}
    \right).
\end{equation}


The computational complexity of \newmethodlin is
$O(\dimlast\,(\dimlast + \dimobs)^2 + \dimhidden\,(\dimhiddensub + \dimobs)^2)$.
A full breakdown of the computational complexity and its relationship to other methods is in Appendix \ref{sec:computational-complexity}.

\paragraph{Variants.}
Whenever the output layer
or the input to the last layer is high dimensional,
we perform a low-rank approximation to the last-layer to reduce computational
costs.
We call this variant \newmethodlinlow.
Its computational complexity is $O(\dimlast^3 + \dimhidden \dimhiddensub^2)$.
See Appendix \ref{sec:LoLoFi} for details.

A purely low-rank version of our method,
which we call \methodlrkf,
is discussed in Appendix \ref{sec:LRKF}.
This method uses a single low rank covariance matrix approximation for all parameters in the neural network.
An error analysis of \methodlrkf in the linear setting is presented in Appendix \ref{sec:error-analysis-lrkf}.
In the experiments,
we find that this approach performs worse than that of \newmethodlin, which is not surprising.
It also has lower performance than
 \newmethodlinlow
(at least on  the contextual bandit problem in \cref{experiment:mnist-bandits}),
presumably because modeling the last layer separately allows us to use a different subspace for the low rank approximation of the output layer weights.



\section{Experiments}
\label{sec:experiments}

In this section, we evaluate our method.
First, we consider a one-dimensional example 
``in-between'' uncertainty, to
illustrate the behavior of the method.
We then compare its performance to that of other methods 
across the following sequential decision-making tasks: 

\begin{enumerate}[label=(\Roman*)]
    
    \item \textbf{MNIST for classification as a bandit problem}:
    This is an online high-dimensional classification problem, which we study as a contextual bandit problem. \label{enum:mnist}

     \item \textbf{Recommender systems}: 
    This is a challenging real-world problem
    with  non-stationary data.
    Here, tackling exploration-exploitation tradeoff is key.  \label{enum:Bandits}
    
    \item \textbf{Bayesian optimization}: 
    The goal of this task is to find the maximum of an unknown blackbox function 
     $g: [0,1]^\dimin \to \mathbb{R}$.
     This is a static inference problem
    where sample efficiency 
    and uncertainty quantification are crucial.
    \label{enum:BO}

\end{enumerate}

For each of the above tasks, we assess the various methods on 
predictive performance and wall-clock time.
All experiments were run on a TPU v4-8.
An additional experiment demonstrating the scalability of \methodlrkf
to multi-million-parameter neural networks for online classification
is provided in Appendix~\ref{sec:cifar-10-vgg}.

We consider the following variants of our framework.
    \textbf{High-rank low-rank filter}
    (\newmethodlin):
    Our main method
    that uses full rank for last layer
    and low rank for hidden layers.
    \textbf{Low-rank low-rank filter}
    (\newmethodlinlow):
    A version of our method that uses low-rank matrices for last and hidden layers.    
    We only consider \newmethodlinlow in task \ref{enum:Bandits}, due to the relatively large dimension of the last layer.
    \textbf{Low-rank square-root Kalman Filter}
    (\methodlrkf): 
    This is a special case of our method, described in Appendix \ref{sec:LRKF},
    that uses a single low-rank covariance
    for all the parameters.
    This illustrates the gains from modeling the final layer separately.

As existing baselines, we consider the following.
  \textbf{Diagonal plus low-rank precision} (\methodlofi) \cite{chang23lofi}: 
    a fully online method that models the precision matrix using a low-rank plus diagonal matrix, and uses the same linearization scheme as us.
    \textbf{Gaussian processes} (\methodgp) \cite{xu2024standard}: 
    a standard approach to modeling predictive uncertainty.
    We only include GPs in the low-dimensional tasks in \ref{enum:BO} because of scalability limitations.
    \textbf{Variational Bayesian last layer} (\methodvbll) \cite{brunzema2024bayesian}: 
    a partially Bayesian method that requires access to the full dataset (or a replay buffer), 
    and performs multiple optimization steps per update.
    See Appendix \ref{sec: new method vb} for details.
    \textbf{Online variational Bayesian last-layer} (\newmethodvb): 
    a straightforward modification of the  {\methodvbll} method with a first-in-first-out (FIFO) buffer and no regularization. 
    %
   %
    %
    \textbf{Last-layer Laplace approximation with FIFO buffer} (\methodlaplace) \cite{daxberger2021laplace}: 
    a method that uses a Laplace approximation on the last layer, 
    with a  FIFO replay buffer and multiple inner updates per time step.
    Unless otherwise stated, we train the parameters of the neural network with the AdamW
    optimization algorithm \cite{loshchilov2019adamw}.

\subsection{In-between uncertainty}
\label{sec:in-between}

\paragraph{Problem description.}
In this experiment, we test the ability of \newmethodlin to capture the \textit{in-between uncertainty} \cite{foong2019between}
after a single pass of the data.
This concept refers to how well a model captures uncertainty in regions of input space that lie between,
or away from, observed data. Intuitively, we expect uncertainty to be higher in such unexplored areas.
This behavior is important for Bayesian decision-making problems for effictive balancing of exploration and exploitation.
We consider the one-dimensional dataset introduced in \cite{van2021feature}.

\paragraph{Model.}
We use an MLP with 4 hidden layers
and 128 units per layer, and we employ an ELU activation function.

\paragraph{Results.} 
Figure \ref{fig:in-between-uncertainty-by-timestep-main}
shows the evolution of the posterior predictive mean and the posterior predictive variance
as a function of the number of seen observations.
We see that \newmethodlin behaves in an intuitively sensible way, showing more uncertainty away from the data, just like a Gaussian process.
See Appendix \ref{experiment:in-between-uncertainty} for the performance of other methods
(specifically \methodlrkf  and \methodvbll)
 on this problem, as well an ablation study that illustrates the effect of changing rank.

\begin{figure}[htb]
    \centering
    \includegraphics[width=0.7\linewidth]{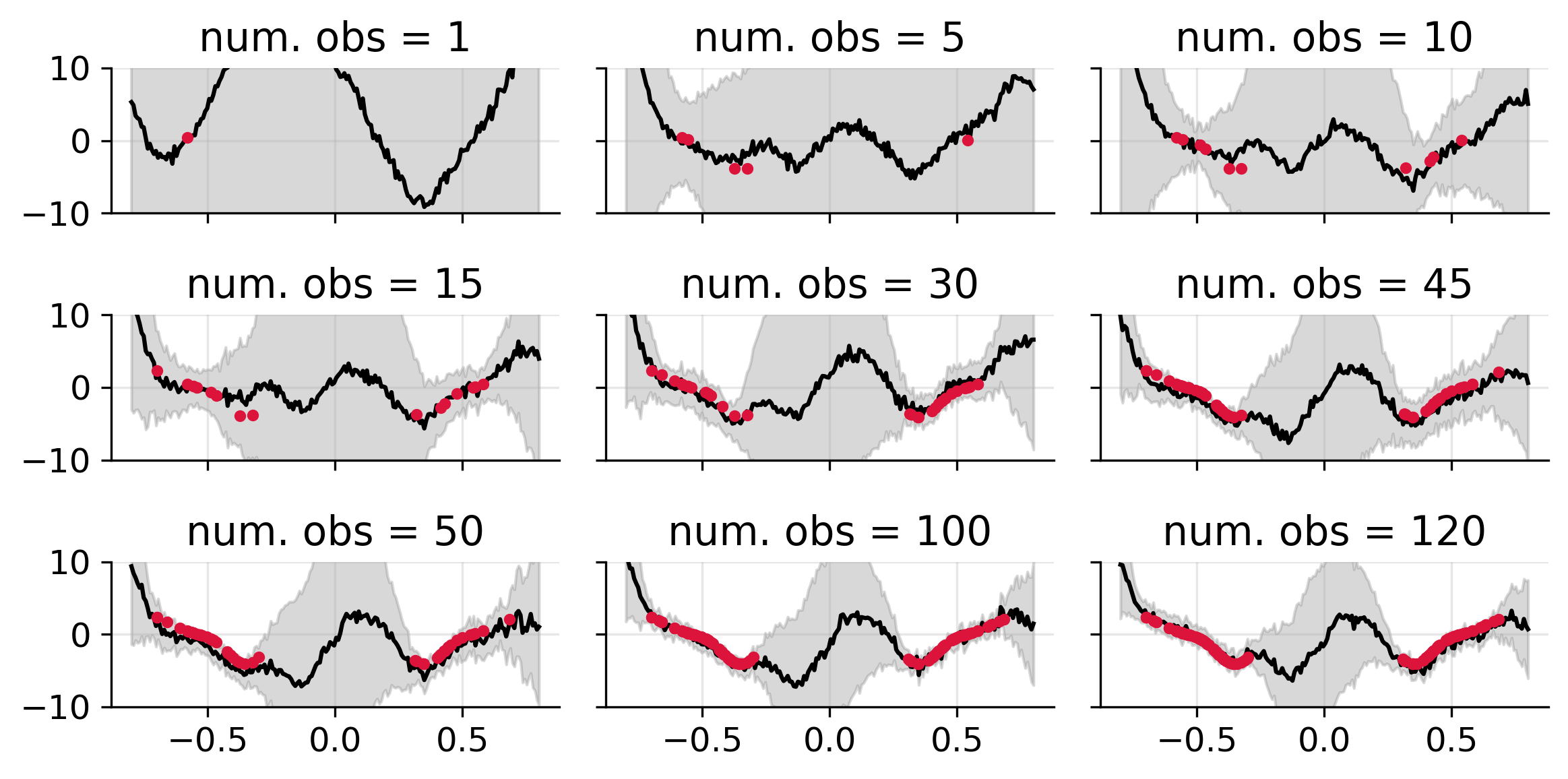}
    \vspace{-1em}
    \caption{
        In-between uncertainty  induced by \newmethodlin as a function of the processed observations.
    }
    \label{fig:in-between-uncertainty-by-timestep-main}
\end{figure}

\subsection{MNIST}
\label{experiment:mnist-bandits}
\paragraph{Problem description.}
We consider the MNIST
contextual bandit  task introduced by \cite{osband2019behaviour},
where the agent is presented with an image and must choose one of ten possible classes as an action,
and then gets a binary feedback, based on 
whether its prediction was correct or not; we refer to this as the incomplete information case.
An additional experiment on online classification is in Appendix \ref{sec:classification-as-regression}.

\paragraph{Model.} The predictive model (for the 10 label logits or the per-action rewards)
is represented using a modified LeNet5 CNN architecture \cite{lecun1998gradient} with ELU activation function.

\paragraph{Exploration Methods.}
We consider two exploration strategies:
Classical TS and our proposed predictive-sampling (PS)
as outlined in Table \ref{tab:TS-vs-predictive}.
Additional results comparison TS, PS, and $\epsilon$-greedy are in Appendix \ref{sec:further-results-mnist-bandit}.

\paragraph{Inference methods.}
We compare \methodlaplace, \methodlrkf, \methodlofi, and \newmethodlin.
For \methodlaplace, we use the online last-layer variant from \cite{daxberger2021laplace},
without hyperparameter re-optimization at each timestep.
Choice for hyperparameters are detailed in Appendix \ref{sec:further-results-mnist-bandit}.

\paragraph{Bandit results.}
Figure \ref{fig:mnist-reward-cumulative} shows the average cumulative reward over 10 runs of the contextual bandit version of MNIST.
We observe that the top three performing methods are \newmethodlin, \newmethodlinlow, and \methodlrkf.
This outcome is expected because
\newmethodlin employs a full-rank covariance matrix in the last layer, which is better than 
  a low-rank approximation of the covariance matrix associated with the last layer (\newmethodlinlow), or a low-rank 
 covariance to model all dependencies across layers
(\methodlrkf). 
For further results showing regret and comparison using $\varepsilon$-greedy, see Appendix \ref{sec:further-results-mnist-bandit}.

\begin{figure}[htb]
    \centering
    \includegraphics[width=0.8\linewidth]{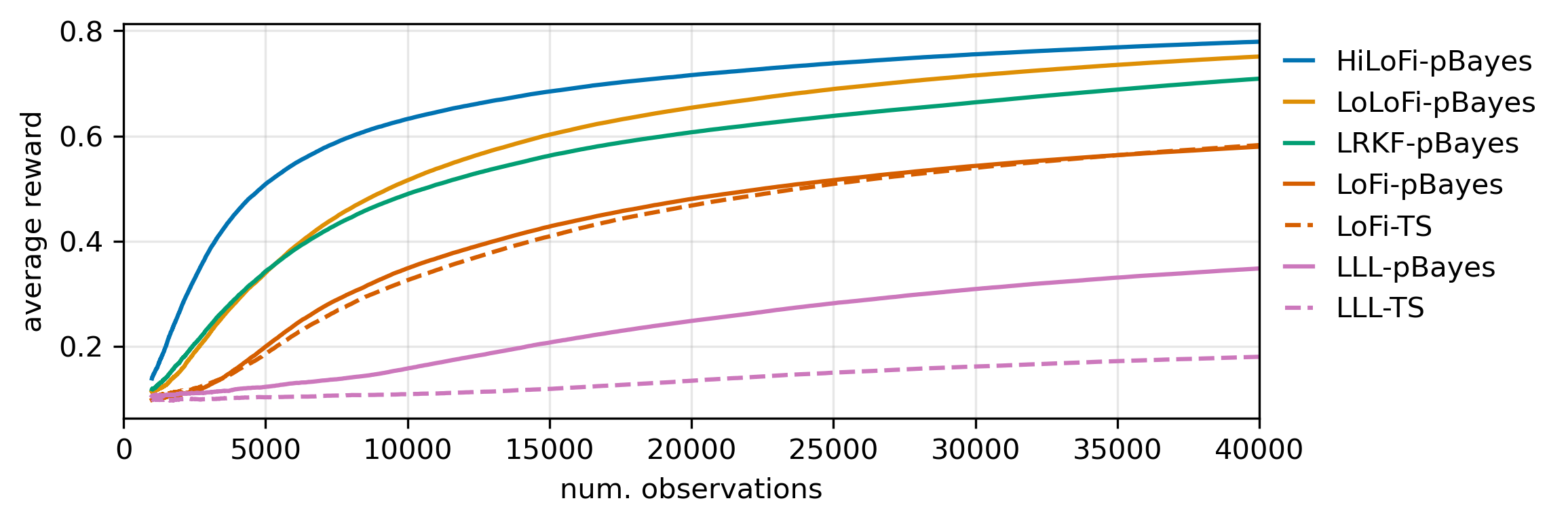}
    \vspace{-1.2em}
    \caption{
        Cumulative average reward for the bandit MNIST problem.
    }
    \label{fig:mnist-reward-cumulative}
\end{figure}
Next, Table~\ref{tab:ps-vs-ts} reports the average cumulative reward and total runtime (over 10 trials) for each method,
comparing posterior predictive sampling (pBayes) with Thompson sampling (TS).
\begin{table}[htb]
\tiny
\centering
\label{tab:ps-vs-ts}
\begin{tabular}{lrrrr}
\toprule
\textbf{Agent} & \textbf{Reward (pBayes)} & \textbf{Reward (TS)} & \textbf{Time (pBayes)} & \textbf{Time (TS)} \\
\midrule
\newmethodlin  & \textbf{31,167.7} & --      & 8.44  & --     \\
\newmethodlinlow  & 30,038.5          & --      & 3.67  & --     \\
\methodlrkf    & 28,354.9          & --      & 2.51  & --     \\
\methodlofi    & 23,184.5          & 23,304.3 & 4.76  & 38.90  \\
\methodlaplace   & 13,922.3          & 7,201.9  & 1.36  & 191.55 \\
\bottomrule
\end{tabular}
\caption{Performance comparison of Bayesian decision-making strategies: 
posterior predictive sampling (pBayes) vs. Thompson sampling (TS). 
Rewards are cumulative and averaged over 10 runs; time is in minutes.}
\end{table}
Overall, pBayes is consistently faster across all methods.
For the two agents where both approaches are applicable (\methodlofi and \methodlaplace),
the average cumulative reward is nearly identical for \methodlofi, while pBayes clearly outperforms TS for \methodlaplace.
However, TS incurs a much higher runtime due to the need to sample repeatedly from the high-dimensional posterior over model parameters,
whereas pBayes only requires sampling in the comparatively low-dimensional outcome space (one dimension per arm/class).
Finally, as expected, \newmethodlin achieves the best performance,
followed by \newmethodlinlow and then \methodlrkf.
This performance hierarchy comes at a cost in runtime: \newmethodlin is slower than \newmethodlinlow, which in turn is slower than \methodlrkf.

\subsection{Recommender systems}
\label{experiment:bandits}

\paragraph{Problem description.} We study the performance of the methods 
on the Kuairec dataset of \cite{gao2022kuairec},
which is derived from a real-world, non-stationary, video recommender system.
This dataset is used to
study  non-stationary contextual bandits
in
\cite{zhu2023nonstationarybandit}.

\paragraph{Dataset.} We group rows in the dataset (for a given user) in blocks of the
10 next videos that the user saw.
For the features, we consider the  $\log(1 + x)$ transform
of like count, share count, play count, comment count, complete play count, follow count, reply comment count, and download count.
For the target variable, we use the 
 $\log(1 + x)$ transform  of the watch ratio $[0, \infty)$.

\paragraph{Model.} The reward model is a neural network with an embedding layer (one per video/arm),
and a dense layer for all features.
We join the embedding layer and the  dense layer, and we
then consider a three-hidden-layer neural network (50 units per layer, ELU activations), and linear output unit.

\paragraph{Algorithm.} We perform sequential Bayesian inference using the relevant method,
and then use TS to choose the action at each step (see Section \ref{sec:TS}).
The choice of hyperparameters is  in  \ref{sec:further-results-recommend}.

\paragraph{Results.} Figure \ref{fig:bandits-reward-boxplot} shows the average daily reward (left panel) and the running time of each method (right panel).
We observe that \newmethodlin has the highest average daily reward with second lowest running time.
On average, \methodlrkf and \methodlofi have similar performance,
however, \methodlofi has a higher running time.
Next, \methodvbll has higher average daily reward than \newmethodvb, but it is slower.
For an in-depth analysis of the results see Appendix \ref{sec:further-results-recommend}. 

\begin{figure}[htb]
    \centering
    \includegraphics[width=0.9\linewidth]{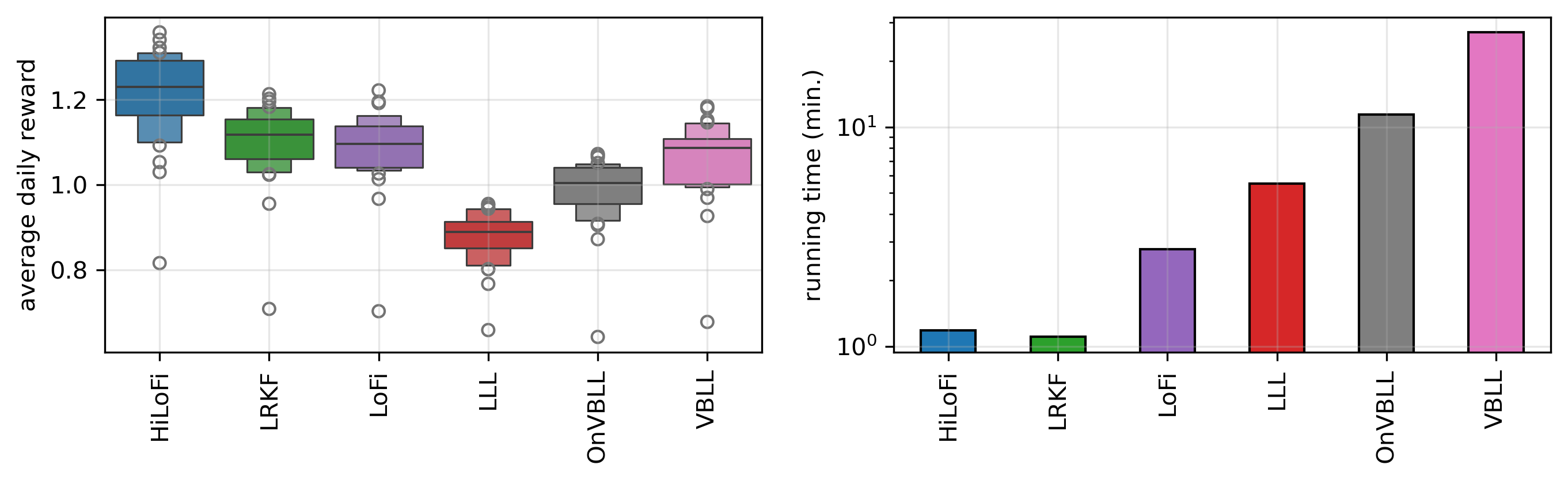}
    \vspace{-1.5em}
    \caption{
    Recommender system results.
     Left: average daily reward.
     Right: running time.
    }
    \label{fig:bandits-reward-boxplot}
\end{figure}

\subsection{Bayesian optimization}
\label{experiment:bayesopt}

\paragraph{Problem description.} Following \cite{brunzema2024bayesian}, we evaluate the competing methods on classical Bayesian optimization (BO) tasks, 
where the goal is to maximize an unknown function $g: [0,1]^\dimin \to \mathbb{R}$ 
with a surrogate model trained sequentially on past observations. 

\paragraph{Model.} We employ a three-layer MLP (180 units per layer, ELU activations) as the surrogate for all neural-based methods.
For the GP baseline,
we use a Matérn-5/2 kernel with lengthscale 0.1 and a 20-point buffer to reduce computational cost. 
While  \methodgp performs better when its hyperparameters are tuned during training  \cite{hvarfner2024bayesopt},
this incurs much higher computational costs, so we do not pursue it here.
The choice of hyperparameters is detailed in Appendix \ref{sec:further-results-BO}.

\paragraph{Datasets.}
We consider seven benchmark functions commonly used in BO \cite{brunzema2024bayesian}:
Ackley (2D, 5D, 10D, 50D), Branin (2D), Hartmann (6D), and DrawNN (50D and 200D). 
These span a range of dimensionalities and multi-modalities, and are standard in evaluating global optimal performance.
For all methods, except for DrawNN,
we sample a function from the posterior predictive and evaluate it on a fixed set of candidate points generated
using a Sobol sequence \cite{scyphers2024BO}
to find its maximum.
Instead, for Drawnn functions,
we use projected gradient descent,
implemented with the Jaxopt library \cite{jaxopt2021},
to optimize the sampled function directly over the continuous domain $[0,1]$.

\paragraph{Algorithm.}
We perform sequential Bayesian inference using the relevant method, and then choose the next query point at each step using TS
(see Section \ref{sec:TS}).
Figure \ref{fig:bayesopt-iterations-expected-improvement} in the Appendix
shows the results using the expected improvement algorithm \cite{jones1998expectedimprovment}.

\paragraph{Results.} Figure~\ref{fig:bayesopt-rank} (left) shows the final best value
on the vertical axis (higher is better),
and the amount of run time on the horizontal axis
(lower is better).
The number of steps for each dataset are in the x-axis in Figure \ref{fig:bayesopt-steps}
in Appendix \ref{sec:further-results-BO}.
Figure~\ref{fig:bayesopt-rank} (right) shows the tradeoff between performance (measured by rank)
and compute time.
Our method  is on the Pareto frontier for a range of tradeoffs between time and performance;
furthermore,  across tasks, the performance and compute time of \newmethodlin dominates
(better performance and less compute time) those from competing methods except from
\methodvbll (higher performance) and \methodlrkf (lower compute time).
Thus, \newmethodlin offers strong tradeoffs between compute time and performance.
For further analyses and results see Appendix \ref{sec:further-results-BO}.

\begin{figure}[htb]
    \centering
    \includegraphics[width=0.9\linewidth]{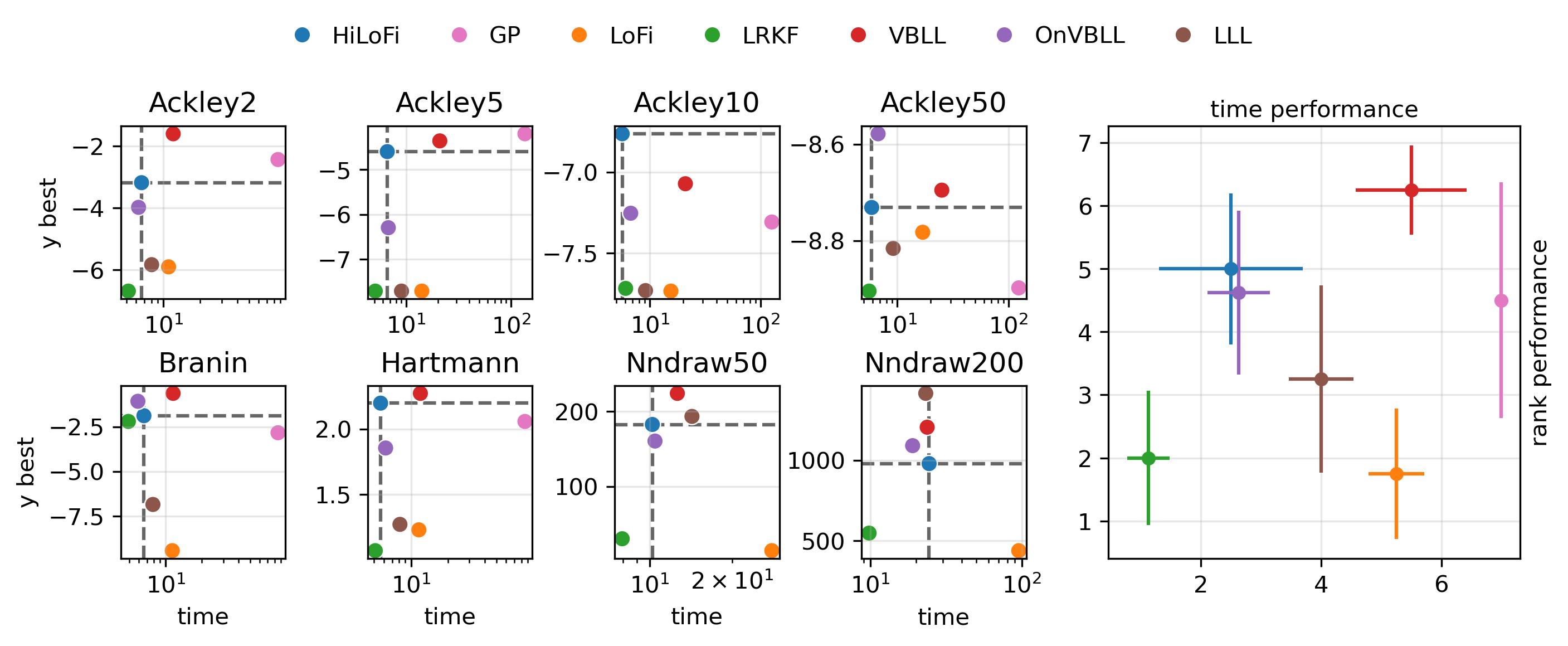}
    \vspace{-1em}
    \caption{
    Left panel:
        Performance across all BO benchmark functions.
        For time, lower is better.
        For performance, higher is better.
        The dashed lines correspond to the results
        for our method, so methods that are above and to the left are better.
        Right panel: Time versus performance tradeoff plots.
    }
    \label{fig:bayesopt-rank}
\end{figure}

\section{Related work}
\label{sec:related}

We briefly review deterministic approaches for approximate online parameter inference. 
Several methods maintain diagonal plus low-rank (DLR) structures for efficient covariance updates, 
including L-RVGA \cite{lambert2023lrvga}, SLANG \cite{mishkin2018slang}, and LoFi \cite{chang23lofi}. 
Low-rank Kalman filtering has also been explored in state-space models \cite{schmidt2023rankreducedkf}, 
though with stronger restrictions on the rank relative to measurement dimension. 
Beyond low-rank, FOO-VB \cite{zeno2021foovb} exploits Kronecker-structured approximations, 
while offline methods explore similar ideas for scaling BNNs 
\cite{lee2020bnnuncertainty,tomczak2020bnndrl,kristiadi2022posteriorpredictive}.
Another direction builds on the lottery ticket hypothesis \cite{li2018measuring,larsen2022degreesfreedomneedtrain}, 
restricting learning to sparse or low-dimensional subspaces. 
Examples include the subspace neural bandit \cite{duran2022efficient} and PULSE \cite{cartea2023detecting}, 
which pre-train projection matrices offline before online adaptation.

\section{Conclusion and future work}
\label{sec:conclusion}
We introduced a predictive-first approach for efficient online training of neural networks,
based on linearization and structured low-rank approximations of error covariances.
Our three variants achieve strong performance-runtime trade-offs,
with \newmethodlin particularly effective in high-dimensional, non-stationary settings.
Limitations include the lack of fixed-lag smoothing,
sensitivity to linearization and outliers  \cite{duran2024wolf,laplante2025robust},
and reliance on hyperparameters. 
Future work includes extending our approach to fully-online reinforcement learning \cite{elsayed2024streamingrl}.



We view our contributions as primarily methodological, with no particular ethical concerns.

\section*{Acknowledgements}
We thank the TPU Research Cloud program for providing the compute resources used in our experiments, and James Harrison for helpful comments.


\bibliographystyle{plainurl}
\bibliography{refs}

\newpage
\appendix
\startcontents[apx]

\section*{Appendix}
\printcontents[apx]{ }{1}[2]{} 

\newpage
\section{Notation}
\label{sec:notation}

\begin{table}[h!]
\small
\begin{tabularx}{\linewidth}{lX}
\toprule
\textbf{Symbol} & \textbf{Description} \\
\midrule
$\|\cdot\|_2$ & $\ell_2$ norm. \\
$\|\cdot\|_F$ & Frobenius norm. \\

\midrule
$\dimobs, \dimin \in \naturals$ & Dimension of observations / inputs. \\
$\dimhidden, \dimlast \in \naturals$ & Number of parameters for the hidden layer / last layer. \\
$\dimall = \dimhidden + \dimlast$ & Total number of neural network parameters. \\

\midrule
${\cal N}(\vx \cond \vm, \vS)$ & Multivariate Gaussian density evaluated at $\vx$ with mean $\vm$ and covariance matrix $\vS$. \\
$p_{\rm env}(\vy \cond \vx)$ & Unknown data-generating process for $\vy$ conditioned on $\vx$ \\
$p_{\vb}(\vy \cond \vx)$ & Posterior predictive model for $\vy$, conditioned on $\vx$, and parameterized by $\vb$.\\

\midrule
$\mat{m\times n}$ & Space of $m$-by-$n$ real-valued matrices. \\
$\matPos{m}$ & Space of $m$-dimensional positive definite (pd) matrices. \\
$\matSemiPos{m}$ & Space of $m$-dimensional positive semidefinite (psd) matrices. \\
$\vI_D$ & $D$-dimensional identity matrix. \\
$\vS^{1/2}$ & Upper-triangular Cholesky decomposition of the pd matrix $\vS$. \\

\midrule
$\vx_t = (\vc_t, \va_t)$ & Inputs with context $\vc_t \in \reals^{\dimcontext}$ (possibly empty) and action $\va_t \in {\cal A}\subseteq \reals^\dimaction$. \\
$y_t \in \reals$ & Scalar measurements, rewards, or observations obtained at timestep $t$. \\
$\vy_t \in \reals^\dimobs$ & Measurements, rewards, or observations obtained at timestep $t$. \\
$\data_t = (\vx_t, \vy_t)$ & Datapoint at time $t$. \\
$\vu_{1:t} = (\vu_1, \ldots, \vu_t)$ & Time-ordered collection of vectors $\vu_k \in \reals^{D_{\vu}}$. \\
$\data_{1:t} = (\data_1, \ldots, \data_{t})$ & Dataset at time $t$. \\

\midrule
$\plast \in \reals^\dimlast$ & Last layer parameters. \\
$\phidden \in \reals^\dimhidden$ & Hidden layer parameters. \\
$\vtheta = (\plast, \phidden) \in \reals^\dimall$ & Collection of all neural network parameters. \\
$\nn(\vtheta, \vx) = \nn(\plast, \phidden, \vx) \in \reals^\dimobs$ & Neural network output with parameters $\vtheta$ and inputs $\vx$. \\

\midrule
$\ve_t \in \reals^\dimobs$ & Zero-mean noise with covariance $\vR_t \in \matPos{\dimobs}$. \\
$\vu_t \in \reals^\dimall$ & Zero-mean dynamics noise with variance $\vQ_t \in \matSemiPos{\dimall}$. \\
$q_{\plast,t} \geq 0$ & Last-layer dynamics. \\
$q_{\phidden,t} \geq 0$ & Hidden-layer dynamics. \\
$\vu_{t,\plast} \in \reals^{\dimlast}$ & Zero-mean dynamics noise of last layer parameters with variance $q_{\plast,t}\,\vI_\dimlast$. \\
$\vu_{t,\phidden} \in \reals^\dimhidden$ & Zero-mean dynamics noise of hidden-layer parameters with variance $q_{\phidden,t}\,\vI_\dimhidden$. \\

\midrule
$\plast_{t|t} \in \reals^\dimlast$ & Frequentist estimate for $\plast_t$ given $\data_{1:t}$. \\
$\phidden_{t|t} \in \reals^\dimhidden$ & Frequentist estimate for $\phidden_t$ given $\data_{1:t}$. \\
$\vtheta_{t|t} = (\plast_{t|t}, \phidden_{t|t}) \in \reals^\dimall$ & Frequentist estimate for $\vtheta_t$ given $\data_{1:t}$. \\
$\vSigma_{t} = \var(\vtheta_t - \vtheta_{t|t})$ & Error variance-covariance matrix. \\
$\tilde{\vSigma}_t$ & Algorithm-dependent surrogate covariance matrix to $\vSigma_{t}$. \\
$\hat{\vSigma}_{t} = \argmin_{\vSigma:{\rm rank}(\vSigma) = d} \left\|\tilde{\vSigma}_{t|t} - \vSigma\right\|_{\rm F}^2$ & Best rank-$d$ approximation to the surrogate covariance matrix $\tilde{\vSigma}_{t}$. \\
$\vC_t \in \mat{d\times D}$ & Rectangular matrix $(d\times D)$ matrix such that $\vC_t^\intercal\,\vC_t = \hat{\vSigma}_t$\\

\midrule
$\gradlast_{t+1} = \nabla_{\plast} \nn(\plast_{t|t},\,\phidden_{t|t},\,\vx_{t+1})$ & Jacobian of neural network w.r.t parameters in last layer. \\
$\gradhidden_{t+1} = \nabla_{\phidden} \nn(\plast_{t|t},\,\phidden_{t|t},\,\vx_{t+1})$ & Jacobian of neural network w.r.t. parameter in hidden layers. \\

\midrule
${\cal Q}_R(\vB_1, \ldots, \vB_k)$ & Cholesky factorization of $\sum_{k=1}^K \vB_k^\intercal\,\vB_k$, where each $\vB_k$ is an upper-triangular Cholesky factor (see Appendix \ref{sec:sum-cholesky}). \\
${\cal P}_d(\vA_1, \ldots, \vA_K) \in \mat{d\times D}$ & Best rank-$d$ approximation of $\sum_{k=1}^K \vA_k^\intercal\,\vA_k$, with $\vA_k \in \mat{d_k\times D}$, $d_k \leq D$. \\
${\cal P}_{d,+a}(\vA_1, \ldots, \vA_K) \in \mat{d\times D}$ & Best rank-$d$ approximation of $\sum_{k=1}^K \vA_k^\intercal\,\vA_k + a\,\vI_D$, with $a > 0$ (see Appendix \ref{sec:sum-SVD}). \\
\bottomrule
\end{tabularx}
\end{table}

\section{Extended Kalman filtering for online learning}
\label{sec: KF online learning}
\label{sec:EKF}

Here, we explain in more detail the background of the EKF for online learning presented in Section \ref{sec:EKF-summary}. To make this section self-contained, we repeat parts of the material introduced in that section. 
Let us assume that $\vy_t = \plast_t^\intercal\,\phi(\vx_t, \phidden_t) + \ve_t$,
where
$\ve_t$ is a zero-mean random variable with observation covariance $\vR_t \in \matPos{\dimobs}$.
\eat{
for a zero-mean error with variance $\vR_t \in \matPos{\dimobs}$, and 
 group the model parameters of the neural network as follows
$$
\vtheta_t = 
\begin{bmatrix}
    \plast_t\\
    \phidden_t
\end{bmatrix} \in \reals^{\dimlast + \dimhidden}.
$$
}
Given a starting vector $\vtheta_0$,
we assume
that model parameters $\vtheta_t = (\plast_t, \phidden_t)\in\reals^\dimall$ and observations $\vy_t \in \reals^\dimobs$ evolve according to the 
state-space model
\begin{equation}
\label{eq:ssm-full}
\begin{aligned}
    \vtheta_t &= \vtheta_{t-1} + \vu_t,\\
    \vy_t &= \nn(\vtheta_t,\,\vx_{t}) 
    + \ve_t,
\end{aligned}
\end{equation}
where
$\vu_t \in \reals^\dimall$ is a zero-mean random vector with known dynamics covariance $\vQ_t \in \matSemiPos{\dimall}$,
and
$\ve_t \in \reals^\dimobs$ is a zero-mean random vector with known observation covariance $\vR_t \in \matPos{\dimobs}$.
As before, we assume $\cov(\vtheta_0, \ve_t) = \vzero$ for all $t\in \mathbb{N}$ and 
$\cov(\vu_t, \ve_k) = \vzero$ for all $t,k \in \mathbb{N}$.

We now consider a 
first-order approximation of $\nn$ around
$\vtheta_{t-1|t-1}$, that is,
\begin{equation}
\label{eq:lin-ssm-full}
\vy_t \approx 
\nn(\vtheta_{t-1|t-1},\,\vx_{t}) 
+ \gradall_t\,(\vtheta_t -\vtheta_{t-1|t-1})
+ \ve_t\,,
\end{equation}
where
$\vtheta_{t-1|t-1} = \begin{bmatrix} \plast_{t-1|t-1}^\intercal & \phidden_{t-1|t-1}^\intercal \end{bmatrix}^\intercal$ is given,
and
$\gradall_t = \begin{bmatrix} \gradlast_t & \gradhidden_t \end{bmatrix}$ with
$\gradlast_t = \nabla_{\plast} \nn(\plast_{t-1|t-1},\,\phidden_{t-1|t-1},\,\vx_{t})$ and 
$\gradhidden_t = \nabla_{\phidden} \nn(\plast_{t-1|t-1},\,\phidden_{t-1|t-1},\,\vx_{t})$.

Following a frequentist approach  to the Kalman filter \cite{humpherys2012freqkf} and
given the starting latent random vector $\vtheta_{0|0} := \vtheta_0$,
we seek to obtain sequential updates for the \textit{best} ($L^2$) linear
estimate of the expected value of $\vtheta_t$ given data $\vy_{1:t}$.
We formalize this in the following proposition.

\begin{proposition}
Let $k,t\in\mathbb{N}$ and let $\vy_t$, $\vtheta_t$ follow the SSM \eqref{eq:lin-ssm-full}.
The solution to the optimization problem
\begin{equation}\label{eq:problem-kf}
    \argmax_{\vA\in \mat{(\dimlast+\dimhidden)\times(k\,\dimobs)}} \mathbb{E}\left[ ||\vtheta_t - \vA\,\vy_{1:k}||_2^2\right],
\end{equation}
is the matrix $\vA_{t|k}^\star$ given by
\begin{equation}
    \vA^\star_{t|k}  = \cov(\vtheta_t, \vy_{1:k})\,\var(\vy_{1:k})^{-1}.
\end{equation}
The best linear unbiased predictor (BLUP) for model parameters $\vtheta_t$, given observations $\vy_{1:k}$ is
\begin{equation}\label{eq:vtheta t|k}
\begin{aligned}
    \vtheta_{t|k} &= \vA_{t|k}^\star\,\vy_{1:k},
\end{aligned}
\end{equation}
and the error variance covariance (EVC) matrix is defined as
\begin{equation}
\begin{aligned}
    \vSigma_{t|k} &:= \var(\vtheta_t) - \vA_{t|k}^\star\,\var(\vy_{1:k})\,\vA_{t|k}^{\star\intercal}.
\end{aligned}
\end{equation}
\end{proposition}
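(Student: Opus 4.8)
The plan is to recognize problem~\eqref{eq:problem-kf} as the classical linear minimum-mean-squared-error estimation problem and solve it through its first-order optimality conditions, i.e.\ the orthogonality principle. Since the objective $J(\vA) := \mathbb{E}\big[\|\vtheta_t - \vA\,\vy_{1:k}\|_2^2\big]$ is a convex quadratic form in the entries of $\vA$, any stationary point is automatically a global minimizer, so it suffices to exhibit one and verify convexity.

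First I would expand, using linearity of $\mathbb{E}$ and of the trace,
\[
J(\vA) = \Tr\big(\mathbb{E}[\vtheta_t\vtheta_t^\intercal]\big) - 2\,\Tr\big(\vA\,\mathbb{E}[\vy_{1:k}\vtheta_t^\intercal]\big) + \Tr\big(\vA\,\mathbb{E}[\vy_{1:k}\vy_{1:k}^\intercal]\,\vA^\intercal\big).
\]
Differentiating in $\vA$ and setting the gradient to zero yields the normal equations $\vA\,\mathbb{E}[\vy_{1:k}\vy_{1:k}^\intercal] = \mathbb{E}[\vtheta_t\vy_{1:k}^\intercal]$, equivalently the orthogonality condition $\mathbb{E}\big[(\vtheta_t - \vA\vy_{1:k})\,\vy_{1:k}^\intercal\big] = \vzero$. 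Since $\vR_t \in \matPos{\dimobs}$ for all $t$ and the noise and initial-state uncorrelatedness assumptions hold, the Gram matrix $\var(\vy_{1:k}) \in \matPos{k\dimobs}$ is invertible, so the unique solution is $\vA_{t|k}^\star = \cov(\vtheta_t,\vy_{1:k})\,\var(\vy_{1:k})^{-1}$, which gives the BLUP $\vtheta_{t|k} = \vA_{t|k}^\star\,\vy_{1:k}$ in~\eqref{eq:vtheta t|k}. Convexity is immediate because the Hessian of $J$ is positive semidefinite, as $\mathbb{E}[\vy_{1:k}\vy_{1:k}^\intercal] \succeq \vzero$.

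For the EVC I would invoke the orthogonality principle directly. Since $\vtheta_t - \vtheta_{t|k}$ is uncorrelated with $\vy_{1:k}$, it is also uncorrelated with the linear image $\vtheta_{t|k} = \vA_{t|k}^\star\vy_{1:k}$, so $\vSigma_{t|k} = \var(\vtheta_t - \vtheta_{t|k}) = \cov(\vtheta_t - \vtheta_{t|k},\,\vtheta_t) = \var(\vtheta_t) - \vA_{t|k}^\star\,\cov(\vy_{1:k},\vtheta_t)$. Substituting the transposed normal equation $\cov(\vy_{1:k},\vtheta_t) = \var(\vy_{1:k})\,\vA_{t|k}^{\star\intercal}$ yields $\vSigma_{t|k} = \var(\vtheta_t) - \vA_{t|k}^\star\,\var(\vy_{1:k})\,\vA_{t|k}^{\star\intercal}$, exactly the claimed formula; the same result also drops out of a direct expansion of $\var(\vtheta_t - \vA_{t|k}^\star\vy_{1:k})$ into four terms, three of which collapse by the defining identity of $\vA_{t|k}^\star$.

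The step I expect to require the most care is the treatment of means. As written, the estimator $\vA\vy_{1:k}$ has no intercept, yet under the affine linearized recursion~\eqref{eq:lin-ssm-full} both $\vtheta_t$ and $\vy_{1:k}$ carry non-zero (albeit known) means from the constant offsets $\nn(\cdot)-\gradall_t\vtheta_{t-1|t-1}$. I would resolve this by running the entire argument on the centered variables $\vtheta_t - \mathbb{E}[\vtheta_t]$ and $\vy_{1:k} - \mathbb{E}[\vy_{1:k}]$, so that second-moment matrices and covariances coincide, and then noting that reinstating the known means only adds a deterministic correction to the predictor and leaves the EVC formula unchanged --- this is precisely the mean-propagation bookkeeping that the Kalman recursion performs. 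I would also state explicitly the standing assumption $\var(\vy_{1:k}) \in \matPos{k\dimobs}$, which is what legitimizes the inversion in $\vA_{t|k}^\star$ and follows from $\vR_t \in \matPos{\dimobs}$ together with the uncorrelatedness of $\vu_t$, $\ve_t$, and $\vtheta_0$.
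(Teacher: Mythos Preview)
Your derivation is correct and is precisely the standard textbook argument: expand the quadratic, read off the normal equations (equivalently, the orthogonality principle), invert $\var(\vy_{1:k})$, and then obtain the EVC either by direct expansion or by invoking orthogonality again. Your remark about centering to reconcile second moments with covariances is the right bookkeeping, and your justification of invertibility via $\vR_t\in\matPos{\dimobs}$ is appropriate.

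As for the comparison: the paper does not actually supply a proof of this proposition. It is stated as a standard result in the frequentist Kalman-filter tradition, with the surrounding text pointing to \cite{humpherys2012freqkf} and the subsequent proposition deferring to Lemmas~2.1--2.3 and Theorem~4.2 of \cite{eubank2005kalman}. So there is nothing to contrast at the level of technique; you have simply written out what the paper leaves implicit. If anything, your treatment is more explicit than the paper's, particularly on the mean-handling subtlety and on why the Gram matrix is invertible.

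One minor note: the statement as printed has $\argmax$ where $\argmin$ is clearly intended; you silently (and correctly) read it as a minimization.
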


\begin{proposition}\label{prop:kalman-filter-steps}
Under the SSM \eqref{eq:lin-ssm-full}, the BLUP and the EVC
can be written in the form of Kalman filtering predict and update equations.
Here, the predict equations are
\begin{equation}
\begin{aligned}
    \vtheta_{t|t-1} &= \vtheta_{t-1|t-1},\\
    \vSigma_{t|t-1} &= \vSigma_{t-1|t-1} + \vQ_t,
\end{aligned}
\end{equation}
and the update equations are
\begin{equation}
\begin{aligned}
    \vtheta_{t|t} &= \vtheta_{t|t-1} + \vK_t\,\vepsilon_t,\\
    \vSigma_{t|t} &= (\vI - \vK_t\,\gradall_t)\,\vSigma_{t|t-1}(\vI - \vK_t\,\gradall_t)^\intercal + \vK_t\,\vR_t\vK_t^\intercal,
      \label{eqn:EKFupdate1}
\end{aligned}
\end{equation}
with
\begin{equation}
\begin{aligned}
    \vepsilon_t &= \vy_t - \nn(\vtheta_{t-1|t-1}, \vx_t),\\
    \vK_t &= \vSigma_{t|t-1}\,\gradall_t^\intercal\,\vS_t^{-1},\\
    \vS_t &= \gradall_t\vSigma_{t|t-1}\,\gradall_t^\intercal + \vR_t.
    \label{eqn:EKFupdate2}
\end{aligned}
\end{equation}
\end{proposition}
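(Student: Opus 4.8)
The plan is to derive the predict and update equations directly from the definitions of the BLUP $\vtheta_{t|k} = \vA^\star_{t|k}\vy_{1:k}$ and the EVC $\vSigma_{t|k}$ established in the previous proposition, using only second-moment (orthogonal projection) arguments — no Gaussianity is needed. The key structural fact I would invoke repeatedly is that the BLUP is the $L^2$ projection of $\vtheta_t$ onto the linear span of $\vy_{1:k}$, so the prediction error $\vtheta_t - \vtheta_{t|k}$ is uncorrelated with every component of $\vy_{1:k}$. I would treat the ``predict'' and ``update'' halves separately.

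\textbf{Predict step.} Starting from the dynamics $\vtheta_t = \vtheta_{t-1} + \vu_t$ with $\vu_t$ zero-mean and uncorrelated with $\vy_{1:t-1}$ (which follows from $\cov(\vu_t,\ve_k)=\vzero$ for all $k$ and $\cov(\vu_t,\vtheta_0)=\vzero$, since each $\vy_j$ is a linear function of $\vtheta_0$, the noises $\vu_{1:j}$, and $\ve_j$), I would compute $\vA^\star_{t|t-1} = \cov(\vtheta_t,\vy_{1:t-1})\var(\vy_{1:t-1})^{-1} = \cov(\vtheta_{t-1},\vy_{1:t-1})\var(\vy_{1:t-1})^{-1} = \vA^\star_{t-1|t-1}$, giving $\vtheta_{t|t-1}=\vtheta_{t-1|t-1}$. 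For the covariance, writing $\vtheta_t - \vtheta_{t|t-1} = (\vtheta_{t-1}-\vtheta_{t-1|t-1}) + \vu_t$ and using that $\vu_t$ is uncorrelated with $\vtheta_{t-1}-\vtheta_{t-1|t-1}$ (again because the latter is a linear combination of $\vtheta_0$, $\vu_{1:t-1}$, $\ve_{1:t-1}$, $\vy_{1:t-1}$), I get $\vSigma_{t|t-1} = \vSigma_{t-1|t-1} + \vQ_t$.

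\textbf{Update step.} Here I would use the standard innovations decomposition: the span of $\vy_{1:t}$ equals the span of $\vy_{1:t-1}$ together with the innovation $\vepsilon_t = \vy_t - \mathbb{E}[\vy_t\mid\vy_{1:t-1}]$, and under the linearized model \eqref{eq:lin-ssm-full} one has $\mathbb{E}[\vy_t\mid\vy_{1:t-1}] = \nn(\plast_{t-1|t-1},\phidden_{t-1|t-1},\vx_t) + \gradall_t(\vtheta_{t|t-1}-\vtheta_{t-1|t-1}) = \nn(\plast_{t-1|t-1},\phidden_{t-1|t-1},\vx_t)$ since $\vtheta_{t|t-1}=\vtheta_{t-1|t-1}$. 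Because $\vepsilon_t$ is orthogonal to $\vy_{1:t-1}$, the projection onto $\mathrm{span}(\vy_{1:t})$ splits as a sum of the projection onto $\mathrm{span}(\vy_{1:t-1})$ plus the projection onto $\mathrm{span}(\vepsilon_t)$, yielding $\vtheta_{t|t} = \vtheta_{t|t-1} + \cov(\vtheta_t,\vepsilon_t)\var(\vepsilon_t)^{-1}\vepsilon_t$. I would then compute $\vepsilon_t = \gradall_t(\vtheta_t - \vtheta_{t|t-1}) + \ve_t$, from which $\var(\vepsilon_t) = \gradall_t\vSigma_{t|t-1}\gradall_t^\intercal + \vR_t =: \vS_t$ and $\cov(\vtheta_t,\vepsilon_t) = \vSigma_{t|t-1}\gradall_t^\intercal$, giving the gain $\vK_t = \vSigma_{t|t-1}\gradall_t^\intercal\vS_t^{-1}$. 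For the EVC, I would write $\vtheta_t - \vtheta_{t|t} = (\vI-\vK_t\gradall_t)(\vtheta_t-\vtheta_{t|t-1}) - \vK_t\ve_t$ and expand the outer product, using that $\ve_t$ is uncorrelated with $\vtheta_t - \vtheta_{t|t-1}$ (the latter being a function of $\vtheta_0,\vu_{1:t},\ve_{1:t-1},\vy_{1:t-1}$), which produces the Joseph-form identity $\vSigma_{t|t} = (\vI-\vK_t\gradall_t)\vSigma_{t|t-1}(\vI-\vK_t\gradall_t)^\intercal + \vK_t\vR_t\vK_t^\intercal$.

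\textbf{Main obstacle.} The routine algebra is straightforward; the part requiring genuine care is justifying the orthogonality/uncorrelatedness claims — specifically that $\vu_t$ and $\ve_t$ are uncorrelated with the past prediction error $\vtheta_{t-1}-\vtheta_{t-1|t-1}$ and with $\vy_{1:t-1}$, and that the innovations decomposition of the projection is valid. These rest on an induction showing that each $\vtheta_{k|k}$ (hence each prediction error) lies in the linear span of $\vtheta_0, \vu_{1:k}, \ve_{1:k}$, combined with the standing assumptions $\cov(\vtheta_0,\ve_t)=\vzero$ and $\cov(\vu_t,\ve_k)=\vzero$. I would set this up as a lemma (or fold it into the induction hypothesis) before doing the two-step computation above, and note that the linearization is what makes $\vy_t$ an affine function of the state so that all these spans behave linearly. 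I would defer the full bookkeeping to the appendix, as the proposition statement already signals.
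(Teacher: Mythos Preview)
Your proposal is correct and follows precisely the projection/innovations argument that underlies the result; the paper's own proof simply cites Lemmas~2.1--2.3 and Theorem~4.2 of \cite{eubank2005kalman}, so you have spelled out explicitly what the paper defers to a reference. The only minor point worth tightening is that the BLUP here is formally a \emph{linear} (not affine) predictor, so the constant $\nn(\plast_{t-1|t-1},\phidden_{t-1|t-1},\vx_t)$ in the linearized model should be absorbed by centering $\vy_t$ before projecting --- but this is exactly what your innovation $\vepsilon_t$ does, so the computation goes through as written.
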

\begin{proof}
The proof follows from the linear form of \eqref{eq:lin-ssm-full}, Lemmas 2.1 to 2.3, and Theorem 4.2 in \cite{eubank2005kalman}.

\end{proof}

\begin{remark}
    The update equations in \eqref{eqn:EKFupdate1} are recursive and
    characterize estimate of the unknown latent state and the estimated
    error estimation through the first two moments.
    Furthermore, the update for the covariance in \eqref{eqn:EKFupdate1} is  in Joseph form \cite{zanetti2013joseph}
    and is known to be numerically stable.
    In our method, we leverage these two facts to target a low-rank EVC through a second-step optimization that
    has numerically-stable updates.
\end{remark}

\begin{corollary}[Kalman filter as a Bayesian posterior]
    Assume a Gaussian density for the initial parameters
    $p(\vtheta_0) = {\cal N}(\vtheta_0 \cond \vtheta_{0|0}, \vSigma_{0|0})$ and for the noise term
    $p(\ve_t) = {\cal N}(\ve_t \cond \vzero,\,\vR_t)$.
    The  BLUP $\vtheta_{t|t}$ and the EVC $\vSigma_{t|t}$
    are the parameters of the Gaussian that characterize the posterior of model parameters $\vtheta_t$ given $\vy_{1:t}$,
    more precisely,
    \begin{equation}\label{eq:kf-bayes}
        p(\vtheta_t \cond \vy_{1:t}) = {\cal N}(\vtheta_t \cond \vtheta_{t|t}, \vSigma_{t|t}).
    \end{equation}
    As a consequence, the posterior predictive for the next observation in the sequence is
    \begin{equation}
    \begin{aligned}
        p(\vy_{t} \cond \vy_{1:t-1})
        &= \int {\cal N}\left(\vy_t \cond \nn(\vtheta_{t-1|t-1},\,\vx_{t}) + \gradall_t(\,\vtheta_t -\vtheta_{t|t-1}), \vR_t\right) p(\vtheta_t \cond \data_{1:t-1}) \d\vtheta_t\\
        &= {\cal N}(\vy_{t} \cond \nn(\vtheta_{t-1|t-1}, \vx_t),\;\;\gradall_t\,\vSigma_{t|t-1}\gradall_t^\intercal + \ \vR_t).
    \end{aligned}
    \end{equation}
\end{corollary}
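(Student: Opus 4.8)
The plan is to prove the posterior identity \eqref{eq:kf-bayes} by induction on $t$, using that under the stated Gaussian hypotheses the linearized model \eqref{eq:lin-ssm-full} is a linear--Gaussian state-space model once the linearization point $\vtheta_{t-1|t-1}$ is treated as a fixed constant at step $t$; the posterior predictive then follows by a one-line Gaussian marginalization. The base case $t=0$ is immediate, since $p(\vtheta_0) = {\cal N}(\vtheta_0 \cond \vtheta_{0|0}, \vSigma_{0|0})$ already has the claimed form, with $\vtheta_{0|0}$ and $\vSigma_{0|0}$ the (trivial) BLUP and EVC.

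For the inductive step, assume $p(\vtheta_{t-1}\cond\vy_{1:t-1}) = {\cal N}(\vtheta_{t-1}\cond\vtheta_{t-1|t-1},\vSigma_{t-1|t-1})$. First I would apply the Chapman--Kolmogorov identity $p(\vtheta_t\cond\vy_{1:t-1}) = \int p(\vtheta_t\cond\vtheta_{t-1})\,p(\vtheta_{t-1}\cond\vy_{1:t-1})\,\d\vtheta_{t-1}$: because $\vtheta_t = \vtheta_{t-1}+\vu_t$ with $\vu_t\sim{\cal N}(\vzero,\vQ_t)$ independent of $\vtheta_{t-1}$ and of $\vy_{1:t-1}$, this convolution of Gaussians equals ${\cal N}(\vtheta_t\cond\vtheta_{t-1|t-1},\,\vSigma_{t-1|t-1}+\vQ_t)$ --- exactly the predict step of Proposition \ref{prop:kalman-filter-steps}. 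Then I would apply Bayes' rule, $p(\vtheta_t\cond\vy_{1:t})\propto p(\vy_t\cond\vtheta_t)\,p(\vtheta_t\cond\vy_{1:t-1})$, where by \eqref{eq:lin-ssm-full} the likelihood $p(\vy_t\cond\vtheta_t)$ is Gaussian and affine in $\vtheta_t$; completing the square in $\vtheta_t$ produces a Gaussian posterior whose precision and precision-weighted mean are the information-form measurement updates, and a Woodbury / Sherman--Morrison rearrangement brings these into the covariance (Joseph) form with gain $\vK_t=\vSigma_{t|t-1}\gradall_t^\intercal\vS_t^{-1}$ and innovation $\vepsilon_t$, matching \eqref{eqn:EKFupdate1}--\eqref{eqn:EKFupdate2}. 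Since the Bayesian recursion for the posterior mean and covariance thus has the same initialization and the same update map as the BLUP/EVC recursion of Proposition \ref{prop:kalman-filter-steps}, the two agree for every $t$, which is \eqref{eq:kf-bayes}. A shorter alternative is to note that under the Gaussian hypotheses $(\vtheta_t,\vy_{1:t})$ is jointly Gaussian, so $p(\vtheta_t\cond\vy_{1:t})$ is Gaussian with mean the best (affine) predictor and covariance the associated error covariance, and then identify these with $\vtheta_{t|t}$ and $\vSigma_{t|t}$ --- being slightly careful that the deterministic offset $\nn(\plast_{t-1|t-1},\phidden_{t-1|t-1},\vx_t)$ in the linearization has to be folded in so that the purely linear estimator of \eqref{eq:problem-kf} reproduces the affine conditional mean.

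For the posterior predictive, I would marginalize $\vy_t$ against $p(\vtheta_t\cond\vy_{1:t-1})={\cal N}(\vtheta_t\cond\vtheta_{t|t-1},\vSigma_{t|t-1})$. By \eqref{eq:lin-ssm-full}, $\vy_t$ is an affine image of $\vtheta_t$ plus the independent noise $\ve_t\sim{\cal N}(\vzero,\vR_t)$, so closure of Gaussians under affine maps and independent additive noise yields a Gaussian with mean $\gradall_t\vtheta_{t|t-1}$ (after absorbing the linearization constant) and covariance $\gradall_t\vSigma_{t|t-1}\gradall_t^\intercal+\vR_t$, which is the displayed expression.

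I expect the only real work to be in the update step: the algebraic passage from the information form to the numerically stable Joseph form (a standard but slightly lengthy matrix manipulation), together with the careful handling of the constant $\nn(\plast_{t-1|t-1},\phidden_{t-1|t-1},\vx_t)$ so that the linear estimator of \eqref{eq:problem-kf} indeed coincides with the affine posterior mean. The predict step and the predictive marginalization are routine Gaussian calculus.
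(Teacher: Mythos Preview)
Your proposal is correct and is the standard Bayesian derivation of the Kalman filter recursions. The paper does not actually provide a proof of this corollary: it is stated immediately after Proposition~\ref{prop:kalman-filter-steps} as a well-known consequence of the linear--Gaussian structure, with the predict/update equations themselves justified by citation to \cite{eubank2005kalman}. Your inductive argument via Chapman--Kolmogorov and Bayes' rule (or, equivalently, the joint-Gaussianity shortcut you mention) fills in exactly the textbook derivation the paper leaves implicit, and your handling of the linearization offset and the posterior predictive marginalization is correct.
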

\section{Linear algebra results}
\subsection{Sum of Cholesky matrices}
Here, we generalize the result found in \cite{tracy2022qrkf} to any set of $K > 2$ matrices, which we use to derive the method \methodlrkf, \newmethodlin, and \newmethodlinlow.

\begin{proposition}[QR of sum of Cholesky matrices]
\label{sec:sum-cholesky}
Let $\vA_i$ for $i\in\{1,2,\dots, I\}$ be a collection of $(D\times D)$ positive definite matrices
and let $\vA_i^{1/2}$ be the upper-triangular Cholesky decomposition of $\vA_i$.
Let $\vM$ be the stacked $(D\,I\times D)$ matrix given by
\begin{equation}
     \vM =
    \begin{bmatrix}
        \vA^{1/2}_1 \\ \vA^{1/2}_2 \\ \vdots\\ \vA^{1/2}_I
    \end{bmatrix}.
\end{equation}
It follows that 
\begin{equation}
   \vR^\intercal\,\vR = \sum_{i=1}^I \vA_i\,. 
\end{equation}
where $\vR$ is R component in the QR decomposition of $\vM$.
\end{proposition}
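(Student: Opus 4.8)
The plan is to exploit the defining property of the QR decomposition: if $\vM = \vQ\,\vR$ with $\vQ$ having orthonormal columns ($\vQ^\intercal\vQ = \vI_D$) and $\vR$ upper-triangular, then $\vM^\intercal\vM = \vR^\intercal\vQ^\intercal\vQ\,\vR = \vR^\intercal\vR$. So the whole statement reduces to computing $\vM^\intercal\vM$ directly from the block structure of $\vM$ and recognizing that the result is $\sum_{i=1}^I \vA_i$.

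First I would write $\vM$ as the vertical stack of the blocks $\vA_1^{1/2}, \ldots, \vA_I^{1/2}$, each of size $D\times D$, so $\vM \in \mat{DI\times D}$. Then the key computation is
\begin{equation}
\vM^\intercal\vM = \begin{bmatrix} (\vA_1^{1/2})^\intercal & \cdots & (\vA_I^{1/2})^\intercal \end{bmatrix}
\begin{bmatrix} \vA_1^{1/2} \\ \vdots \\ \vA_I^{1/2} \end{bmatrix}
= \sum_{i=1}^I (\vA_i^{1/2})^\intercal\,\vA_i^{1/2} = \sum_{i=1}^I \vA_i,
\end{equation}
where the last equality is exactly the definition of the Cholesky factor $\vA_i^{1/2}$ as the upper-triangular matrix satisfying $(\vA_i^{1/2})^\intercal\vA_i^{1/2} = \vA_i$ (valid since each $\vA_i$ is positive definite, so the Cholesky factor exists and is unique up to sign conventions). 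Next I would invoke the existence of a (thin) QR decomposition $\vM = \vQ\vR$ with $\vQ^\intercal\vQ = \vI_D$ and $\vR \in \mat{D\times D}$ upper-triangular, and conclude $\vR^\intercal\vR = \vM^\intercal\vM = \sum_{i=1}^I \vA_i$, which is the claim. If one wants $\vR$ to coincide with the upper-triangular Cholesky factor of $\sum_i \vA_i$ (as the notation ${\cal Q}_R$ suggests), I would add a brief remark: since $\sum_i \vA_i$ is positive definite, $\vR^\intercal\vR$ is its Cholesky factorization, and uniqueness of that factorization (with the standard positive-diagonal sign convention) identifies $\vR$ with $(\sum_i \vA_i)^{1/2}$.

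This proof is essentially a one-line block matrix multiplication plus a standard fact, so I do not anticipate a genuine obstacle; the only point requiring a little care is the sign convention in the QR decomposition — the $\vR$ factor is only unique up to multiplication of rows by $\pm 1$, so to pin down $\vR = (\sum_i \vA_i)^{1/2}$ exactly one should either fix the convention that $\vR$ has nonnegative diagonal entries or simply note that any valid QR output satisfies the stated identity $\vR^\intercal\vR = \sum_i\vA_i$ regardless of sign choices. I would state the proposition's conclusion in the latter, convention-free form and leave the identification with the Cholesky factor as an immediate consequence.
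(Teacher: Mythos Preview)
Your proposal is correct and matches the paper's proof essentially line for line: compute $\vM^\intercal\vM = \sum_i \vA_i$ from the block structure, write $\vM = \vQ\vR$, and use $\vQ^\intercal\vQ = \vI$ to conclude $\vR^\intercal\vR = \sum_i \vA_i$. Your added remark on the sign convention and the identification of $\vR$ with the Cholesky factor is a nice clarification that the paper omits.
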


\begin{proof}
We see that 
\begin{equation}
   \vM^\intercal\,\vM = \sum_{i=1}^I \vA_i\,. 
\end{equation}
Consider the QR decomposition of $\vM$
\begin{equation}
    \vM = \vQ\vR,
\end{equation}
with $\vQ$ an orthogonal matrix and $\vR$ and upper-triangular matrix.

Then,
\begin{equation}
    \sum_{i=1}^I\vA_i = 
    \vM^\intercal\,\vM
    = \vR^\intercal\,\vQ^\intercal\,\vQ\,\vR
    = \vR^\intercal\,\vR.
\end{equation}
As a consequence, we note that the \textit{square root} of the matrix $\vA_1 + \dots + \vA_I$
is the upper-triangular matrix in the QR decomposition of $\vM$.
\end{proof}

\paragraph{Remark}
In what follows, we denote the result of obtaining the $\vR$ matrix of the QR decomposition of the
row-stacked matrices $\vA^{1/2}_i$ for $i\in\{1,\dots,I\}$ by
\begin{equation}
    {\cal Q}_R(\vA^{1/2}_1, \dots, \vA^{1/2}_I).
\end{equation}

\subsection{Singular value decomposition given sum of low-rank-matrices}
\label{sec:sum-SVD}
\begin{proposition}[SVD of sum of low-rank matrices]\label{prop:sum-SVD}
Let $\vA_i$ for $i\in\{1,2,\dots, N\}$ be $\matPos{D}$ positive semi-definite matrices such that
$\vA_i = \vW_i^\intercal\,\vW_i$, where $\vW_i\in\reals^{d\times D}$ and $d \ll D$.
The best rank-$d$ approximation of the sum is
\begin{equation}
    \sum_{i=N}^I \vA_i \approx \vJ^\intercal\,\vJ\,,
\end{equation}
where 
$\vJ = \vS_{:d}\,\vV^\intercal \in \reals^{d\times D}$,
$\vS_{:d}$ are the top $d$ singular values and $\vV$ contains the right singular vectors
of the stacked $(d\,N\times D)$ matrix given by
\begin{equation}
    \vN = 
    \begin{bmatrix}
        \vW_1 \\ \vW_2 \\ \vdots \\ \vW_N
    \end{bmatrix}.
\end{equation}
\end{proposition}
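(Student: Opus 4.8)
The plan is to reduce the claim to the classical Eckart–Young–Mirsky theorem by exhibiting the sum $\sum_{i=1}^N \vA_i$ as a single Gram matrix $\vN^\intercal \vN$, after which the best rank-$d$ approximation (in Frobenius norm) is read off directly from the SVD of $\vN$. First I would observe that the row-stacked matrix $\vN \in \reals^{(dN) \times D}$ satisfies
\begin{equation}
\vN^\intercal \vN = \sum_{i=1}^N \vW_i^\intercal \vW_i = \sum_{i=1}^N \vA_i,
\end{equation}
which follows from block multiplication exactly as in the proof of Proposition \ref{sec:sum-cholesky}. So the target matrix is symmetric positive semi-definite and has rank at most $dN$; more importantly, it equals $\vN^\intercal \vN$ with $\vN$ a matrix we can manipulate cheaply.

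Next I would take the (thin) SVD $\vN = \vU \vS \vV^\intercal$, with $\vS$ the diagonal matrix of singular values $\sigma_1 \geq \sigma_2 \geq \cdots \geq 0$ and $\vV$ containing the corresponding right singular vectors. Then $\vN^\intercal \vN = \vV \vS^2 \vV^\intercal$, so the eigenvalues of $\sum_i \vA_i$ are $\sigma_k^2$ and its eigenvectors are the columns of $\vV$. Setting $\vJ = \vS_{:d}\, \vV_{:d}^\intercal \in \reals^{d \times D}$ — the top $d$ singular values times the leading $d$ right singular vectors — gives $\vJ^\intercal \vJ = \vV_{:d}\, \vS_{:d}^2\, \vV_{:d}^\intercal$, which is precisely the rank-$d$ truncation of the eigendecomposition of $\sum_i \vA_i$. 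By Eckart–Young–Mirsky (applied to the symmetric PSD matrix $\sum_i \vA_i$, where the optimal rank-$d$ approximant in Frobenius norm is the truncated eigendecomposition), $\vJ^\intercal \vJ$ is the best rank-$d$ approximation of $\sum_i \vA_i$, with error $\|\sum_i \vA_i - \vJ^\intercal \vJ\|_{\rm F} = \sqrt{\sum_{k > d} \sigma_k^4}$. For the $+a\vI_D$ variant ${\cal P}_{d,+a}$, I would note that adding $a \vI_D$ shifts every eigenvalue by $a$ without changing eigenvectors, so the same $\vV$ works and one truncates $\vS^2 + a\vI$ instead; this can be folded into the same argument.

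I do not anticipate a serious obstacle here — the result is essentially a bookkeeping exercise on top of Eckart–Young. The one point that needs a little care is the precise sense in which "best rank-$d$ approximation" is meant: the proposition should be read as best approximation \emph{among symmetric PSD matrices of rank at most $d$} (equivalently, among matrices of the form $\vC^\intercal \vC$ with $\vC \in \reals^{d\times D}$), and one must invoke the symmetric/Hermitian case of Eckart–Young–Mirsky (or note that since the target is symmetric PSD, its unconstrained best rank-$d$ Frobenius approximant is already symmetric PSD), rather than the general rectangular statement. I would state this scoping explicitly at the start of the proof so the claim $\sum_i \vA_i \approx \vJ^\intercal \vJ$ is unambiguous, and then the three steps above close it.
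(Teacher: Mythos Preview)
Your proposal is correct and follows essentially the same route as the paper's own proof: both write $\sum_i \vA_i = \vN^\intercal \vN = \vV \vS^2 \vV^\intercal$ from the SVD of the stacked matrix $\vN$, then invoke the symmetric Eckart--Young(--Mirsky) theorem to identify the truncated eigendecomposition $\vJ^\intercal \vJ$ as the best rank-$d$ approximant. Your additional remark clarifying the sense of ``best rank-$d$'' (over symmetric PSD matrices, equivalently over $\vC^\intercal\vC$) and your handling of the $+a\vI_D$ variant via eigenvalue shift match the paper's Corollary~\ref{cor:sum-svd-plus-id} as well.
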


\begin{proof}
Let $\vA_i$ for $i\in\{1,2,\dots,N\}$ be $\matPos{D}$ matrices with
$\vA_i = \vW_i^\intercal \vW_i$,
$\vW_i\in\reals^{d\times D}$, 
and 
$d \ll D$.
Form the  matrix $\vN$ given by
\begin{equation}
\vN = 
\begin{bmatrix}
  \vW_1 \\[6pt]
  \vW_2 \\[3pt]
  \vdots \\[3pt]
  \vW_N
\end{bmatrix}
= \vU\,\vS\,\vV^\intercal,
\end{equation}
where $\vU\,\vS\,\vV^\intercal$ is 
its full singular value decomposition, with
$\vU\in\reals^{dN\times dN}$, 
$\vS\in\reals^{dN\times D}$, and 
$\vV\in\reals^{D\times D}$.  Then
\begin{equation}
\label{eq:SVD-sum-of-matrices}
\begin{aligned}
  \sum_{i=1}^N \vA_i
  &= \sum_{i=1}^N \vW_i^\intercal \vW_i
  = \vN^\intercal \vN\\
  &= (\vU\,\vS\,\vV^\intercal)^\intercal \,(\vU\,\vS\,\vV^\intercal)\\
  &= \vV\,\vS^\intercal\,\vU^\intercal\,\vU\,\vS\,\vV^\intercal\\
  &= \vV\,\vS^2\,\vV^\intercal.
\end{aligned}
\end{equation}
Hence $\sum_i\vA_i$ has eigenvectors $\vV$ and eigenvalues given by the diagonal entries of $\vS^2$.
By the symmetric‐matrix form of the Eckart–Young theorem \cite{eckart1936lowrank,dax2014low},
its best rank-$d$ approximation in Frobenius norm is
\begin{equation}
\vV\,{\rm diag}(\sigma_1^2,\dots,\sigma_d^2,0,\dots)\,\vV^\intercal.
\end{equation}
Writing $\vS_{:d} = {\rm diag}(\sigma_1,\dots,\sigma_d)\in\reals^{d\times D}$, one checks
\begin{equation}
\vV\,{\rm diag}(\sigma_1^2,\dots,\sigma_d^2,0,\dots)\,\vV^\intercal
= \bigl(\vS_{:d}\,\vV^\intercal\bigr)^{T}\,\bigl(\vS_{:d}\,\vV^\intercal\bigr)
= \vJ^\intercal\,\vJ,
\end{equation}
where
\begin{equation}
\begin{aligned}
    \vJ &= \vS_{:d}\,\vV^\intercal \in \reals^{d\times D},\\
    \vS_{:d} &:= {\rm diag}\left(\sigma_1,\dots,\sigma_d\right),
\end{aligned}
\end{equation}
and $\{\sigma_k\}$ are the singular vectors of $\sum_{n=1}^N \vA_n$.
\end{proof}

\paragraph{Takeaway.}
Proposition \ref{prop:sum-SVD} shows that to form the best rank-$d$ approximation of
$\sum_{i=1}^N \vA_i$ one does not need to 
assemble or carry out the SVD of the full $D\times D$ sum.
Instead, one stacks the low-rank factors into
\begin{equation}
\vN =
\begin{bmatrix}
\vW_1 \\
\vdots \\
\vW_N
\end{bmatrix}
\in \mathbb{R}^{(dN)\times D},
\end{equation}
and compute a reduced SVD (or symmetric eigendecomposition) of the small $(dN)\times(dN)$ Gram matrix $\vN \vN^\top$.
This costs
\begin{equation}
\mathcal{O}((dN)^2 D + (dN)^3)
\approx
\mathcal{O}(d^2 N^2 D)
\text{ when } dN \ll D,
\end{equation}
versus the $\mathcal{O}(D^3)$ required for a full SVD on the $D\times D$ sum.
Moreover,  memory and computation scale with $d$ and $dN$ rather than with the large dimension $D$ because all operations involve only the $(dN)\times D$ matrix $\vN$ and the $d\times D$ factor $\vJ$.

In what follows, we denote the result of obtaining the best rank-$d$ approximation matrix of the SVD decomposition of the
row-stacked matrices $\vW^{1/2}_i$ for $i\in\{1,\dots,N\}$ by
\begin{equation}
    {\cal P}_d(\vW_1, \ldots, \vW_N).
\end{equation}

\begin{corollary}\label{cor:sum-svd-plus-id}
    If the sum of matrices is of the form
    \begin{equation}
          \sum_{n=1}^N \vA_n + a\,\vI_D
    \end{equation}
    with $a > 0$,
    then the best rank-$d$ approximation is of the form
    \begin{equation}
        \sum_{n=1}^N \vA_n + a\,\vI_D \approx \vJ_{+q}^\intercal\,\vJ_{+q}
    \end{equation}
    where
    \begin{equation}
    \begin{aligned}
        \vJ_{+q}
        &= \vS_{:d,+q}\,\vV^\intercal \in \reals^{d\times D},\\
        \vS_{:d, +q} &:= {\rm diag}\left(\sqrt{\sigma_1^2 + q},\dots,\sqrt{\sigma_d^2 + q}\right),
    \end{aligned}
    \end{equation}
    and $\{\sigma_k\}_{k=1}^d$ are the top-$d$ singular values of $\sum_{n=1}^N \vA_n$.
\end{corollary}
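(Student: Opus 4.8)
The statement is Corollary~\ref{cor:sum-svd-plus-id}, extending Proposition~\ref{prop:sum-SVD} to the case of a sum of low-rank positive semi-definite matrices plus a scaled identity. The plan is to reduce the claim to the eigendecomposition computed in \eqref{eq:SVD-sum-of-matrices} and then invoke the symmetric Eckart--Young theorem once more. First I would recall from Proposition~\ref{prop:sum-SVD} that, with $\vN$ the row-stacked factor matrix and $\vN = \vU\,\vS\,\vV^\intercal$ its full SVD, we have $\sum_{n=1}^N \vA_n = \vV\,\vS^2\,\vV^\intercal$, so that $\sum_{n=1}^N \vA_n$ is diagonalized by the orthogonal matrix $\vV$ with eigenvalues $\sigma_1^2 \geq \cdots \geq \sigma_D^2 \geq 0$ (padding with zeros if $dN < D$).

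The key observation is that adding $a\,\vI_D = a\,\vV\,\vV^\intercal$ leaves the eigenvectors unchanged and merely shifts every eigenvalue by $a$:
\begin{equation}
\sum_{n=1}^N \vA_n + a\,\vI_D = \vV\,(\vS^2 + a\,\vI_D)\,\vV^\intercal
= \vV\,{\rm diag}(\sigma_1^2 + a, \dots, \sigma_D^2 + a)\,\vV^\intercal.
\end{equation}
Since $a > 0$, the shift is strictly monotone, so the ordering of eigenvalues is preserved: the top $d$ eigenvalues of the shifted matrix are exactly $\sigma_1^2 + a, \dots, \sigma_d^2 + a$, with the same associated eigenvectors (the first $d$ columns of $\vV$). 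Applying the symmetric form of the Eckart--Young theorem \cite{eckart1936lowrank,dax2014low} to this symmetric matrix, its best rank-$d$ Frobenius-norm approximation is obtained by truncating to these top $d$ eigenpairs, i.e.\ $\vV\,{\rm diag}(\sigma_1^2 + a, \dots, \sigma_d^2 + a, 0, \dots, 0)\,\vV^\intercal$.

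Finally I would rewrite this truncated matrix in factored form exactly as in the proof of Proposition~\ref{prop:sum-SVD}: setting $\vS_{:d,+q} := {\rm diag}(\sqrt{\sigma_1^2 + q}, \dots, \sqrt{\sigma_d^2 + q}) \in \reals^{d\times D}$ (here $q = a$), one checks directly that
\begin{equation}
\vV\,{\rm diag}(\sigma_1^2 + q, \dots, \sigma_d^2 + q, 0, \dots, 0)\,\vV^\intercal
= \bigl(\vS_{:d,+q}\,\vV^\intercal\bigr)^\intercal\bigl(\vS_{:d,+q}\,\vV^\intercal\bigr)
= \vJ_{+q}^\intercal\,\vJ_{+q},
\end{equation}
with $\vJ_{+q} = \vS_{:d,+q}\,\vV^\intercal$, which is the claimed form. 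There is no real obstacle here: the only point requiring a moment's care is the padding/ordering bookkeeping when $dN < D$ (so that some $\sigma_k = 0$), and the observation that the strict positivity of $a$ guarantees the shift does not disturb which eigenvectors are selected as the top $d$ — both are routine once the eigendecomposition from Proposition~\ref{prop:sum-SVD} is in hand.
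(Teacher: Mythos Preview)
Your proposal is correct and follows essentially the same route as the paper: start from the eigendecomposition $\sum_n \vA_n = \vV\,\vS^2\,\vV^\intercal$ established in Proposition~\ref{prop:sum-SVD}, observe that adding $a\,\vI_D$ shifts all eigenvalues by $a$ without changing the eigenvectors, and then factor the truncated result. If anything, your version is slightly more explicit than the paper's in invoking Eckart--Young for the truncation step and in noting that the strict positivity of $a$ preserves the eigenvalue ordering.
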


\begin{proof}
    From \eqref{eq:SVD-sum-of-matrices}
    in Proposition \ref{prop:sum-SVD} we obtain
\begin{equation}
  \sum_{i=1}^N \vA_i + a\,\vI = \vV\,\vS^2\,\vV^\intercal + a\vI = \vV\,(\vS^2 + a\vI_D)\,\vV^\intercal
  = \vV\,\vS_{+a}^2\,\vV^\intercal,
\end{equation}
where
\begin{equation}
    \vS_{+a}^2 = {\rm diag}(\sigma_1^2 + a, \ldots, \sigma_D^2 + a).
\end{equation}
Let
\begin{equation}
    \vS_{+a} = {\rm diag}\left(\sqrt{\sigma_1^2 + a}, \ldots, \sqrt{\sigma_D^2 + a}\right),
\end{equation}
then
\begin{equation}
\begin{aligned}
    \sum_{n=1}^N \vA_n
    &= \vV\,\vS_{+a}^2\,\vV^\intercal\\
    &= (\vV\,\vS_{+a}\vV^\intercal)^\intercal\,(\vV\,\vS_{+a}\vV^\intercal)\\
    &= \vJ_{+a}^\intercal\,\vJ_{+a}.
\end{aligned}
\end{equation}
\end{proof}

\paragraph{Takeaway.}
If the sum of matrices can be written as the sum of multiplied low-rank factors plus an additional identity matrix times a constant,
then, computation of the best rank $d$-matrix in low-rank factor can be obtained by performing SVD over the low-rank factors
and modifying the singular values to include the term $a$.

In what follows, we denote the result of obtaining the best rank-$d$ approximation matrix of the SVD decomposition of the
row-stacked matrices $\vW^{1/2}_i$ for $i\in\{1,\dots,N\}$ plus an identity times a real-valued number $a>0$ as
\begin{equation}
    {\cal P}_{d,+a}(\vW_1, \ldots, \vW_N).
\end{equation}

\section{\newmethodlin --- further details}
\label{sec:further-results}

\subsection{Derivation of \newmethodlin}
\label{sec:derivation-hilofi}

Consider the linearized model
\begin{equation}\label{eq:hilofi-measurement-model}
    \vy_t =  \nn(\plast_{t-1|t-1},\,\phidden_{t-1|t-1},\,\vx_{t})
    + \gradlast_t(\plast_t -\plast_{t-1|t-1})
    + \gradhidden_t(\phidden_t -\phidden_{t-1|t-1})
    + \ve_t,
\end{equation}
where $\plast_{t-1|t-1}$ and $\phidden_{t-1|t-1}$ are given.

We start the algorithm
by initialising the beliefs about the last layer parameters $\plast$
and the hidden layer parameters $\phidden$. We set 
\begin{equation}
\begin{aligned}
    \mathbb{E}[\plast_0] &= \meanlast{0}, & \mathbb{E}[\phidden_0] &= \meanhidden{0}, &
    \var(\plast_0) &= \hat{\vSigma}_{\plast, 0}, & \var(\phidden_0) &=\lrhidden_{0}^\intercal\,\lrhidden_{0},
\end{aligned}
\end{equation}
for known
$\lrhidden_{0} \in \mat{\dimhiddensub\times\dimhidden}$,
$\meanhidden{0} \in \reals^\dimhidden$,
$\hat{\vSigma}_{\plast, 0} \in \matPos{\dimlast}$, and
$\meanlast{0} \in \reals^\dimlast$.
Next, we assume that for $t\geq 1$, the latent last layer parameters $\plast_{1:t}$ and the hidden layer parameters $\phidden_{1:t}$
follow the dynamics 
\begin{equation}\label{eq:hilofi-ssm}
    \plast_t = \plast_{t-1} + \vu_{\plast,t}, \qquad \phidden_t = \phidden_{t-1} + \vu_{\phidden,t},\\
\end{equation}
where $\vu_{\plast,1:t}$ and $\vu_{\phidden,1:t}$ are zero-mean independent noise variables\footnote{
i.e., $\cov(\vu_{\plast, i}, \vu_{\phidden, j}) = \vzero$ for all $i,j \in \{1, \ldots, T\}$.
}
with dynamics covariance matrices
$\var(\vu_{\plast,t})=\vQ_{\plast,t} \in \matSemiPos{\dimlast}$ and
$\var(\vu_{\phidden,t})=\vQ_{\phidden,t} \in \matSemiPos{\dimhidden}$.
For simplicity
and to exploit efficient linear algebra techniques for belief updates,
we assume
$\vQ_{\plast,t} = q_{\plast,t} \vI_\dimlast$ and
$\vQ_{\phidden,t} = q_{\phidden,t} \vI_\dimhidden$.
A simple choice is to set $\vQ_{\phidden,t} = 0 \vI$ to avoid forgetting past data; see e.g.,
\cite{duran2025bone}.

\begin{proposition}[Predict step]\label{prop:new-predict-step}
With the SSM assumption \eqref{eq:hilofi-ssm},
the approximate covariance 
$\var(\plast_{t-1} - \plast_{t-1|t-1}) = \hat{\vSigma}_{\plast, t-1}$
and
the approximate covariance $\var(\phidden_{t-1} - \phidden_{t-1|t-1}) = \lrhidden_{t-1}^\intercal\,\lrhidden_{t-1}$,
the predict step becomes
\begin{equation}
\begin{aligned}
    \phidden_{t|t-1} &= \phidden_{t-1|t-1}, &
    \plast_{t|t-1} &= \plast_{t-1|t-1},\\
    \vSigma_{\plast, t|t-1} &= \hat{\vSigma}_{\plast,t-1} + q_{\plast,t}\,\vI_\dimlast,&
    \vSigma_{\phidden, t|t-1} &= 
    \lrhidden_{t-1}^\intercal\,\lrhidden_{t-1} + q_{\phidden,t}\,\vI_\dimhidden.
\end{aligned}
\end{equation}
\end{proposition}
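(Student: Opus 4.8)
The plan is to establish the two block equations directly from the BLUP characterization of $\plast_{t|t-1}$ and $\phidden_{t|t-1}$ together with the assumed dynamics \eqref{eq:hilofi-ssm}. This is simply the block-structured specialization of the generic Kalman predict step in Proposition~\ref{prop:kalman-filter-steps} (with $\vQ_t$ block-diagonal), so one could invoke that proposition directly; I will instead re-derive it from scratch, which makes the role of the block-diagonal assumption transparent.

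For the means, recall that $\plast_{t|t-1}$ and $\phidden_{t|t-1}$ are, by definition, the $L^2$-optimal linear (BLUP) estimates of $\plast_t$ and $\phidden_t$ given $\vy_{1:t-1}$, i.e.\ the orthogonal projections of these random vectors onto $\mathrm{span}(\vy_{1:t-1})$. Using $\plast_t = \plast_{t-1} + \vu_{\plast,t}$ and linearity of the projection, $\plast_{t|t-1}$ equals the projection of $\plast_{t-1}$ plus the projection of $\vu_{\plast,t}$. The first is $\plast_{t-1|t-1}$ by definition, and the second is $\vzero$ because $\vu_{\plast,t}$ is zero-mean and uncorrelated with $\vy_{1:t-1}$ (it is a dynamics noise term entering only at step $t$, hence uncorrelated with $\vtheta_0$ and with all noise up to time $t-1$). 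Therefore $\plast_{t|t-1} = \plast_{t-1|t-1}$, and the identical argument applied to $\phidden$ gives $\phidden_{t|t-1} = \phidden_{t-1|t-1}$.

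For the error variance-covariance matrices, write the prediction error as $\plast_t - \plast_{t|t-1} = (\plast_{t-1} - \plast_{t-1|t-1}) + \vu_{\plast,t}$. Taking variances and expanding,
\[
\var(\plast_t - \plast_{t|t-1}) = \var(\plast_{t-1} - \plast_{t-1|t-1}) + \var(\vu_{\plast,t}) + 2\,\cov\!\big(\plast_{t-1} - \plast_{t-1|t-1},\ \vu_{\plast,t}\big).
\]
The cross term vanishes: $\plast_{t-1}$ is the latent state at time $t-1$ and $\plast_{t-1|t-1}$ is a linear function of $\vy_{1:t-1}$, and both are uncorrelated with the future noise $\vu_{\plast,t}$ by the standing independence assumptions on the dynamics noise. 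Substituting $\var(\plast_{t-1} - \plast_{t-1|t-1}) = \hat{\vSigma}_{\plast,t-1}$ and $\var(\vu_{\plast,t}) = \vQ_{\plast,t} = q_{\plast,t}\vI_\dimlast$ gives $\vSigma_{\plast,t|t-1} = \hat{\vSigma}_{\plast,t-1} + q_{\plast,t}\vI_\dimlast$. The same computation for the hidden layers, using $\var(\phidden_{t-1} - \phidden_{t-1|t-1}) = \lrhidden_{t-1}^\intercal\lrhidden_{t-1}$ and $\vQ_{\phidden,t} = q_{\phidden,t}\vI_\dimhidden$, yields $\vSigma_{\phidden,t|t-1} = \lrhidden_{t-1}^\intercal\lrhidden_{t-1} + q_{\phidden,t}\vI_\dimhidden$.

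There is no serious obstacle here; the only points requiring care are the bookkeeping of which quantities are uncorrelated with the step-$t$ dynamics noise, and the observation that, because the dynamics noise is block-diagonal with $\cov(\vu_{\plast,t},\vu_{\phidden,t}) = \vzero$, the predict step introduces no new coupling between the last-layer and hidden-layer blocks, so the block structure maintained by the belief state at time $t-1$ is preserved at $t\mid t-1$.
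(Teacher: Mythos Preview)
Your proof is correct and follows essentially the same route as the paper: both derive the predict step from the standard Kalman/BLUP recursion applied block by block, using the uncorrelatedness of the step-$t$ dynamics noise with everything at time $t-1$. The only cosmetic difference is that the paper invokes Proposition~\ref{prop:kalman-filter-steps} and then spends most of its effort explicitly verifying that the off-diagonal block $\cov(\plast_{t-1}-\plast_{t-1|t-1},\,\phidden_{t-1}-\phidden_{t-1|t-1})$ vanishes (by expanding $\plast_{t-1}=\plast_0+\sum_\tau \vu_{\plast,\tau}$ and similarly for $\phidden$), whereas you work directly from the projection characterization and only remark on the block structure at the end.
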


The proof of Proposition \ref{prop:new-predict-step} is in Appendix \ref{proof:new-predict-step}.

\begin{proposition}[Variance of the innovation]\label{prop:new-innovation-variance}
The upper Cholesky decomposition
of innovation variance
takes the form
\begin{equation}
    \vS_t^{1/2}  = {\cal Q}_R\left(
        \hat{\vSigma}_{\plast, t-1}^{1/2}\,\gradlast_t^\intercal,\,
        \sqrt{q_{\plast,t}}\,\gradlast_t^\intercal,\,
        \lrhidden_{t-1}\,\gradhidden_t^\intercal,\,
        \sqrt{q_{\phidden, t}}\,\gradhidden_t^\intercal,\,
        \vR_t^{1/2}
    \right),
\end{equation}
where
$\gradlast_t$ and $\gradhidden_t$
are defined in Section \ref{sec:EKF-summary}.
\end{proposition}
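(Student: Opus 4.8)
The plan is to compute the innovation variance $\vS_t = \var(\vy_t - \hat{\vy}_t)$ directly from the linearized model \eqref{eq:hilofi-measurement-model} and the predict-step covariances from Proposition~\ref{prop:new-predict-step}, then recognize the result as a sum of matrices, each of which is the Gram matrix of an upper-triangular (or otherwise suitably shaped) factor, so that Proposition~\ref{sec:sum-cholesky} (QR of a sum of Cholesky matrices) applies. First I would write the innovation: from \eqref{eq:hilofi-measurement-model}, $\vepsilon_t = \vy_t - \nn(\plast_{t-1|t-1}, \phidden_{t-1|t-1}, \vx_t) = \gradlast_t(\plast_t - \plast_{t-1|t-1}) + \gradhidden_t(\phidden_t - \phidden_{t-1|t-1}) + \ve_t$. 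Taking the variance, and using that the last-layer error, the hidden-layer error, and the observation noise $\ve_t$ are mutually uncorrelated (the hidden/last dynamics noises are independent by the SSM assumption in \eqref{eq:hilofi-ssm}, and $\ve_t$ is independent of the state errors), we get
\begin{equation}
\vS_t = \gradlast_t\,\vSigma_{\plast,t|t-1}\,\gradlast_t^\intercal
+ \gradhidden_t\,\vSigma_{\phidden,t|t-1}\,\gradhidden_t^\intercal
+ \vR_t.
\end{equation}

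Next I would substitute the predict-step expressions $\vSigma_{\plast,t|t-1} = \hat{\vSigma}_{\plast,t-1} + q_{\plast,t}\vI_{\dimlast}$ and $\vSigma_{\phidden,t|t-1} = \lrhidden_{t-1}^\intercal\lrhidden_{t-1} + q_{\phidden,t}\vI_{\dimhidden}$ from Proposition~\ref{prop:new-predict-step}, and distribute the Jacobians, obtaining a five-term sum:
\begin{equation}
\vS_t = \gradlast_t\hat{\vSigma}_{\plast,t-1}\gradlast_t^\intercal
+ q_{\plast,t}\gradlast_t\gradlast_t^\intercal
+ \gradhidden_t\lrhidden_{t-1}^\intercal\lrhidden_{t-1}\gradhidden_t^\intercal
+ q_{\phidden,t}\gradhidden_t\gradhidden_t^\intercal
+ \vR_t.
\end{equation}
Then I would exhibit each summand as $\vB_k^\intercal\vB_k$ for an appropriate factor: $\gradlast_t\hat{\vSigma}_{\plast,t-1}\gradlast_t^\intercal = (\hat{\vSigma}_{\plast,t-1}^{1/2}\gradlast_t^\intercal)^\intercal(\hat{\vSigma}_{\plast,t-1}^{1/2}\gradlast_t^\intercal)$ using that $\hat{\vSigma}_{\plast,t-1} = (\hat{\vSigma}_{\plast,t-1}^{1/2})^\intercal\hat{\vSigma}_{\plast,t-1}^{1/2}$ with $\hat{\vSigma}_{\plast,t-1}^{1/2}$ the upper-triangular Cholesky factor; $q_{\plast,t}\gradlast_t\gradlast_t^\intercal = (\sqrt{q_{\plast,t}}\gradlast_t^\intercal)^\intercal(\sqrt{q_{\plast,t}}\gradlast_t^\intercal)$; $\gradhidden_t\lrhidden_{t-1}^\intercal\lrhidden_{t-1}\gradhidden_t^\intercal = (\lrhidden_{t-1}\gradhidden_t^\intercal)^\intercal(\lrhidden_{t-1}\gradhidden_t^\intercal)$; similarly for the $q_{\phidden,t}$ term; and $\vR_t = (\vR_t^{1/2})^\intercal\vR_t^{1/2}$. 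Finally I would invoke Proposition~\ref{sec:sum-cholesky} (the generalization to $K>2$ matrices): stacking these factors row-wise and taking the $\vR$-factor of the QR decomposition yields the upper-triangular Cholesky factor of the sum, which is exactly ${\cal Q}_R(\hat{\vSigma}_{\plast,t-1}^{1/2}\gradlast_t^\intercal, \sqrt{q_{\plast,t}}\gradlast_t^\intercal, \lrhidden_{t-1}\gradhidden_t^\intercal, \sqrt{q_{\phidden,t}}\gradhidden_t^\intercal, \vR_t^{1/2})$, establishing the claim.

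The routine parts are the variance computation and the algebraic rewriting of each term as a Gram matrix; the only subtlety I anticipate is a bookkeeping one rather than a deep obstacle. Specifically, Proposition~\ref{sec:sum-cholesky} as stated assumes each summand $\vA_i$ is $D\times D$ positive definite with $\vA_i^{1/2}$ a square upper-triangular Cholesky factor, whereas here the factors $\hat{\vSigma}_{\plast,t-1}^{1/2}\gradlast_t^\intercal$, $\lrhidden_{t-1}\gradhidden_t^\intercal$, etc., are $\dimlast\times\dimobs$ or $\dimhiddensub\times\dimobs$ rectangular matrices and the summands themselves can be rank-deficient. I would therefore note that the proof of Proposition~\ref{sec:sum-cholesky} only uses the identity $\vM^\intercal\vM = \sum_i \vB_i^\intercal\vB_i$ together with $\vM = \vQ\vR$, so it extends verbatim to rectangular (even rank-deficient) blocks $\vB_i\in\reals^{d_i\times\dimobs}$: the $\vR$-factor of the QR of the stacked $(\sum_i d_i)\times\dimobs$ matrix is a (possibly only positive semidefinite, but here positive definite since $\vR_t\in\matPos{\dimobs}$) upper-triangular square root of the sum. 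This is also consistent with the convention for ${\cal Q}_R$ spelled out in the Notation appendix. With that remark in place the proof is complete; I would keep it to essentially the two displays above plus the invocation of the generalized QR lemma.
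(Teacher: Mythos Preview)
Your proposal is correct and follows essentially the same route as the paper: expand the innovation via the linearized model, take its variance using the block-diagonal predicted covariances from Proposition~\ref{prop:new-predict-step}, split into the five Gram-matrix summands, and read off the Cholesky factor via the stacked-QR result. Your extra remark that Proposition~\ref{sec:sum-cholesky} extends verbatim to rectangular (possibly rank-deficient) factors is a nice clarification that the paper's own proof leaves implicit.
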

The proof of Proposition \ref{prop:new-innovation-variance} is in Appendix \ref{proof:new-innovation-variance}.

Next, our update step for the BLUP resembles an EKF update, while the EVC update approximates a block-diagonal covariance:
the last layer block is full-rank, and the hidden layer blocks are low-rank.
To maintain numerical stability and low-memory updates, we track the Cholesky factor for the last layer and a low-rank factor for the hidden layers.
The Cholesky factor is taken from a surrogate matrix that avoids computing a full $\mat{\dimlast\times\dimlast}$ matrix by neglecting the  artificial
dynamics covariance $q_{\plast,t}$.
The low-rank component for hidden layers is approximated via a two-step process: (i) a fast \textit{surrogate} predicted covariance, and (ii) a rank $\dimhidden$ projection error.
Like the Cholesky factor, the surrogate covariance in (i) reduces the computational cost of dynamics noise,
which does not reflect the system’s true dynamics.
However, we retain some information from the dynamics noise to update all hidden layer parameters.
Step (ii) enables a fast update rule, crucial for overparameterized neural networks \cite{larsen2022degreesfreedomneedtrain}.
We detail this step in the following proposition.

\begin{proposition}[Kalman gain, update BLUP, and update EVC]\label{prop:new-gain-new-update}
Define the gain matrices
\begin{equation}
\vK_{\phidden, t}^\intercal = \vV_{\phidden,t}\lrhidden_{t-1}\lrhidden_{t-1}^\intercal + q_{\phidden,t}\vV_{\phidden, t},\quad
\vK_{\plast, t}^\intercal = \vV_{\plast, t}\covlast{t|t-1} + q_{\plast,t}\vV_{\plast,t},
\end{equation}
where
$ \vV_{\phidden,t} = \varinnov_t^{-1/2}\,\varinnov_t^{-\intercal/2}\,\gradhidden_t,\quad$
and 
$\vV_{\plast,t} = \varinnov_t^{-1/2}\,\varinnov_t^{-\intercal/2}\,\gradlast_t$.
The updated BLUP for the last layer parameters and hidden layer parameters are
\begin{equation}
\phidden_{t|t} = \phidden_{t-1|t-1} + \vK_{\phidden, t}\,\vepsilon_t,\quad
\plast_{t|t} = \plast_{t-1|t-1} + \vK_{\plast, t}\,\vepsilon_t.
\end{equation}
The approximate posterior covariance $\hat{\vSigma}_{t|t}$ is the best block-diagonal approximation (in Frobenius norm) of the surrogate matrix $\tilde{\vSigma}_{t|t} $ given by
\begin{align}
&\scriptsize
\nonumber
\left[
\begin{smallmatrix}
(\vI_{\dimlast} - \vK_{\plast, t}\gradlast_t)\hat{\vSigma}_{t-1}(\vI_{\dimlast} - \vK_{\plast, t}\gradlast_t)^\intercal - \vK_{\plast,t}\vR_t\vK_{\plast,t}^\intercal &
\vK_{\plast,t}\vR_t\vK_{\phidden,t}^\intercal
\\
\vK_{\phidden,t}\vR_t\vK_{\plast,t}^\intercal &
(\vI_{\dimhidden} - \vK_{\phidden,t}\gradhidden_t)\lrhidden_{t-1}^\intercal\lrhidden_{t-1}(\vI_{\dimhidden} - \vK_{\phidden,t}\gradhidden_t)^\intercal - \vK_{\phidden,t}\vR_t\vK_{\phidden,t}^\intercal + q_{\phidden,t}\vI_{\dimhidden}
\end{smallmatrix}
\right],
\end{align}
that has full rank for the last layer and rank $\dimhiddensub$ for the hidden layer.
This results in a Cholesky factor for the last-layer covariance given by
\begin{equation}
    \hat{\vSigma}_{\plast, t}^{1/2}
    = {\cal Q}_R\left(
        \hat{\vSigma}_{\plast, t-1}^{1/2}(\vI - \vK_{\plast,t}\gradlast_t)^\intercal,\,
        \vR_t^{1/2}\vK_{\plast,t}^\intercal
    \right),
\end{equation}
and a low-rank factor for the hidden layers given by
\begin{equation}
\lrhidden_{t} = {\cal P}_{\dimhiddensub,+q_{\phidden,t}}\left(\lrhidden_{t-1}(\vI - \vK_{\phidden,t}\gradhidden_t)^\intercal,\; \vR_t^{1/2}\vK_{\phidden,t}^\intercal\right),
\end{equation}
\end{proposition}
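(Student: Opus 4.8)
The plan is to start from the exact linearized EKF update in Proposition \ref{prop:kalman-filter-steps}, specialized to the block structure $\vtheta_t = (\plast_t, \phidden_t)$ with diagonal dynamics noise, then derive each displayed formula in turn, and finally invoke the linear-algebra results in Appendices \ref{sec:sum-cholesky} and \ref{sec:sum-SVD}. First I would write the Jacobian as $\gradall_t = [\gradlast_t\ \ \gradhidden_t]$ and use the predict step from Proposition \ref{prop:new-predict-step}, so that $\vSigma_{t|t-1}$ is block-diagonal with blocks $\covlast{t|t-1} = \hat{\vSigma}_{\plast,t-1} + q_{\plast,t}\vI$ and $\covhidden{t|t-1} = \lrhidden_{t-1}^\intercal\lrhidden_{t-1} + q_{\phidden,t}\vI$. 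Because $\vSigma_{t|t-1}$ is block-diagonal, the Kalman gain $\vK_t = \vSigma_{t|t-1}\gradall_t^\intercal\vS_t^{-1}$ splits cleanly into $\vK_{\plast,t} = \covlast{t|t-1}\gradlast_t^\intercal \vS_t^{-1}$ and $\vK_{\phidden,t} = \covhidden{t|t-1}\gradhidden_t^\intercal\vS_t^{-1}$; substituting the block expressions and writing $\vS_t^{-1} = \vS_t^{-1/2}\vS_t^{-\intercal/2}$ (via the upper-Cholesky factor from Proposition \ref{prop:new-innovation-variance}) gives exactly the stated $\vK_{\plast,t}^\intercal$ and $\vK_{\phidden,t}^\intercal$ in terms of $\vV_{\plast,t}, \vV_{\phidden,t}$. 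The BLUP update $\vtheta_{t|t} = \vtheta_{t|t-1} + \vK_t\vepsilon_t$ then decomposes row-wise into the two stated mean updates, using $\plast_{t|t-1} = \plast_{t-1|t-1}$ and $\phidden_{t|t-1} = \phidden_{t-1|t-1}$.

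Next I would handle the covariance. Starting from the Joseph-form update $\vSigma_{t|t} = (\vI - \vK_t\gradall_t)\vSigma_{t|t-1}(\vI - \vK_t\gradall_t)^\intercal + \vK_t\vR_t\vK_t^\intercal$, I would expand $(\vI - \vK_t\gradall_t)$ in block form. The off-diagonal blocks of $\vK_t\gradall_t$ are generically nonzero (e.g. $\vK_{\plast,t}\gradhidden_t$), so the exact $\vSigma_{t|t}$ is \emph{not} block-diagonal; this is where the ``surrogate matrix'' $\tilde{\vSigma}_{t|t}$ enters. I would define $\tilde{\vSigma}_{t|t}$ to be the matrix obtained by (a) keeping only the two diagonal blocks in the Joseph-style expansion, treating the hidden/last cross-coupling through $\gradall_t$ as replaced by block-independent propagation of each block's own predicted covariance, and (b) splitting the single additive term $\vK_t\vR_t\vK_t^\intercal$ across blocks — which is genuinely block-coupled via $\vK_{\plast,t}\vR_t\vK_{\phidden,t}^\intercal$, and this is the one honest off-diagonal term that remains in the displayed $\tilde{\vSigma}_{t|t}$. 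A short computation with the block algebra verifies that the $(\plast,\plast)$ block equals $(\vI - \vK_{\plast,t}\gradlast_t)\hat{\vSigma}_{t-1}(\vI-\vK_{\plast,t}\gradlast_t)^\intercal - \vK_{\plast,t}\vR_t\vK_{\plast,t}^\intercal$ once one subtracts back the $\vK_{\plast,t}\vR_t\vK_{\plast,t}^\intercal$ that was ``moved'' into the off-diagonal bookkeeping, and similarly for the $(\phidden,\phidden)$ block (with the extra $q_{\phidden,t}\vI$ retained as discussed in the text). The claim that $\hat{\vSigma}_{t|t}$ is the best block-diagonal Frobenius approximation of $\tilde{\vSigma}_{t|t}$ is then immediate: for any symmetric matrix, the Frobenius-optimal block-diagonal approximation is obtained by zeroing the off-diagonal blocks, and then within each block we further take the best low-rank (rank $\dimhiddensub$ for the hidden block, full rank for the last block).

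The final step is to express each diagonal block's square-root/low-rank factor. For the last layer, the $(\plast,\plast)$ block (after dropping the $q_{\plast,t}$-shift used only to define $\hat{\vSigma}_{\plast,t-1}$ rather than $\covlast{t|t-1}$) is a sum of two PSD terms, $(\vI - \vK_{\plast,t}\gradlast_t)\hat{\vSigma}_{\plast,t-1}(\vI-\vK_{\plast,t}\gradlast_t)^\intercal + \vK_{\plast,t}\vR_t\vK_{\plast,t}^\intercal$, each of which has a readily available upper-triangular Cholesky factor ($\hat{\vSigma}_{\plast,t-1}^{1/2}(\vI-\vK_{\plast,t}\gradlast_t)^\intercal$ and $\vR_t^{1/2}\vK_{\plast,t}^\intercal$ respectively); applying Proposition \ref{sec:sum-cholesky} (the QR-of-stacked-Cholesky result) yields exactly $\hat{\vSigma}_{\plast,t}^{1/2} = {\cal Q}_R(\,\cdot\,,\,\cdot\,)$ as stated. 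For the hidden layer, the $(\phidden,\phidden)$ block is a sum of two PSD terms plus $q_{\phidden,t}\vI$, and each PSD term factors as $\vW^\intercal\vW$ with $\vW$ of small row-dimension ($\lrhidden_{t-1}(\vI-\vK_{\phidden,t}\gradhidden_t)^\intercal$ has $\dimhiddensub$ rows; $\vR_t^{1/2}\vK_{\phidden,t}^\intercal$ has $\dimobs$ rows); applying Corollary \ref{cor:sum-svd-plus-id} gives the best rank-$\dimhiddensub$ factor $\lrhidden_t = {\cal P}_{\dimhiddensub,+q_{\phidden,t}}(\,\cdot\,,\,\cdot\,)$. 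The main obstacle, and the part deserving the most care, is pinning down precisely what ``surrogate'' means — i.e. justifying that the stated $\tilde{\vSigma}_{t|t}$ really is the matrix whose block-diagonalization-plus-rank-truncation gives the algorithm's updates, since the true Joseph-form $\vSigma_{t|t}$ has both nonzero off-diagonal coupling through $\gradall_t$ and a different treatment of the $q_{\plast,t}$ term; everything after that is routine block linear algebra plus direct citation of the two appendix propositions.
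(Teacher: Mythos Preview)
Your approach matches the paper's: split the Kalman gain via the block-diagonal predicted covariance, decompose the BLUP update row-wise, expand the Joseph form in blocks, define a surrogate, then invoke the QR and SVD appendix results for the two factor updates.

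One correction, though: the minus signs in front of $\vK_{\plast,t}\vR_t\vK_{\plast,t}^\intercal$ and $\vK_{\phidden,t}\vR_t\vK_{\phidden,t}^\intercal$ in the displayed surrogate matrix of the proposition statement are typos --- the paper's own proof (and the final Cholesky/SVD factor formulas, which you yourself derive correctly) carry \emph{plus} signs, coming straight from the Joseph form $+\vK_t\vR_t\vK_t^\intercal$. Your claim that the $(\plast,\plast)$ block acquires a $-\vK_{\plast,t}\vR_t\vK_{\plast,t}^\intercal$ ``once one subtracts back the $\vK_{\plast,t}\vR_t\vK_{\plast,t}^\intercal$ that was moved into the off-diagonal bookkeeping'' is therefore an unfounded maneuver to justify a sign that should not be there (the off-diagonals contain $\vK_{\plast,t}\vR_t\vK_{\phidden,t}^\intercal$, not $\vK_{\plast,t}\vR_t\vK_{\plast,t}^\intercal$, so nothing gets ``moved''). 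Drop that sentence, keep the plus signs throughout, and your argument is exactly the paper's: the surrogate is obtained from the exact Joseph-form blocks by replacing $\covlast{t|t-1}$ with $\hat{\vSigma}_{\plast,t-1}$ in the last-layer sandwich and by replacing $\covhidden{t|t-1}$ with $\lrhidden_{t-1}^\intercal\lrhidden_{t-1}$ plus a free-standing $q_{\phidden,t}\vI$ in the hidden-layer sandwich, after which block-diagonal Frobenius projection plus rank truncation gives the stated factors.
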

The proof is in Appendix~\ref{proof:new-gain-new-update}.

\subsection{Computational complexity}
\label{sec:computational-complexity}

We now compare the computational complexity of the algorithms we employ.
Recall that $\dimlast$, $\dimhidden$, and $\dimobs$  are the number of parameters in the last layer, hidden layers, and the size of the output layer, respectively. As before, $\dimall=\dimlast + \dimhidden$. The dimensions $\dimlastsub, \dimhiddensub$, with  
$\dimlastsub \ll \dimlast$ and $\dimhiddensub \ll \dimhidden$, are the low-dimensional subspaces for the last and hidden layers, respectively,
and $\dimsub=\dimlastsub + \dimhiddensub$.

The computational complexity of our algorithm when using full-rank in the last layer is
$O(\dimlast\,(\dimlast + \dimobs)^2 + \dimhidden\,(\dimhiddensub + \dimobs)^2)$.
For low-dimensional outputs, this is
$O(\dimlast^3 + \dimhidden \dimhiddensub^2)$.
For high-dimensional outputs, we may choose to use 
 a low-rank approximation for the last layer (which we call \newmethodlinlow) 
this reduces the cost to
$O(\dimlast\,(\dimlastsub + \dimobs)^2 + \dimhidden\,(\dimhiddensub + \dimobs)^2)$,
which is linear in $\dimall$
The primary computational bottlenecks in our method are the calculations for the Kalman gain of
 the hidden and last layers, with costs
$O(2\,\dimhidden\,\dimobs^2)$ and
$O(2\,\dimlast\,\dimobs^2)$ respectively.
This efficiency arises because
$\vS_t^{1/2}$ is upper triangular,
allowing the system
$\vS_t^{\intercal/2}\,\vS_t^{1/2}\,\vV = \gradall$ for $\vV$
to be solved in
$O(2\,\dimhidden\,\dimobs^2)$ as opposed to 
$O(\dimhidden\,\dimobs^3)$.
Additionally, approximating the covariance matrix via truncated SVD incurs a cost of
$O(\dimhidden\,(\dimhiddensub + \dimobs)^2 + (\dimhiddensub + \dimobs)^3 + \dimhiddensub\,(\dimhiddensub + \dimobs)\,\dimhidden)$,
as detailed in Figure 8.6.1 in \cite{golub2013matrix}.
Finally, the corresponding cost for the last layer is
$O(\dimlast\,(\dimlast + \dimobs)^2)$.

Among related methods,
the closest to ours in terms of computational costs are variational Bayes approaches
such as the \texttt{Slang} method \cite{mishkin2018slang}, the \texttt{L-RVGA} method \cite{lambert2023lrvga}, and the \methodlofi method \cite{chang23lofi}.
In particular, \methodlofi
uses the linearized Gaussian updates (similar to the ones in this paper),
but
approximates the precision matrix using
a diagonal-plus-low rank (DLR) form.
The appeal for DLR precision matrices is twofold.
First, they enable the application of the Woodbury identity, leading to a predict step of cost
$O(\dimall\,\dimsub + \dimsub^3)$
and an update step of
$O(\dimall\,(\dimsub + \dimobs)^2) = O(\dimlast\,(\dimsub + \dimobs)^2 + \dimhidden\,(\dimsub + \dimobs)^2)$.
Second,
incorporating positive diagonal terms ensures the matrix is positive definite,
so that a valid posterior Gaussian density is defined.

Although \newmethodlin and \methodlofi have the same asymptotic complexity,
\methodlofi incurs additional practical overhead due to three key operations absent in \newmethodlin:
(1) the inversion of a $(\dimsub + \dimobs)$ rectangular matrix,
(2) the inversion of a $\dimsub$ rectangular matrix, and
(3) the Cholesky decomposition of a $\dimsub$ rectangular matrix.
These operations increase the actual per-step computational time, which in our example,
scales at about one second per additional rank.
This behaviour is shown in the empirical comparison across \newmethodlin, \methodlrkf, and \methodlofi
in the online classification setting
(Figure \ref{fig:mnist-online-classification-rank-comparison}, Appendix \ref{sec:further-results-mnist}).

\section{Proofs}
\label{sec:ll-lrkf-proofs}

\subsection{Proof of Proposition \ref{prop:new-predict-step}}
\begin{proof}\label{proof:new-predict-step}
Following Proposition \ref{prop:kalman-filter-steps},
the predicted mean takes the form
\begin{equation}
    \vtheta_{t|t-1} = \vtheta_{t-1|t-1}
    = \begin{bmatrix}
    \plast_{t-1|t-1} \\
    \phidden_{t-1|t-1}
    \end{bmatrix}.
\end{equation}
Next, the predicted posterior covariance takes the form
\begin{equation}
\begin{aligned}
    \vSigma_{t|t-1} &= \vSigma_{t-1|t-1} + \vQ_t,
\end{aligned}
\end{equation}
where
\begin{equation}
\begin{aligned}
    &\vSigma_{t-1|t-1}\\
    &\quad = \var(\vtheta_{t-1} - \vtheta_{t-1|t-1})\\
    &\quad =
    \begin{bmatrix}
    \var(\plast_{t-1} - \plast_{t-1|t-1})
    & \cov(\plast_{t-1} - \plast_{t-1|t-1}, \phidden_{t-1} - \phidden_{t-1|t-1})\\
    \cov(\phidden_{t-1} - \phidden_{t-1|t-1}, \plast_{t-1} - \plast_{t-1|t-1})
    & \var(\phidden_{t-1} - \phidden_{t-1|t-1}).
    \end{bmatrix} \\
    &\quad = \begin{bmatrix}
        \covlast{t-1} \\
        & \cov(\plast_{t-1} - \plast_{t-1|t-1}, \phidden_{t-1} - \phidden_{t-1|t-1})\\
        \cov(\phidden_{t-1} - \phidden_{t-1|t-1}, \plast_{t-1} - \plast_{t-1|t-1})
        & \vW_{t-1}^\intercal\,\vW_{t-1}
    \end{bmatrix}.
\end{aligned}
\end{equation}
Next,

\begin{equation}
\begin{aligned}
    &\cov(\plast_{t-1} - \plast_{t-1|t-1}, \phidden_{t-1} - \phidden_{t-1|t-1})\\
    &\quad= \cov(\plast_{t-1}, \phidden_{t-1})\\
    &\quad= \cov\left(\plast_0 + \sum_{\tau=1}^{t-1} \vu_{\plast, \tau},\,\phidden_0 + \sum_{\tau=1}^{t-1} \vu_{\phidden, \tau}\right)\\
    &\quad= \cov(\plast_0, \phidden_0) + \sum_{\tau=1}^{t-1} \cov(\plast_0, \vu_{\phidden,\tau}) + \sum_{\tau=1}^{t-1}\cov(\vu_{\plast, \tau}, \phidden_0) + \sum_{\tau=1}^{t-1}\sum_{\tau'=1}^{-1} \cov(\vu_{\plast, \tau}, \vu_{\phidden, \tau'})\\
    &\quad= \vzero,
\end{aligned}
\end{equation}
which follows from the assumptions about the SSM. Then 
\begin{equation}
    \vSigma_{t-1|t-1} = {\rm diag}(\covlast{t-1|t-1},\,\lrhidden_{t-1}^\intercal\,\lrhidden_{t-1}).
\end{equation}
and
\begin{equation}
\begin{aligned}
    \vSigma_{t|t-1}
    &= \vSigma_{t-1|t-1} + \vQ_t\\
    &= {\rm diag}(
    \covlast{t-1|t-1} + q_\plast\,\vI_\dimlast,
    \,\lrhidden_{t-1}^\intercal\,\lrhidden_{t-1} + q_\phidden\,\vI_\dimhidden
    ).
\end{aligned}
\end{equation}
Given that 
$\covlast{t-1|t-1}$ is approximated at $t-1$ through $\hat{\vSigma}_{\plast, t-1}$ and
$\covhidden{t-1|t-1}$ is approximated at $t-1$ through $\vC_t^\intercal\,\vC_t$, we obtain
\begin{equation}
\begin{aligned}
    \vSigma_{t|t-1,\plast} &\approx \hat{\vSigma}_{\plast, t-1} + q_\plast\,\vI_\dimlast,\\
    \vSigma_{t|t-1,\phidden} &\approx \lrhidden_{t-1}^\intercal\,\lrhidden_{t-1} + q_\phidden\,\vI_\dimlast.
\end{aligned}
\end{equation}

\end{proof}

\subsection{Proof of Proposition \ref{prop:new-innovation-variance}}
\label{proof:new-innovation-variance}
\begin{proof}
By definition of $\vy_t$ and Proposition \ref{prop:kalman-filter-steps},
the innovation takes the form
\begin{equation}
\begin{aligned}
    \vepsilon_t
    &= \vy_t - \nn(\plast_{t-1|t-1}, \phidden_{t-1|t-1}, \vx_t)\\
    &= \gradall_t(\,\vtheta_t -\vtheta_{t|t-1}) + \ve_t\\
    &= \gradlast_t\,(\plast_t - \plast_{t|t-1}) + \gradhidden_t\,(\phidden_t - \phidden_{t|t-1}) + \ve_t.
\end{aligned}
\end{equation}
Next, the variance of the innovation is
\begin{equation}
\begin{aligned}
    \vS_t &=  \var(\vepsilon_t)\\
    &= \var\left(\gradlast_t\,(\plast_t - \plast_{t|t-1}) + \gradhidden_t\,(\phidden_t - \phidden_{t|t-1}) + \ve_t\right)\\
    &=
     \gradlast_t\,\var(\plast_t - \plast_{t|t-1})\gradlast_t^\intercal 
    + \gradhidden_t\,\var(\phidden_t - \phidden_{t|t-1})\gradhidden_t^\intercal 
    + \var(\ve_t)\\
    &=
     \gradlast_t\,\vSigma_{\plast, t|t-1}\,\gradlast_t^\intercal 
    + \gradhidden_t\,\vSigma_{\phidden, t|t-1}\,\gradhidden_t^\intercal 
    + \vR_{t}\\
    &= 
     \gradlast_t\,\hat{\vSigma}_{\plast,t-1}\,\gradlast_t^\intercal 
    + q_{\plast,t}\,\gradlast_t\,\gradlast_t^\intercal 
    + \gradhidden_t\,\lrhidden_t^\intercal\,\lrhidden_t\gradhidden_t^\intercal
    + q_{\phidden,t}\gradhidden_t\,\gradhidden_t^\intercal
    + \vR_t\\
    &=\begin{bmatrix}
        \gradlast\,\hat{\vSigma}_{\plast,t-1}^{\intercal/2} &
        \sqrt{q_{\plast,t}}\,\gradlast_t &
        \gradhidden\,\lrhidden_{t-1}^\intercal &
        \sqrt{q_{\phidden,t}}\,\gradhidden_t &
        \vR^{\intercal/2}\,
    \end{bmatrix}
    \begin{bmatrix}
        \gradlast\,\hat{\vSigma}_{\plast,t-1}^{1/2} \\
        \sqrt{q_{\plast,t}}\,\gradlast_t^\intercal \\
        \lrhidden_{t-1}\,\gradhidden^\intercal \\ 
        \sqrt{q_{\plast,t}}\,\gradlast_t^\intercal \\
        \vR_t^{1/2}
    \end{bmatrix}\\
    &= \vS_{t}^{\intercal/2}\,\vS_t^{1/2},
\end{aligned}
\end{equation}
with
\begin{equation}
    \vS_t^{1/2}  = {\cal Q}_R\left(
        \covlast{t|t}^{1/2}\,\gradlast_t^\intercal,\,
        \sqrt{q_{\plast,t}}\,\gradlast_t^\intercal,\,
        \lrhidden_{t-1}\,\gradhidden_t^\intercal,\,
        \sqrt{q_{\phidden, t}}\,\gradhidden_t^\intercal,\,
        \vR_t^{1/2}
    \right).
\end{equation}
\end{proof}

\subsection{Proof of Proposition \ref{prop:new-gain-new-update}}
\label{proof:new-gain-new-update}

\begin{proof}
First, by Proposition \ref{prop:kalman-filter-steps}, the Kalman gain matrix is given by
\begin{equation}
    \vK_t = \vSigma_{t|t-1}\,\gradall_t^\intercal\,\vS_t^{-1}.
\end{equation}
Next, following
Proposition \ref{prop:new-predict-step} and
Proposition \ref{prop:new-innovation-variance},
we rewrite the Kalman gain as
\begin{equation}
\begin{aligned}
    \vK_t
    &= 
    {\rm diag}(\covlast{t|t-1}\, \lrhidden_{t-1}^\intercal\,\lrhidden_{t-1})
    \begin{bmatrix}
    \gradlast_t \\
    \gradhidden_t
    \end{bmatrix}
    \vS_t^{-1}\\
    &= 
    {\rm diag}(\covlast{t|t-1}\, \lrhidden_{t-1}^\intercal\,\lrhidden_{t-1})
    \begin{bmatrix}
    \gradlast_t\,\vS_t^{-1} \\
    \gradhidden_t\,\vS_t^{-1}
    \end{bmatrix}\\
    &= \begin{bmatrix}
        \covlast{t|t-1}\,\gradlast_t\,\vS_t^{-1}\\
        \lrhidden_{t-1}^\intercal\,\lrhidden_{t-1}\,\gradlast_t\,\vS_t^{-1}\\
    \end{bmatrix}\\
    &= \begin{bmatrix}
        \vK_{\plast,t}\\
        \vK_{\phidden_t}
    \end{bmatrix},
\end{aligned}
\end{equation}
where
\begin{equation}
\begin{aligned}
    \vK_{\plast, t} &:= \Big(\vL_t^{-1}\,\vL_{t}^{-\intercal}\gradlast_t\,\covlast{t|t-1}\Big)^\intercal,\\
    \vK_{\phidden, t} &= \Big(\vL_t^{-1}\,\vL_{t}^{-\intercal}\gradhidden_t\,\vW_{t-1}\,\vW_{t-1}^\intercal\Big)^\intercal.
\end{aligned}
\end{equation}

From  the above and Proposition \ref{prop:new-predict-step}
the updated covariance matrix takes the form
\begin{equation}
    \vSigma_{t|t}\\
    = (\vI - \vK_t\,\gradall_t)\,\vSigma_{t|t-1}\,(\vI - \vK_t\gradall_t)^\intercal + \vK_t\,\vR_t\,\vK_t^\intercal,
\end{equation}
where
\begin{equation}
\begin{aligned}
    &\vI - \vK_t\,\gradall_t\\
    &\quad = \begin{bmatrix}
        \vI_\dimlast & \vzero \\
        \vzero & \vI_\dimhidden
    \end{bmatrix}
    - \begin{bmatrix}
        \vK_{\plast,t}\\
        \vK_{\phidden,t}\\
    \end{bmatrix}
    \begin{bmatrix}
        \gradlast_t & \gradhidden_t
    \end{bmatrix}\\
    &\quad = \begin{bmatrix}
        \vI_\dimlast & \vzero \\
        \vzero & \vI_\dimhidden
    \end{bmatrix} - 
    \begin{bmatrix}
        \vK_{\plast, t}\,\gradlast_t & \vK_{\plast, t}\,\gradhidden_t\\
        \vK_{\phidden, t}\,\gradlast_t & \vK_{\phidden,t}\,\gradhidden_t
    \end{bmatrix}\\
    &\quad =
    \begin{bmatrix}
        \vI_{\dimlast} - \vK_{\plast, t}\,\gradlast_t & \vK_{\plast, t}\,\gradhidden_t\\
        \vK_{\phidden, t}\,\gradlast_t & \vI_{\dimhidden} -  \vK_{\phidden,t}\,\gradhidden_t
    \end{bmatrix},
\end{aligned}
\end{equation}
the predicted error covariance $\vSigma_{t|t-1}$ is 
\begin{equation}
    \vSigma_{t|t-1} = 
    \begin{bmatrix}
        \covlast{t|t-1} & \vzero \\
        \vzero & \covhidden{t|t-1}
    \end{bmatrix},
\end{equation}
and
\begin{equation}
\begin{aligned}
    \vK_t\,\vR_t\,\vK_t^\intercal
    &= \begin{bmatrix}
        \vK_{\plast,t} \\
        \vK_{\phidden,t}
    \end{bmatrix}\,
    \vR_t\,
    \begin{bmatrix}
        \vK_{\plast,t}^\intercal & \vK_{\phidden,t}^\intercal
    \end{bmatrix}\\
    &=
    \begin{bmatrix}
        \vK_{\plast,t}\,\vR_t\,\vK_{\plast,t}^\intercal & \vK_{\plast,t}\,\vR_t\,\vK_{\phidden,t}^\intercal\\
        \vK_{\phidden,t}\,\vR_t\,\vK_{\plast,t}^\intercal & \vK_{\phidden,t}\,\vR_t\,\vK_{\phidden,t}^\intercal\\
    \end{bmatrix}.
\end{aligned}
\end{equation}

Thus, 
\begin{equation}\label{eq:part}
\begin{aligned}
    &\vSigma_{t|t}\\
    &\quad  =
    {\tiny
    \begin{bmatrix}
        (\vI_{\dimlast} - \vK_{\plast, t}\,\gradlast_t)\covlast{t|t-1}
        (\vI_{\dimlast} - \vK_{\plast, t}\,\gradlast_t)^\intercal
        + \vK_{\plast,t}\,\vR_t\,\vK_{\plast,t}^\intercal
        &
        \vK_{\plast,t}\,\vR_t\,\vK_{\phidden,t}^\intercal\\
        \vK_{\phidden,t}\,\vR_t\,\vK_{\plast,t}^\intercal &
        (\vI_{\dimhidden} -  \vK_{\phidden,t}\,\gradhidden_t)\covhidden{t|t-1}
        (\vI_{\dimhidden} -  \vK_{\phidden,t}\,\gradhidden_t)^\intercal + \vK_{\phidden,t}\,\vR_t\,\vK_{\phidden,t}^\intercal
    \end{bmatrix}
    }.
\end{aligned}
\end{equation}

Next, we consider the following surrogate covariance matrix which modifies the second block-diagonal entry 
\begin{equation}
\begin{aligned}
&\tilde{\vSigma}_{t|t} =\\
&
\left[
\begin{smallmatrix}
\tiny
(\vI_{\dimlast} - \vK_{\plast, t}\gradlast_t)\covlast{t-1|t-1}(\vI_{\dimlast} - \vK_{\plast, t}\gradlast_t)^\intercal + \vK_{\plast,t}\vR_t\vK_{\plast,t}^\intercal
&
\tiny
\vK_{\plast,t}\vR_t\vK_{\phidden,t}^\intercal
\\
\tiny
\vK_{\phidden,t}\vR_t\vK_{\plast,t}^\intercal
&
\tiny
(\vI_{\dimhidden} - \vK_{\phidden,t}\gradhidden_t)\lrhidden_{t-1}^\intercal\lrhidden_{t-1}(\vI_{\dimhidden} - \vK_{\phidden,t}\gradhidden_t)^\intercal + \vK_{\phidden,t}\vR_t\vK_{\phidden,t}^\intercal + q_{\phidden,t}\vI_{\dimhidden}
\end{smallmatrix}
\right].
\end{aligned}
\end{equation}

It then follows that
\begin{equation}
\begin{aligned}
    \hat{\vSigma}_{\plast,t}, \hat{\vSigma}_{\phidden,t}
    &=
    \argmin_{\vSigma_{\plast}, \vSigma_{\phidden}:{\rm rank}(\vSigma_{\phidden}) = d}
    \left\|\tilde{\vSigma}_{t|t} - \begin{bmatrix} \vSigma_{\plast} &\vzero \\ \vzero & \vSigma_{\phidden} \end{bmatrix}\right\|_{\rm F}^2\\
    &= \argmin_{\vSigma_{\plast}, \vSigma_{\phidden}:{\rm rank}(\vSigma_{\phidden}) = d}
    \|\tilde{\vSigma}_{\plast,t} - \vSigma_{\plast}\|_{\rm F}^2 + 
    \|\tilde{\vSigma}_{\phidden,t} - \vSigma_{\phidden}\|_{\rm F}^2
    + 2\,\|\vK_{\plast,t}\,\vR_t\,\vK_{\phidden,t}^\intercal\|_{\rm F}^2,
\end{aligned}
\end{equation}
with
\begin{equation}
\begin{aligned}
    \tilde{\vSigma}_{\plast,t} &= (\vI_{\dimlast} - \vK_{\plast, t}\,\gradlast_t)\covlast{t-1|t-1}
        (\vI_{\dimlast} - \vK_{\plast, t}\,\gradlast_t)^\intercal
        + \vK_{\plast,t}\,\vR_t\,\vK_{\plast,t}^\intercal,\\
    \tilde{\vSigma}_{\phidden,t} &= 
        (\vI_{\dimhidden} -  \vK_{\phidden,t}\,\gradhidden_t)\lrhidden_{t-1}^\intercal\,\lrhidden_{t-1}
        (\vI_{\dimhidden} -  \vK_{\phidden,t}\,\gradhidden_t)^\intercal + \vK_{\phidden,t}\,\vR_t\,\vK_{\phidden,t}^\intercal
        + q_{\phidden,t}\vI_{\dimhidden}.
\end{aligned}
\end{equation}
Then, $\hat{\vSigma}_{\plast,t} = \tilde{\vSigma}_{\plast,t}$ and
$\hat{\vSigma}_{\phidden,t}$ is given by the first $\dimhiddensub$ principal components of $\tilde{\vSigma}_{\phidden,t}$.

Next, we find
the update rule for the Cholesky and low-rank factors.
First,
the EVC for the last layer takes the form
\begin{equation}
\begin{aligned}
    \hat{\vSigma}_{\plast,t}
    &= \left(\vI - \vK_{\plast, t}\,\gradlast_t\right)\,\covlast{t|t-1}\,\left(\vI - \vK_{\plast, t}\,\gradlast_t\right)^\intercal + \vK_{\plast, t}\,\vR_t\vK_{\plast, t}^\intercal\\
    &= \left(\vI - \vK_{\plast, t}\,\gradlast_t\right)\,\covlast{t|t-1}\,\left(\vI - \vK_{\plast, t}\,\gradlast_t\right)^\intercal
    + \vK_{\plast, t}\,\vR_t^{\intercal/2}\,\vR_t^{1/2}\vK_{\plast, t}^\intercal\\
    &=
    \begin{bmatrix}
        \left(\vI - \vK_{\plast, t}\,\gradlast_t\right)\,\covlast{t-1 |t-1}^{\intercal/2}
        & \vK_{\plast, t}\,\vR_t^{\intercal/2}
    \end{bmatrix}
    \begin{bmatrix}
        \covlast{t|t-1}^{1/2}\,\left(\vI - \vK_{\plast, t}\,\gradlast_t\right)^\intercal \\
        \vR_t^{1/2}\vK_{\plast, t}^\intercal
    \end{bmatrix},
\end{aligned}
\end{equation}
so that
\begin{equation}
    \hat{\vSigma}_{\plast,t}^{1/2} = {\cal Q}_R\left(
        \covlast{t-1|t-1}^{1/2}\,\left(\vI - \vK_{\plast, t}\,\gradlast_t\right)^\intercal,\,
        \vR_t^{1/2}\vK_{\plast, t}^\intercal\right).
\end{equation}

Then, the EVC covariance for the hidden layers (lower right block-diagonal) is
\begin{equation}
\begin{aligned}
    \tilde{\vSigma}_{\phidden,t}
    &= \left(\vI - \vK_{\phidden, t}\,\gradhidden_t\right)\,\lrhidden_{t-1}^\intercal\,\lrhidden_{t-1}\,\left(\vI - \vK_{\phidden, t}\,\gradhidden_t\right)^\intercal
    + \vK_{\phidden, t}\,\vR_t^{\intercal/2}\,\vR_t^{1/2}\vK_{\phidden, t}^\intercal
    + q_{\phidden,t}\,\vI_{\dimhidden}\\
    &=
    \begin{bmatrix}
        \left(\vI - \vK_{\phidden, t}\,\gradhidden_t\right)\,\lrhidden_{t-1}^\intercal & \vK_{\phidden, t}\,\vR_t^{\intercal/2}
    \end{bmatrix}
    \begin{bmatrix}
        \lrhidden_{t-1}\,\left(\vI - \vK_{\phidden, t}\,\gradhidden_t\right)^\intercal \\
        \vR_t^{1/2}\vK_{\phidden, t}^\intercal
    \end{bmatrix}
    + q_{\phidden,t}\,\vI_{\dimhidden}\\
    &= \tilde{\vC}_t^\intercal\,\tilde{\vC}_t + q_{\phidden, t}\,\vI_{\dimhidden}.
\end{aligned}
\end{equation}
Finally, by Corollary \ref{cor:sum-svd-plus-id},
the best rank-$d$ low-rank factor is

\begin{equation}
    \lrhidden_t = {\cal P}_{\dimhiddensub, +q_{\phidden,t}}\left(
        \lrhidden_{t-1}\,\left(\vI - \vK_{\phidden, t}\,\gradhidden_t\right)^\intercal,\,
        \vR_t^{1/2}\vK_{\phidden, t}^\intercal
    \right) \in \mat{(\dimhiddensub + \dimobs)\times\dimhidden}.
\end{equation}
So that
\begin{equation}
    \hat{\vSigma}_{\phidden,t} = \vC_t^\intercal\,\vC_t.
\end{equation}

\end{proof}

\subsection{Proof of Proposition \ref{prop:hilofi-cov-per-step-error}}
\begin{proof}\label{proof:hilofi-cov-per-step-error}
We seek to bound
\begin{equation}
    \| \vSigma_{t|t} - \hat{\vSigma}_{t|t} \|_{\rm F}.
\end{equation}
Following the arguments from Proposition \ref{prop:lrkf-upper-bound-error},
we obtain
\begin{equation}
    \|\vSigma_{t|t} - \hat{\vSigma}_{t|t}\|_{\rm F} \leq
    \|
    \underbrace{
    \vSigma - \tilde{\vSigma}_{t|t}
    }_{=\vE_{\rm surr}}
    \|_{\rm F}
    +
    \|
    \underbrace{
    \tilde{\vSigma}_{t|t} - \hat{\vSigma}_{t|t}
    }_{=\vE_{\rm proj}}
    \|_{\rm F},
\end{equation}
where
\begin{equation}
    \vE_{\rm surr}
    = q_{\phidden,t}\left(2\|\vK_{\phidden,t}\gradhidden_t\|_{\rm F} + \|\vK_{\phidden,t}\gradhidden_t\|_{\rm F}^2\right)
    + q_{\plast,t}\,\left\|\left(\vI - \vK_{\plast, t}\,\gradlast_t\right)\,\left(\vI - \vK_{\plast, t}\,\gradlast_t\right)^\intercal\right\|_{\rm F}
\end{equation}
and
\begin{equation}
    \vE_{\rm proj} = 
    \|\tilde{\vSigma}_{\phidden,t} - \lrhidden_t^\intercal\,\lrhidden_t\|_{\rm F}
    + 2\,\|\vK_{\plast,t}\,\vR_t\,\vK_{\phidden,t}^\intercal\|_{\rm F}
\end{equation}
Given that  $\vC_t^\intercal\,\vC_t$ is the best rank-$d$ approximation of $\tilde{\vSigma}_{\phidden, t}$,
it follows from a similar derivation to \eqref{eq-part:proj-error-lrkf} that
\begin{equation}
    \|\tilde{\vSigma}_{\phidden,t} - \lrhidden_t^\intercal\,\lrhidden_t\|_{\rm F}
    = \sqrt{\sum_{k=\dimhiddensub+1}^\dimhidden \lambda_k^2},
\end{equation}
where $\{\lambda_k\}_{k=\dimhiddensub+1}^\dimhidden$ are bottom $(\dimhidden - \dimhiddensub)$ eigenvalues of $\tilde{\vSigma}_t$.
\end{proof}
\section{Further algorithms}
\label{sec:algo}

\subsection{\newmethodlinlow}
\label{sec:LoLoFi}

Algorithm \ref{algo:low-rank-low-rank-update} shows a single step of \newmethodlinlow.
The low-rank factor for last layers $\vC_{\plast,0} \in \mat{\dimlastsub\times\dimlast}$ and
the low-rank factor for the hidden layers $\vC_{\phidden,0} \in \mat{\dimhiddensub\times\dimhidden}$
are chosen at initialization.

\begin{algorithm}[htb]
\begin{algorithmic}[1]
    \REQUIRE $\vR_t$ \texttt{// measurement variance}
    \REQUIRE $(q_{\plast,t}, q_{\phidden, t})$ dynamics covariance for last layer and hidden layers
    \REQUIRE $(\vy_t, \vx_t)$ \texttt{// observation and input}
    \REQUIRE $\vb_{t-1} = \left(\meanlast{t-1}, \meanhidden{t-1}, \vC_{\plast,t-1}, \vC_{\phidden,t-1}\right)$ \texttt{// previous belief}
    \Statex \texttt{// predict step}
    \STATE $\hat{\vy}_t \gets \nn(\plast_{t-1|t-1}, \phidden_{t-1|t-1}, \vx_t)$
    \STATE $\gradlast_t = \nabla_{\plast} \nn(\plast_{t-1|t-1},\,\phidden_{t-1|t-1},\,\vx_{t})$
    \STATE $\gradhidden_t = \nabla_{\phidden} \nn(\plast_{t-1|t-1},\,\phidden_{t-1|t-1},\,\vx_{t})$
    \Statex \texttt{// innovation(one-step-ahead error) and Cholesky innovation variance}
    \STATE $\vepsilon_t \gets \vy_t - \hat{\vy}_t$
    \STATE $
        \vS_t^{1/2}  \gets {\cal Q}_R\left(
        {\vC}_{\plast, t-1}\,\gradlast_t^\intercal,\,
        \sqrt{q_{\plast,t}}\,\gradlast_t^\intercal,\,
        \vC_{\phidden, t-1}\,\gradhidden_t^\intercal,\,
        \sqrt{q_{\phidden, t}}\,\gradhidden_t^\intercal,\,
        \vR_t^{1/2}
    \right),
    $
    \Statex \texttt{// gain for hidden layers}
    \STATE $\vV_{\phidden,t} \gets \varinnov_t^{-1/2}\,\varinnov_{t}^{-\intercal/2}\gradhidden_t$
    \STATE $\vK_{\phidden, t}^\intercal \gets \vV_{\phidden,t}\,\vC_{\phidden, t-1}\,\vC_{\phidden, t-1}^\intercal + q_{\phidden,t}\,\vV_{\phidden, t}$
    \Statex \texttt{// gain for last layer}
    \STATE $\vV_{\plast,t} \gets \varinnov_t^{-1/2}\,\varinnov_{t}^{-\intercal/2}\gradlast_t$   
    \STATE $\vK_{\plast, t}^\intercal \gets \vV_{\plast, t}\,\covlast{t|t-1} + q_{\plast,t}\,\vV_{\plast, t}$
    \STATE $\vK_{\plast, t}^\intercal \gets \vV_{\plast,t}\,\vC_{\plast, t-1}\,\vC_{\plast, t-1}^\intercal + q_{\plast,t}\,\vV_{\plast, t}$
    \Statex \texttt{// mean update step}
    \STATE $\phidden_{t|t} \gets \phidden_{t-1|t-1} + \vK_{\phidden, t}\vepsilon_t$
    \STATE $\plast_{t|t} \gets \plast_{t-1|t-1} + \vK_{\plast, t}\vepsilon_t $
    \Statex \texttt{// low-rank updates}
    \STATE ${\vC}_{\plast, t} \gets {\cal P}_{\dimlastsub, +q_{\plast,t}}\left( \vC_{\plast, t-1}\,\left(\vI - \vK_{\plast, t}\,\gradlast_t\right)^\intercal,\, \vR_t^{1/2}\vK_{\plast, t}^\intercal\right)$
    \STATE ${\vC}_{\phidden, t} \gets {\cal P}_{\dimhiddensub, +q_{\phidden,t}}\left( \vC_{\phidden, t-1}\,\left(\vI - \vK_{\phidden, t}\,\gradhidden_t\right)^\intercal,\, \vR_t^{1/2}\vK_{\phidden, t}^\intercal\right)$
    \RETURN $\vb_t = \left(\meanlast{t}, \meanhidden{t}, \vC_{\plast,t}, \vC_{\phidden,t}\right)$
\end{algorithmic}
\caption{
Single step of \newmethodlinlow for online learning for $t \geq 1$.
}
\label{algo:low-rank-low-rank-update}
\end{algorithm}

\subsection{Low-rank Kalman filter (\methodlrkf)}\label{sec:LRKF}

Inspired by \cite{tracy2022qrkf},
we derive a novel version of the low-rank Kalman filter
which is
(i) more computationally efficient,
(ii) simple to implement, and
(iii) provably optimal (in terms of Frobenius norm).
We call this
the \methodlrkf method.
Instead of keeping track
 of $\vSigma_{t|t}$ or $\vS_t$,
 we keep track of a low-rank factor for the covariance,
and compute the Cholesky factor for the innovation variance $\vS_t^{1/2}$.
Algorithm \ref{algo:low-rank-filter} summarizes the predict and update steps for \methodlrkf.
Next we turn to the derivation.

We assume the model parameters $\vtheta$ and the observations $\vy$ follow
\begin{equation}
\begin{aligned}
    \vtheta_t &= \vtheta_{t-1} + \vu_t,\\
    \vy_t &= \vH_t\,\vtheta_t + \ve_t,
\end{aligned}
\end{equation}
where $\var(\ve_t) = \vR_t$ is known and $\vH_t$ is the known projection matrix.
Both, $\ve_t$ and $\vu_t$ are zero-mean random vectors.

In the Kalman filter update equations, the so-called posterior mean and covariances are given by
\begin{equation}
\begin{aligned}
    \vmu_t &= \vmu_{t-1} + \vK_t\,(\vy_t - \hat{\vy}_t)\,,\\
    \vSigma_t &= \var(\vtheta_t - \vmu_t),
\end{aligned}
\end{equation}
where $\vmu_0$ is given,
\begin{equation}
\begin{aligned}
    \hat{\vy}_t &= \vH_t\,\vmu_{t|t-1} = \vH_t\,\vmu_{t-1|t-1},\\
    \vK_t &= \cov(\vtheta_t, \vepsilon_t)\var(\vepsilon_t)^{-1} = \vSigma_{t|t-1}\,\vH_t^\intercal\vS_t^{-1},\\
    \vepsilon_t &= \vy_t - \vH_t\,\vmu_{t|t-1},\\
    \vS_t &= \vH_t\,\vSigma_{t|t-1}\,\vH_t^\intercal + \vR_t,\\
    \vR_t &= \var(\ve_t).
\end{aligned}
\end{equation}

The memory requirements of the posterior covariance is  $O(D^2)$, with $D$ number of parameters.
This makes it unfeasible to store for moderately-sized neural networks.

Thus, to reduce the computational cost, we maintain the best rank $d$ approximation of the covariance matrix by
\begin{equation}
    \vSigma_{t|t} \approx \vW_{t}^\intercal\,\vW_{t},
\end{equation}
where $\vW_{t}\in \reals^{d\times D}$ is the best rank-$d$ approximation to $\vSigma_t$ (in a Frobenius norm sense).
Furthermore, to maintain a numerically stable method, we work with the Cholesky of the innovation variance.
In effect, this is a variant square-root low-rank Kalman filter method.
We explain the details of this method below.

The predict step equations are given by
\begin{align}
\vmu_{t|t-1}&= \vmu_{t-1},\\
\vSigma_{t|t-1} &= \lrhidden_{t-1}^\intercal\,\lrhidden_{t-1} + q_t\vI.
\end{align}

\begin{proposition}[Kalman gain and innovations]
The variance  $\vS_t$  of the innovation  is
\begin{equation}
\begin{aligned}
    \vS_t
    &= \vH_t\,\vSigma_{t|t-1}\,\vH_t^\intercal + \vR_t\\
    &= \vH_t\,(\lrhidden_{t-1}^\intercal\,\lrhidden_{t-1} + q_t\vI)\,\vH_t^\intercal + \vR_t\\
    &= \vH_t\,\lrhidden_t^\intercal\,\lrhidden_t\,\vH_t^\intercal + q_t\,\vH_t\,\vH_t^\intercal + \vR_t\\
    &=
    \begin{bmatrix}
        \vH_t\,\lrhidden_t^\intercal &
        \sqrt{q_t}\,\vH_t &
        \vR_t^{\intercal/2}
    \end{bmatrix}
    \begin{bmatrix}
    \lrhidden_t\,\vH_t^\intercal \\
    \sqrt{q_t}\vH_t^\intercal \\
    \vR_t^{1/2}
    \end{bmatrix}\\
    &= \varinnov_t^{\intercal/2}\varinnov_t^{1/2}\,,
\end{aligned}
\end{equation}
with $\varinnov_t^{1/2}$ given by
\begin{equation}
    \varinnov_t = {\cal Q}_R(\lrhidden_t\,\vH_t^\intercal,\,\sqrt{q_t}\,\vH_t^\intercal\,,\,\vR_t^{1/2}),
\end{equation}
where $ {\cal Q}_R$ returns the $R$ matrix from the QR decomposition row-stacked arguments and  $\vR_t^{1/2}$ is the upper-triangular Cholesky decomposition of $\vR_t$.
The Kalman gain is given by
\begin{equation}
\begin{aligned}
    \vK_t
    &= \vSigma_{t|t-1}\,\vH_t^\intercal\,\vS_t^{-1}\\
    &= \left(\vS_t^{-1}\,\vH_t\,\vSigma_{t|t-1}\right)^\intercal\\
    &= \left(\varinnov_{t}^{-1}\,\varinnov_{t}^{-\intercal}\,\vH_t\,
    \left(\lrhidden_{t-1}^\intercal\,\lrhidden_{t-1} + q_t\,\vI_{\dimall}\right)\right)^\intercal.
\end{aligned}
\end{equation}
\end{proposition}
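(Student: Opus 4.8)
The plan is to obtain both identities from the generic Kalman-filter innovation and gain formulas---restated in the \methodlrkf{} display above with the Jacobian replaced by the known linear map $\vH_t$ (cf.\ Proposition~\ref{prop:kalman-filter-steps})---together with the \methodlrkf{} predict step $\vSigma_{t|t-1} = \lrhidden_{t-1}^\intercal\lrhidden_{t-1} + q_t\vI$ and the QR identity of Proposition~\ref{sec:sum-cholesky}. The whole argument is a verification of the chain of equalities already written in the statement, so the only real content is identifying the right structure.

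First, for the innovation variance, I would substitute $\vSigma_{t|t-1} = \lrhidden_{t-1}^\intercal\lrhidden_{t-1} + q_t\vI_\dimall$ into $\vS_t = \vH_t\vSigma_{t|t-1}\vH_t^\intercal + \vR_t$ and distribute $\vH_t$ over the two terms, so that $\vS_t = (\lrhidden_{t-1}\vH_t^\intercal)^\intercal(\lrhidden_{t-1}\vH_t^\intercal) + q_t\,\vH_t\vH_t^\intercal + \vR_t$. Each summand is of the form $\vB^\intercal\vB$, with $\vB_1 = \lrhidden_{t-1}\vH_t^\intercal$, $\vB_2 = \sqrt{q_t}\,\vH_t^\intercal$ and $\vB_3 = \vR_t^{1/2}$, so the row-stacked matrix $\vM = [\vB_1^\intercal\ \vB_2^\intercal\ \vB_3^\intercal]^\intercal$ satisfies $\vM^\intercal\vM = \vS_t$. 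Since $\vR_t\in\matPos{\dimobs}$, the matrix $\vS_t$ is positive definite, so the economy QR factorization $\vM = \vQ\,\varinnov_t$ produces an invertible upper-triangular $\varinnov_t\in\reals^{\dimobs\times\dimobs}$ with $\varinnov_t^\intercal\varinnov_t = \vM^\intercal\vM = \vS_t$; i.e.\ $\varinnov_t$ is the upper Cholesky factor of $\vS_t$ and equals ${\cal Q}_R(\lrhidden_{t-1}\vH_t^\intercal,\sqrt{q_t}\,\vH_t^\intercal,\vR_t^{1/2})$ by Proposition~\ref{sec:sum-cholesky}.

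Then, for the gain, I would start from $\vK_t = \vSigma_{t|t-1}\vH_t^\intercal\vS_t^{-1}$, transpose the right-hand side, and use the symmetry of $\vSigma_{t|t-1}$ and $\vS_t$ to write $\vK_t = (\vS_t^{-1}\vH_t\vSigma_{t|t-1})^\intercal$. Substituting $\vS_t^{-1} = \varinnov_t^{-1}\varinnov_t^{-\intercal}$ (from $\vS_t = \varinnov_t^\intercal\varinnov_t$) and $\vSigma_{t|t-1} = \lrhidden_{t-1}^\intercal\lrhidden_{t-1} + q_t\vI_\dimall$ yields the claimed expression. As a by-product this also makes the implementation cheap, since $\varinnov_t^{-1}\varinnov_t^{-\intercal}\vH_t$ is computed by two triangular solves against the upper-triangular $\varinnov_t$ rather than by forming or inverting $\vS_t$.

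The one point that needs care is that Proposition~\ref{sec:sum-cholesky} is phrased for square positive-definite $\vA_i$ whose Cholesky roots $\vA_i^{1/2}$ are square, whereas here two of the square-root blocks---$\lrhidden_{t-1}\vH_t^\intercal$ (size $\dimhiddensub\times\dimobs$) and $\sqrt{q_t}\,\vH_t^\intercal$ (size $\dimall\times\dimobs$)---are rectangular, and the corresponding summands $\lrhidden_{t-1}^\intercal\lrhidden_{t-1}$ and $q_t\vH_t\vH_t^\intercal$ are only positive semi-definite. The fix is to observe that the proof of that proposition uses only the two algebraic facts $\vM^\intercal\vM = \sum_i\vB_i^\intercal\vB_i$ and $\vM=\vQ\varinnov_t \Rightarrow \vM^\intercal\vM=\varinnov_t^\intercal\varinnov_t$, neither of which requires square blocks or full-rank summands; positive-definiteness of the \emph{total} sum $\vS_t$ (guaranteed by $\vR_t\in\matPos{\dimobs}$) is all that is needed for $\varinnov_t$ to be a genuine invertible upper-triangular Cholesky factor. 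This is precisely the rectangular-argument reading of the operator ${\cal Q}_R$ recorded in Appendix~\ref{sec:notation}, so invoking it here is legitimate; everything else is the routine computation displayed in the statement.
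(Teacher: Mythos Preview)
Your proposal is correct and follows exactly the approach the paper takes: the paper does not give a separate proof for this proposition, treating the displayed chain of equalities as the derivation, with the QR identity of Proposition~\ref{sec:sum-cholesky} justifying the final step. Your additional care in noting that the blocks passed to ${\cal Q}_R$ are rectangular (and that the proof of Proposition~\ref{sec:sum-cholesky} goes through unchanged in that case, as recorded in Appendix~\ref{sec:notation}) is a welcome clarification the paper leaves implicit.
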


\begin{proposition}[Update step]
    The updated mean is
\begin{equation}
    \vmu_{t} = \vmu_{t|t-1} + \vK_t(\vy_t - \hat{\vy}_t).
\end{equation}

The update covariance $\vSigma_{t|t}$,
approximated through a two-step procedure:
first, a surrogate covariance that ignores the cross-term effect of $\vK_t\,\vH_t$ on the artificial noise covariance $q_t$, and
second, a best rank-$d$ approximation (in Frobenious norm) to the surrogate matrix
takes the low-rank form
\begin{equation}
    \hat{\vSigma}_{t|t} = \lrhidden_t^\intercal\,\lrhidden_t,
\end{equation}
with
\begin{equation}
    \lrhidden_t = {\cal P}_d\left(\lrhidden_{t-1}\,\left(\vI - \vK_t\,\vH_t\right)^\intercal,\,\vR_t^{1/2}\vK_t^\intercal\right)
    \in \mat{d\times\dimall},
\end{equation}
the low-rank (rectangular decomposition) matrix.

\end{proposition}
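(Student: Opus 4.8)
The plan is to derive the mean update directly from the exact (non-linearized) Kalman recursion, and to obtain the low-rank covariance formula by expanding the Joseph-form update, forming the surrogate by simplifying the dynamics-noise propagation term, and then invoking the Eckart--Young construction already packaged in Proposition~\ref{prop:sum-SVD}.

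First I would dispatch the mean update. Since here the observation model $\vy_t = \vH_t\vtheta_t + \ve_t$ is \emph{exactly} affine (unlike the EKF setting of Section~\ref{sec:EKF-summary}), no linearization is needed, and Proposition~\ref{prop:kalman-filter-steps} --- equivalently, the frequentist BLUP recursion of \cite{eubank2005kalman} --- applies verbatim to the current approximate belief $(\vmu_{t|t-1},\hat{\vSigma}_{t|t-1})$ with $\hat{\vSigma}_{t|t-1}=\lrhidden_{t-1}^\intercal\lrhidden_{t-1}+q_t\vI$. This immediately gives $\vmu_t=\vmu_{t|t-1}+\vK_t(\vy_t-\hat{\vy}_t)$, with $\vK_t$ and $\vS_t$ exactly as in the preceding proposition; the only approximation present in the mean is inherited from having built $\vK_t$ out of the low-rank $\hat{\vSigma}_{t|t-1}$ instead of the unavailable true EVC.

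Next I would treat the covariance. Starting from the numerically stable Joseph form $\vSigma_{t|t}=(\vI-\vK_t\vH_t)\vSigma_{t|t-1}(\vI-\vK_t\vH_t)^\intercal+\vK_t\vR_t\vK_t^\intercal$ and substituting $\vSigma_{t|t-1}=\lrhidden_{t-1}^\intercal\lrhidden_{t-1}+q_t\vI$, distributing the product splits it into three pieces: a data term $(\vI-\vK_t\vH_t)\lrhidden_{t-1}^\intercal\lrhidden_{t-1}(\vI-\vK_t\vH_t)^\intercal$, a noise term $\vK_t\vR_t\vK_t^\intercal$, and a dynamics-noise propagation term $q_t(\vI-\vK_t\vH_t)(\vI-\vK_t\vH_t)^\intercal$. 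The surrogate $\tilde{\vSigma}_{t|t}$ is defined by simplifying the last piece --- replacing it by $q_t\vI$, which keeps the noise magnitude while discarding the contracting factor $\vI-\vK_t\vH_t$ (or, in the bare ${\cal P}_d$ rendering written in the statement, dropping it altogether) --- so the artificial dynamics noise does not consume the rank-$d$ budget. I would then rewrite the two PSD terms that remain as Gram matrices of wide factors: using the Cholesky factorization $\vR_t=\vR_t^{\intercal/2}\vR_t^{1/2}$, the noise term equals $(\vR_t^{1/2}\vK_t^\intercal)^\intercal(\vR_t^{1/2}\vK_t^\intercal)$, while the data term equals $\bigl[\lrhidden_{t-1}(\vI-\vK_t\vH_t)^\intercal\bigr]^\intercal\bigl[\lrhidden_{t-1}(\vI-\vK_t\vH_t)^\intercal\bigr]$, and these factors have at most $d$ and $\dimobs$ rows, respectively. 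Applying Proposition~\ref{prop:sum-SVD} (or Corollary~\ref{cor:sum-svd-plus-id} for the $+q_t\vI$ variant) to these two factors then shows that the best rank-$d$ Frobenius approximation of $\tilde{\vSigma}_{t|t}$ is $\lrhidden_t^\intercal\lrhidden_t$ with $\lrhidden_t={\cal P}_d(\lrhidden_{t-1}(\vI-\vK_t\vH_t)^\intercal,\ \vR_t^{1/2}\vK_t^\intercal)$ --- exactly the claimed expression --- computed via an SVD of a small $(d+\dimobs)\times(d+\dimobs)$ Gram matrix rather than of the full $\dimall\times\dimall$ sum.

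The only step that is not pure bookkeeping is the surrogate: there is nothing to \emph{prove} there, since it is the definition of the approximation, but the write-up should be explicit that discarding the $q_t(\vI-\vK_t\vH_t)(\vI-\vK_t\vH_t)^\intercal$ correction is precisely what keeps the belief state in the prescribed rank-$d$ form, at the cost of an error that is no longer a Kalman-optimal quantity (the analogue of the neglected terms bounded in Proposition~\ref{prop:hilofi-cov-per-step-error}). Everything else is the algebraic expansion above together with a direct appeal to the symmetric Eckart--Young theorem as encapsulated in Proposition~\ref{prop:sum-SVD}, plus a dimension check confirming that the two matrices fed into ${\cal P}_d$ have at most $d$ and $\dimobs$ rows so that its hypotheses are met. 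I expect the main subtlety in exposition --- rather than in the proof itself --- to be making the surrogate definition precise and flagging that it, not the rank truncation, is where the method departs from exact filtering.
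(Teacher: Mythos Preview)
Your proposal is correct and follows essentially the same route as the paper's own proof: derive the mean update from the linear Kalman recursion (Proposition~\ref{prop:kalman-filter-steps}), expand the Joseph form with $\vSigma_{t|t-1}=\lrhidden_{t-1}^\intercal\lrhidden_{t-1}+q_t\vI$, define the surrogate by replacing $q_t(\vI-\vK_t\vH_t)(\vI-\vK_t\vH_t)^\intercal$ with $q_t\vI$, and then apply Proposition~\ref{prop:sum-SVD}/Corollary~\ref{cor:sum-svd-plus-id} to the stacked factors. You are also right to flag the ${\cal P}_d$ versus ${\cal P}_{d,+q_t}$ discrepancy between the statement and the intended surrogate --- the paper's own proof lands on ${\cal P}_{d,+q_t}$.
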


\begin{proof}
The updated covariance mean follows directly from \eqref{eqn:EKFupdate1}.

The updated covariance is
\begin{equation}\label{eq-part:lrkf-true-cov}
\begin{aligned}
    \vSigma_t
    &= \left(\vI - \vK_t\,\vH_t\right)\,\vSigma_{t|t-1}\,\left(\vI - \vK_t\,\vH_t\right)^\intercal + \vK_t\,\vR_t\vK_t^\intercal\\
    &= \left(\vI - \vK_t\,\vH_t\right)\,\left(\lrhidden_{t-1}^\intercal\,\lrhidden_{t-1} + q_t\,\vI_\dimall\right)\,\left(\vI - \vK_t\,\vH_t\right)^\intercal
    + \vK_t\,\vR_t^{\intercal/2}\,\vR_t^{1/2}\vK_t^\intercal\\
    &= \left(\vI - \vK_t\,\vH_t\right)\,\left(\lrhidden_{t-1}^\intercal\,\lrhidden_{t-1} + q_t\,\vI_\dimall\right)\,\left(\vI - \vK_t\,\vH_t\right)^\intercal
    + \vK_t\,\vR_t^{\intercal/2}\,\vR_t^{1/2}\vK_t^\intercal\\
    &=
    \begin{bmatrix}
        \left(\vI - \vK_t\,\vH_t\right)\,\lrhidden_{t-1}^\intercal & \vK_t\,\vR_t^{\intercal/2}
    \end{bmatrix}
    \begin{bmatrix}
        \lrhidden_{t-1}\,\left(\vI - \vK_t\,\vH_t\right)^\intercal \\
        \vR_t^{1/2}\vK_t^\intercal.
    \end{bmatrix}
    + q_t\,\left(\vI - \vK_t\,\vH_t\right)\left(\vI - \vK_t\,\vH_t\right)^\intercal.
\end{aligned}
\end{equation}
Next, we consider the \textit{surrogate} EVC matrix
\begin{equation}\label{eq-part:lrkf-surrogate-cov}
\begin{aligned}
    \tilde{\vSigma}_t
    &=
    \begin{bmatrix}
        \left(\vI - \vK_t\,\vH_t\right)\,\lrhidden_{t-1}^\intercal & \vK_t\,\vR_t^{\intercal/2}
    \end{bmatrix}
    \begin{bmatrix}
        \lrhidden_{t-1}\,\left(\vI - \vK_t\,\vH_t\right)^\intercal \\
        \vR_t^{1/2}\vK_t^\intercal.
    \end{bmatrix}
    + q_t\vI_\dimall\\
    &= \tilde{\vC}_t^\intercal\,\tilde{\vC}_t + q_t\vI_\dimall,
\end{aligned}
\end{equation}
where
\begin{equation}
    \tilde{\vC}_t = \begin{bmatrix}
        \lrhidden_{t-1}\,\left(\vI - \vK_t\,\vH_t\right)^\intercal \\
        \vR_t^{1/2}\vK_t^\intercal
    \end{bmatrix} \in \reals^{(d+o)\times D}.
\end{equation}

Next, the best rank-$d$ matrix is

\begin{equation}\label{eq-part:lrkf-approx-cov}
\begin{aligned}
    \hat{\vSigma}_{t}
    &= \argmin_{\vSigma:{\rm rank}(\vSigma) = d}
    \left\|\tilde{\vSigma}_{t} - \vSigma\right\|_{\rm F}^2\\
    &= \argmin_{\vSigma:{\rm rank}(\vSigma) = d}
    \left\|\tilde{\vC}_t^\intercal\,\tilde{\vC}_t + q_t\,\vI_\dimall - \vSigma\right\|_{\rm F}^2\\
    &= \lrhidden_t^\intercal\,\lrhidden_t,
\end{aligned}
\end{equation}
where
\begin{equation}
    \lrhidden_t = {\cal P}_{d, +q_t}\left(\lrhidden_{t-1}\,\left(\vI - \vK_t\,\vH_t\right)^\intercal,\,\vR_t^{1/2}\vK_t^\intercal\right)
    \in \mat{d\times\dimall}
\end{equation}
is the best $d$-dimensional low-rank matrix given the stacked matrices $\tilde{\vC}_t$ and the dynamics covariance $q_t$.

\end{proof}

\begin{proposition}\label{prop:lrkf-upper-bound-error}
The per-step error induced by the approximation of the covariance at time $t$ is  bounded by
\begin{equation}
    \| \vSigma_{t|t} - \hat{\vSigma}_{t|t} \|_{\rm F}
    \leq q_t\,\left(2\,\|\vK_t\,\vH_t\|_{\rm F} +  \|\vK_t\,\vH_t\|_{\rm F}^2\right) + \sqrt{\sum_{k=d+1}^\dimall \lambda_k^2},
\end{equation}
\end{proposition}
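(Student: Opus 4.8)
The plan is to route the error through the surrogate covariance $\tilde{\vSigma}_t$ of \eqref{eq-part:lrkf-surrogate-cov} and bound each piece separately with the triangle inequality in Frobenius norm:
\[
\| \vSigma_{t|t} - \hat{\vSigma}_{t|t} \|_{\rm F}
\;\leq\;
\| \vSigma_{t|t} - \tilde{\vSigma}_t \|_{\rm F}
\;+\;
\| \tilde{\vSigma}_t - \hat{\vSigma}_{t|t} \|_{\rm F}.
\]
The first term captures the error incurred by ignoring how $\vK_t\vH_t$ acts on the artificial dynamics noise $q_t\vI$, and the second the error of the rank-$d$ truncation.

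For the first term I would subtract \eqref{eq-part:lrkf-surrogate-cov} from \eqref{eq-part:lrkf-true-cov}: the common factor $\tilde{\vC}_t^\intercal\tilde{\vC}_t$ cancels, leaving only the $q_t$-scaled discrepancy
\[
\vSigma_{t|t} - \tilde{\vSigma}_t
= q_t\!\left[(\vI - \vK_t\vH_t)(\vI - \vK_t\vH_t)^\intercal - \vI\right]
= q_t\!\left[-\vK_t\vH_t - (\vK_t\vH_t)^\intercal + (\vK_t\vH_t)(\vK_t\vH_t)^\intercal\right].
\]
Applying the triangle inequality together with $\|\vA^\intercal\|_{\rm F}=\|\vA\|_{\rm F}$ and submultiplicativity $\|\vA\vB\|_{\rm F}\le\|\vA\|_{\rm F}\|\vB\|_{\rm F}$ (so that $\|(\vK_t\vH_t)(\vK_t\vH_t)^\intercal\|_{\rm F}\le\|\vK_t\vH_t\|_{\rm F}^2$) yields the bound $q_t\big(2\|\vK_t\vH_t\|_{\rm F} + \|\vK_t\vH_t\|_{\rm F}^2\big)$, which is exactly the first summand in the claim.

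For the second term I would invoke Corollary~\ref{cor:sum-svd-plus-id}: by construction $\hat{\vSigma}_{t|t} = \vJ_{+q_t}^\intercal\,\vJ_{+q_t}$ retains precisely the top-$d$ eigenpairs of the PSD matrix $\tilde{\vSigma}_t = \tilde{\vC}_t^\intercal\tilde{\vC}_t + q_t\vI_\dimall$, whose eigenvalues are $\{\sigma_k^2+q_t\}_{k=1}^\dimall$ in decreasing order ($\{\sigma_k\}$ the singular values of $\tilde{\vC}_t$). Writing $\lambda_k = \sigma_k^2 + q_t$, the difference $\tilde{\vSigma}_t - \hat{\vSigma}_{t|t}$ is diagonal in that eigenbasis with entries $\lambda_{d+1},\dots,\lambda_\dimall$, i.e.\ the smallest $\dimall-d$ eigenvalues of $\tilde{\vSigma}_t$, so $\|\tilde{\vSigma}_t - \hat{\vSigma}_{t|t}\|_{\rm F} = \sqrt{\sum_{k=d+1}^\dimall \lambda_k^2}$ (this is also the Eckart–Young optimum, consistent with Proposition~\ref{prop:sum-SVD}). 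Adding the two bounds gives the inequality.

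The main obstacle is bookkeeping rather than any delicate estimate: one must state carefully that the $\lambda_k$ in the bound are the eigenvalues of the \emph{surrogate} $\tilde{\vSigma}_t$ (equivalently $\sigma_k^2+q_t$), and confirm that the operator ${\cal P}_{d,+q_t}$ in the update step returns the Frobenius-optimal rank-$d$ approximant so that Corollary~\ref{cor:sum-svd-plus-id} applies verbatim. As with Proposition~\ref{prop:hilofi-cov-per-step-error}, I would also flag that this bound controls only the covariance-approximation error and not the error due to linearization of $\nn$.
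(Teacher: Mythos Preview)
Your proposal is correct and follows essentially the same route as the paper's proof: split through the surrogate $\tilde{\vSigma}_t$ via the triangle inequality, bound the surrogate error by expanding $q_t[(\vI-\vK_t\vH_t)(\vI-\vK_t\vH_t)^\intercal-\vI]$ and applying the triangle inequality plus submultiplicativity, and bound the projection error as $\sqrt{\sum_{k=d+1}^{\dimall}\lambda_k^2}$ via the SVD/Eckart--Young characterization. Your remark that the $\lambda_k$ are the eigenvalues of the surrogate (equivalently $\sigma_k^2+q_t$) is exactly the identification the paper makes.
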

where $\{\lambda_k\}_{k=d+1}^\dimall$ are the bottom $(\dimall - d)$ eigenvalues of $\tilde{\vSigma}_{t}$.

\begin{proof}
We seek to bound
\begin{equation}
    \| \vSigma_{t|t} - \hat{\vSigma}_{t|t} \|_{\rm F},
\end{equation}
where
$\vSigma_{t|t}$ is given by \eqref{eq-part:lrkf-true-cov} and
$\hat{\vSigma}_{t|t}$ is given by \eqref{eq-part:lrkf-approx-cov}.
We first note that
\begin{equation}
    \vSigma_{t|t} - \hat{\vSigma}_{t|t} = \left(\vSigma - \tilde{\vSigma}_{t|t}\right)
    + \left(\tilde{\vSigma}_{t|t} - \hat{\vSigma}_{t|t}\right),
\end{equation}
where $\tilde{\vSigma}_{t|t}$ is the surrogate covariance matrix \eqref{eq-part:lrkf-surrogate-cov}.
Then,
\begin{equation}
    \|\vSigma_{t|t} - \hat{\vSigma}_{t|t}\|_{\rm F} \leq
    \|
    \underbrace{
    \vSigma - \tilde{\vSigma}_{t|t}
    }_{=\vE_{\rm surr}}
    \|_{\rm F}
    +
    \|
    \underbrace{
    \tilde{\vSigma}_{t|t} - \hat{\vSigma}_{t|t}
    }_{=\vE_{\rm proj}}
    \|_{\rm F}
\end{equation}
The norm for $\vE_{\rm proj}$ follows directly from the definition of the low-rank approximation.
Let $\vU\,\vS\,\vV^\intercal$ be the SVD decomposition of $\tilde{\vSigma}_{t|t}$ and
let $\vU\,\vS_{:d}\,\vV$ be the SVD decomposition of $\hat{\vSigma}_{t|t}$.
Here,
$\vS = {\rm diag}(\lambda_1, \ldots, \lambda_\dimall)$
and
$\vS_{:d} = {\rm diag}(\lambda_1, \ldots, \lambda_d, 0, \ldots, 0)$
are the singular values (eigenvalues) of the matrices $\tilde{\vSigma}_{t|t}$ and $\hat{\vSigma}_{t|t}$ respectively,
ordered in descending order.
Then,
\begin{equation}\label{eq-part:proj-error-lrkf}
\begin{aligned}
    \|\vE_{\rm proj}\|_{\rm F}
    &= \|\vU\,\vS\,\vV^\intercal - \vU\,\vS_{:d}\,\vV^\intercal\|_{\rm F}\\
    &= \|\vU\left(\vS - \vS_{:d}\right)\vV^\intercal\|_{\rm F}\\
    &= \sqrt{{\rm Tr}[\left(\vU\left(\vS - \vS_{:d}\right)\vV^\intercal\right)\left(\vU\left(\vS - \vS_{:d}\right)\vV^\intercal\right)^\intercal]}\\
    &= \sqrt{{\rm Tr}[\vU\left(\vS - \vS_{:d}\right)\vV^\intercal\vV\left(\vS - \vS_{:d}\right)\vU^\intercal]}\\
    &= \sqrt{{\rm Tr}[\left(\vS - \vS_{:d}\right)\vV^\intercal\vV\left(\vS - \vS_{:d}\right)\vU^\intercal\vU]}\\
    &= \sqrt{{\rm Tr}[\left(\vS - \vS_{:d}\right)^2]}\\
    &= \sqrt{\sum_{k=d+1}^\dimall \lambda_k^2},
\end{aligned}
\end{equation}
where $\{\lambda_k\}_{k=d+1}^\dimall$ are the bottom $(\dimall - d)$ eigenvalues of $\tilde{\vSigma}_{t}$.

Next, an upper bound for
the norm $\vE_{\rm surr}$ is as follows
\begin{equation}
\begin{aligned}
    \| \vE_{\rm surr} \|_{\rm F}
    &=  \| \vSigma_{t|t} - \tilde{\vSigma}_{t|t} \|_{\rm F}\\
    &=  \| q_t\,(\vI - \vK_t\,\vH_t)\,(\vI - \vK_t\,\vH_t) - q_t\,\vI \|_{\rm F}\\
    &= q_t\,\| (\vK_t\,\vH_t)\,(\vK_t\,\vH_t)^\intercal - (\vK_t\,\vH_t) - (\vK_t\,\vH_t)^\intercal \|_{\rm F}\\
    &\leq q_t\left( \|(\vK_t\,\vH_t)\|_{\rm F}^2 + 2\,\|\vK_t\,\vH_t\|_{\rm F} \right).
\end{aligned}
\end{equation}
\end{proof}


\begin{algorithm}[htb]
\begin{algorithmic}[1]
    \REQUIRE $\vR_t$ \texttt{// measurement variance}
    \REQUIRE $q_t$ \texttt{// dynamics covariance}
    \REQUIRE $(\vy_t, \vx_t)$ \texttt{// observation and input}
    \REQUIRE $\vb_{t-1} = (\vmu_{t-1}, \lrhidden_{t-1})$ \texttt{// previous belief}
    \Statex \texttt{// predict step}
    \STATE $\hat{\vy}_t \gets h(\vtheta_t, \vx_t)$
    \STATE $\vH_t \gets \nabla_\vtheta h(\vmu_{t-1}, \vx_t)$
    \Statex \texttt{// innovation and (Cholesky) innovation variance}
    \STATE $\vepsilon_t \gets \vy_t - \hat{\vy}_t$
    \STATE $\varinnov_t^{1/2}  \gets {\cal Q}_R(\lrhidden_t\,\vH_t^\intercal,\sqrt{q_t}\,\vH_t,\,\vR_t^{1/2})$
    \Statex \texttt{// gain matrix}
    \STATE $\vV_t \gets \varinnov_{t}^{-1/2}\,\varinnov_{t}^{-\intercal/2}\,\vH_t$
    \STATE $\vK_t^\intercal \gets \vV_t\,\lrhidden_{t-1}\,\lrhidden_{t-1}^\intercal + q_t\,\vV_t$ \texttt{// low-rank update}
    \Statex \texttt{// mean and low-rank factor updates}
    \STATE $\vmu_{t} \gets \vmu_{t-1} + \vK_t\,\vepsilon_t$
    \STATE $\lrhidden_t \gets {\cal P}_d\left(\lrhidden_{t-1}\,\left(\vI - \vK_t\,\vH_t\right)^\intercal,\,\vR_t^{1/2}\vK_t^\intercal\right)$
    \RETURN $\vb_t = (\vmu_t, \lrhidden_t)$ \texttt{// updated belief}
\end{algorithmic}
\caption{Predict and update steps in the low-rank Kalman filter (\methodlrkf)}
\label{algo:low-rank-filter}
\end{algorithm}

\subsubsection{Error analysis for \methodlrkf}
\label{sec:error-analysis-lrkf}
Below,
we analyze the single-step error
incurred by \methodlrkf
without dynamics in the model parameters and scalar linear model, i.e.,
$\vtheta_t = \vtheta_{t-1} = \ldots = \vtheta$
and
$y_t = \vtheta^\intercal\,\vx_t + \ve_t$ with $\var[e_t] = r^2$.
\begin{proposition}
\label{prop:lrkf-approx-error}
Consider the linear model
\[
    y_t = \vtheta^\intercal \vx_t + e_t,
\]
where $e_t$ is zero-mean with $\var[e_t]=r^2$, and let
$\vSigma_{t-1} = \var(\vtheta - \vtheta_{t-1|t-1})$.
Let $\widehat{\vSigma}_{t-1}$ denote the rank-$d$ covariance used by \methodlrkf,
and define
\[
\gamma_t \;=\; \min\!\bigl\{\vx_t^\intercal \vSigma_{t-1}\vx_t,\ \vx_t^\intercal \widehat{\vSigma}_{t-1}\vx_t\bigr\} + r^2
\quad(>0),
\qquad
\epsilon_t \;=\; y_t - \vtheta_{t-1|t-1}^\intercal \vx_t.
\]
Then the one-step difference between the BLUP (full-rank update) and the
rank-$d$ \methodlrkf update satisfies
\begin{equation}
    \|\vtheta_{t|t} - \hat{\vtheta}_{t|t}\|_2
    \;\le\;
    |\epsilon_t|\,\sigma_{d+1}(\vSigma_{t-1})\,\|\vx_t\|_2
    \left(
        \frac{1}{\gamma_t}
        \;+\;
        \frac{\sigma_{1}(\vSigma_{t-1})\,\|\vx_t\|_2^{2}}{\gamma_t^{2}}
    \right).
\end{equation}
Here $\sigma_{1}(\vSigma_{t-1}) \ge \cdots \ge \sigma_{D}(\vSigma_{t-1})$ are the singular values of $\vSigma_{t-1}$,
$\vtheta_{t|t}$ is the BLUP, and $\hat{\vtheta}_{t|t}$ is the rank-$d$ \methodlrkf estimate.
\end{proposition}

In this setting, the surrogate covariance introduced by \methodlrkf is unnecessary,
and $\hat{\vSigma}_{t} = \vC_t^\intercal \vC_t$
reduces to the best rank-$d$ approximation of the EVC matrix $\vSigma_{t-1} = \var(\vtheta - \vtheta_{t-1|t-1})$.
As Proposition \ref{prop:lrkf-approx-error} shows,
a single step of \methodlrkf deviates from the dense (i.e., full-KF) update by a factor governed entirely by the rank truncation.
At time $t$, the residual $\epsilon_t$, the maximum eigenvalue $\sigma_1(\vSigma_{t-1})$, and the feature $\vx_t$ are all fixed.
Consequently, the only way to tighten the bound and reduce the gap to the full update is to increase the chosen rank $d$.

\begin{proof}
We first note that
\begin{equation}
\begin{aligned}
    \vtheta_{t|t} &= \vtheta_{t-1|t-1} + \vK_t\,\varepsilon_t,\\
    \hat{\vtheta}_{t|t} &= \vtheta_{t-1|t-1} + \hat{\vK}_t\,\varepsilon_t,
\end{aligned}
\end{equation}
with
\begin{equation}
\begin{aligned}
    \vK_t = \frac{\vSigma_{t-1}\,\vx_t}{S_t},\quad
    & S_t = \vx_t^\intercal\,\vSigma_{t-1}\,\vx_{t} + r^2,\\
    \hat{\vK}_t = \frac{\hat{\vSigma}_{t-1}\,\vx_t}{\hat{S}_t},\quad
    & \hat{S}_t = \vx_t^\intercal\,\hat{\vSigma}_{t-1}\,\vx_{t} + r^2.
\end{aligned}
\end{equation}
Then
\begin{equation}
\begin{aligned}
    \vK_t - \hat{\vK}_t
    &= \frac{\vSigma_{t-1}\,\vx_t}{S_t} - \frac{\hat{\vSigma}_{t-1}\,\vx_t}{\hat{S}_t} \\
    &= \frac{(\vSigma_{t-1} - \hat{\vSigma}_{t-1})\,\vx_t}{S_t}
       + \hat{\vSigma}_{t-1}\,\vx_t\!\left(\frac{1}{S_t} - \frac{1}{\hat{S}_t}\right).
\end{aligned}
\end{equation}
Thus,
\begin{equation}\label{eq:part-upper-bound-raw}
\begin{aligned}
    \|\vtheta_{t|t} - \hat{\vtheta}_{t|t}\|_2
    &= \| (\vK_t - \hat{\vK}_t)\,\varepsilon_t \|_2
    \le |\varepsilon_t|\,
    \Bigg(
        \underbrace{\left\|\frac{(\vSigma_{t-1} - \hat{\vSigma}_{t-1})\,\vx_t}{S_t}\right\|_2}_{\mathrm{(I)}}
        +
        \underbrace{\left\|\hat{\vSigma}_{t-1}\,\vx_t\!\left(\frac{1}{S_t} - \frac{1}{\hat{S}_t}\right)\right\|_2}_{\mathrm{(II)}}
    \Bigg).
\end{aligned}
\end{equation}

\paragraph{Bound for \(\mathrm{(I)}\).}
\[
\left\|\frac{(\vSigma_{t-1} - \hat{\vSigma}_{t-1})\,\vx_t}{S_t}\right\|_2
\le \frac{\|\vSigma_{t-1} - \hat{\vSigma}_{t-1}\|_2\,\|\vx_t\|_2}{|S_t|}
\le \frac{\sigma_{d+1}(\vSigma_{t-1})\,\|\vx_t\|_2}{\gamma_t},
\]
where we used Eckart–Young–Mirsky \cite{eckart1936lowrank}
and \(|S_t|\ge \gamma_t := \min\{\vx_t^\top\vSigma_{t-1}\vx_t,\ \vx_t^\top\hat{\vSigma}_{t-1}\vx_t\}+r^2>0\).

\paragraph{Bound for \(\mathrm{(II)}\).}
Since
\[
\left|\frac{1}{S_t} - \frac{1}{\hat{S}_t}\right|
= \frac{|\,\hat{S}_t - S_t\,|}{|S_t\,\hat{S}_t|}
= \frac{|\,\vx_t^\top(\hat{\vSigma}_{t-1}-\vSigma_{t-1})\vx_t\,|}{|S_t\,\hat{S}_t|}
\le \frac{\|\vx_t\|_2^2\,\|\hat{\vSigma}_{t-1}-\vSigma_{t-1}\|_2}{\gamma_t^2},
\]
we obtain
\[
\mathrm{(II)} \le \|\hat{\vSigma}_{t-1}\|_2\,\|\vx_t\|_2\,
\frac{\|\vx_t\|_2^2\,\|\hat{\vSigma}_{t-1}-\vSigma_{t-1}\|_2}{\gamma_t^2}
= \frac{\|\hat{\vSigma}_{t-1}\|_2\,\|\vx_t\|_2^3\,\sigma_{d+1}(\vSigma_{t-1})}{\gamma_t^2}.
\]
Since \(\|\hat{\vSigma}_{t-1}\|_2=\sigma_1(\vSigma_{t-1})\), this becomes
\[
\mathrm{(II)} \le \frac{\sigma_1(\vSigma_{t-1})\,\|\vx_t\|_2^3\,\sigma_{d+1}(\vSigma_{t-1})}{\gamma_t^2}.
\]

Plugging the bounds for \(\mathrm{(I)}\) and \(\mathrm{(II)}\) into \eqref{eq:part-upper-bound-raw} yields
\[
\|\vtheta_{t|t}-\hat{\vtheta}_{t|t}\|_2
\le |\varepsilon_t|\,\sigma_{d+1}(\vSigma_{t-1})\,\|\vx_t\|_2
\left(\frac{1}{\gamma_t} + \frac{\sigma_1(\vSigma_{t-1})\,\|\vx_t\|_2^{2}}{\gamma_t^{2}}\right),
\]
as claimed.
\end{proof}

\subsection{Replay-buffer variational Bayesian last layer  (\newmethodvb)}
\label{sec: new method vb}

The \methodvbll method has explicit posterior predictive
\begin{equation}
    p(\vy \cond \vell, \vh, \vSigma, \vR) =
    {\cal N}\left(\vy \cond \vell^\intercal\,\psi(\vh, \vx+),\,
    \vell^\intercal\,\psi(\vh, \vx)\,\vSigma\,\psi(\vh, \vx)^\intercal\,\vell + \vR\right). \nonumber
\end{equation}

Building on \cite{harrison2024variational,brunzema2024bayesian},
we observe that for any given $q(\bar{\plast}) = {\cal N}(\bar{\plast} \cond \plast, \vSigma)$,
 a lower-bound for the marginal log-likelihood of a single datapoint is given by
\begin{equation}
\begin{aligned}
    \log p(\vy_t \cond \vx_t, \phidden, \vR)
    &\geq \mathbb{E}_{q(\bar{\plast})}[\log p(\vy_t \cond \vx_t, \bar{\plast}, \phidden, \vR)]\\
    &= \log {\cal N}\left(\vy_t \cond \vell^\intercal\phi(\vh, \vx_t), \vR\right)
    - \frac{1}{2}\phi(\vh, \vx_t)\vSigma\,\phi(\vh, \vx_t)\Tr\left(\vR^{-1}\right)\\
    &=: -{\cal L}_t(\plast, \phidden, \vR, \vSigma).
\end{aligned}
\end{equation}

Following \cite{knoblauch2022optimization, khan2023bayesian, jones2024bayesian}
a \textit{generalized} posterior with loss function ${\cal L}$ is given by
\begin{equation}
    q_t(\plast, \phidden, \vR, \vSigma) \propto q_{t-1}(\plast, \phidden, \vR, \vSigma)\,\exp(-{\cal L}_t(\plast, \phidden, \vR, \vSigma)).
\end{equation}

Estimation of the posterior mean (MAP filtering) involves estimating
\begin{equation}
    \plast_t, \phidden_t, \vR_t, \vSigma_t = \argmax_{\plast, \phidden, \vR, \vSigma}q_t(\plast, \phidden, \vR, \vSigma)
\end{equation}
which can be done implicitly through adaptive optimization methods  as shown in \cite{bencomo2023implicit}.
Next, to improve the performance,
we consider a replay-buffer which has been shown to be much more efficient than doing fully-online SGD
\cite{arnaboldi2024repeatsgd,dandi2024repeatsgd,lee2024repeatsgd}.
 This assumption breaks with the fully-online assumption,
but it is a good contender in sequential decision making problems in stationary environments.

\section{Additional experiments}
\label{sec:experiments-further-results}

\subsection{Online classification and sequential decision making on the MNIST dataset}
\label{sec:further-results-mnist}

The results in Section \ref{experiment:mnist-bandits} consider the LeNet5 convolutional neural network (CNN) architecture
\cite{lecun1998gradient}.
In this architecture, the last layer is $80$-dimensional and the output layer is $10$-dimensional,
which corresponds to each of the $10$ possible classes.
Applying \newmethodlin in this setting would need the storage and update of an $800\times800$ covariance matrix
and $800$-dimensional mean vector.
Sequential update of the Cholesky representation of this covariance matrix, although feasible,
takes most of the computational cost in a single step of \newmethodlin.
Given this, here we consider \newmethodlinlow to offset the computational cost of the layer layer.
Another reason to prefer \newmethodlinlow over \newmethodlin in this setting is that the rank of the last layer is $800$-dimensional, 
whereas the rank for the hidden layers is $50$ dimensional.
This corresponds to an overparametrized linear regression, which has been shown to have some pathologies in the offline setting \cite{xu2023linregoverparameterised}.

As a simpler baseline, which does not model uncertainty,
we consider  \textbf{Muon} (\methodshampoo) \cite{jordan2024muon},
which a special case of the Shampoo optimizer \cite{gupta2018shampoo}.
This is  a quasi second-order optimization method that is  scalable and shows strong empirical performance. 
We use the implementation in the Optax library \cite{deepmind2020jax}.
Unlike the other methods, it only computes a point estimate, so we cannot use it for computing the posterior predictive.
We use this method for the $\epsilon$-greedy bandit in Appendix \ref{sec:further-results-mnist-bandit}
and as an additional choice of optimizer in Section \ref{sec:classification-as-regression}.

\subsubsection{Online classification on the MNIST dataset}
\label{sec:classification-as-regression}

In this experiment, we consider the problem online learning and classification on the MNIST dataset using the LeNet5 CNN.
We use the optimizers detailed in Section \ref{experiment:mnist-bandits}.
We take the data $\data_{1:T}$ with $T=60,000$ and $\data_t = (\vx_t, \vy_t)$,
where $\vx_t$ is a $(28\times28\times 1)$ array and $\vy_t \in \{0,1\}^{10}$ a one-hot-encoded
vector such that $(\vy_t)_i = 1$ if $\vx_t$ represents the digit $i$ and $0$ otherwise.
At every timestep $t=1,\ldots, T$
each agent is presented the image $\vx_t$ which it has to classify.
A predicted classification is made through the prediction
$\vy_{t|t-1} = \nn(\vmu_{t|t-1}, \vx_t)$ with $\vmu_{t|t-1} = \mathbb{E}[\vtheta_t \cond \data_{1:t-1}]$
and then updates its beliefs given the (true) reward $\vy_t$. 

Having the BLUP over parameters $\vtheta_{t|t-1} = \mathbb{E}[\vtheta_t \cond \data_{1:t-1}]$,
the linearized model has mean and variance
\begin{equation}\label{eq:linearised-multi-output-clf-model}
\begin{aligned}
    \vm(\vtheta_t, \vx_t) &= {\rm softmax}(\nn(\vtheta_t, \vx)),\\
    \bar{\vm}_t &= \vm(\vtheta_{t|t-1}, \vx_t) + \nabla_\vtheta \vm(\vtheta_{t|t-1}, \vx)(\vtheta - \vm_{t-1}),\\
    \bar{\vs}_t &= {\rm diag}(\vm(\vtheta_{t|t-1}, \vx_t)) + \vm(\vtheta_{t|t-1}, \vx_t)\,\vm(\vtheta_{t|t-1}, \vx_t)^\intercal +  \varepsilon\,\vI_\dimobs.
\end{aligned}
\end{equation}
This moment-matched linearization corresponds, in the Bayesian setting,
to a Gaussian likelihood whose first and second moments match that of a Multinomial distribution.
This representation was introduced in the context of online learning in \cite{ollivier2018expfamekf}.

In \eqref{eq:linearised-multi-output-clf-model},
${\rm diag}(\vv)$ is a function that takes as input the vector $\vv \in \reals^m$ and outputs a
diagonal matrix with entries ${\rm diag}(\vv)_{i,j} = \vv_i\,\delta(i - j)$, 
and 
the term $\varepsilon > 0$ is a small constant that ensures non-zero variance.

\paragraph{Online learning results.}
Figure \ref{fig:mnist-online-classification}
shows the 5000-step rolling mean of the one-step-ahead classification outcome
(one for correct, zero for incorrect)
using the various learning algorithms.
We observe that there is no clear difference
between the methods.
This shows  that one can tackle this complete information problem 
without needing to model uncertainty.
This is not the case in the incomplete information case that we study in any of the experiments presented in Section \ref{sec:experiments}.

\begin{figure}[htb]
    \centering
    \includegraphics[width=0.9\linewidth]{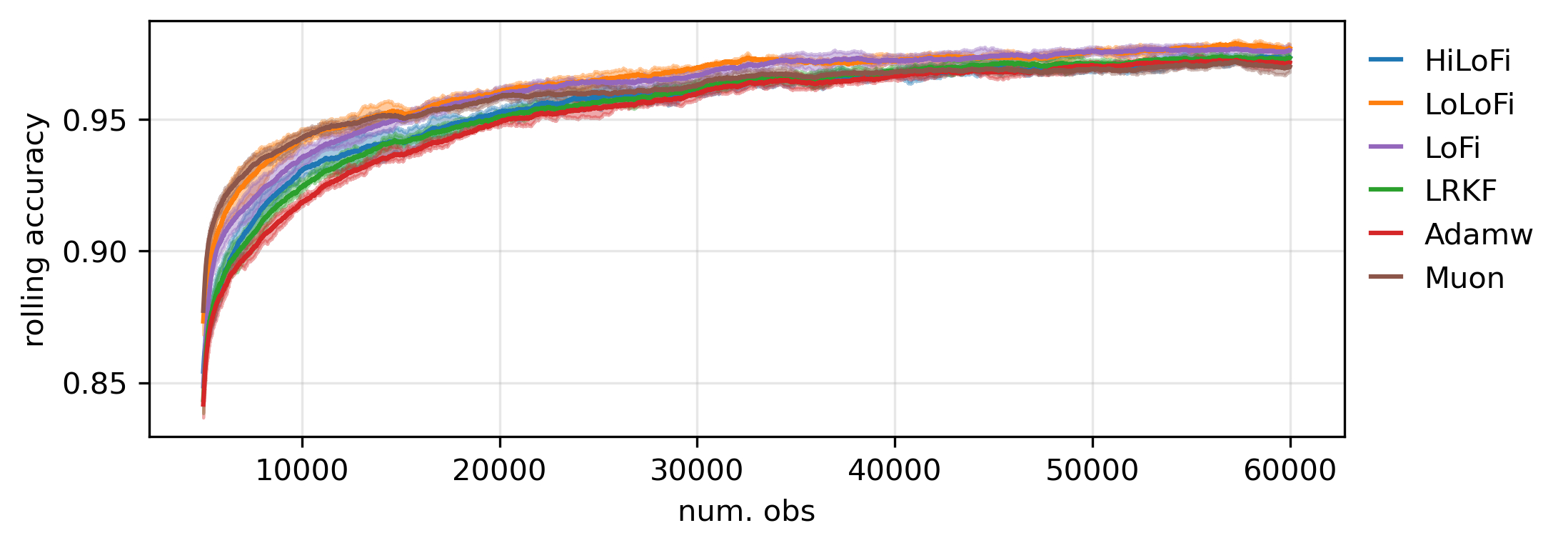}
    \caption{
        Rolling one-step-ahead accuracy for the MNIST dataset.
    }
    \label{fig:mnist-online-classification}
\end{figure}

Figure \ref{fig:mnist-online-classification-rank-comparison}
shows the one-step-ahead accuracy of the last $10,000$ images and the running time of processing all $60,000$ images
for \newmethodlin, \newmethodlinlow, \methodlrkf, and \methodlofi as a function of their rank.
For \newmethodlinlow, we fix the rank of the last layer to be $50$ and vary the rank of the hidden layers.
\begin{figure}[htb]
    \centering
    \includegraphics[width=\linewidth]{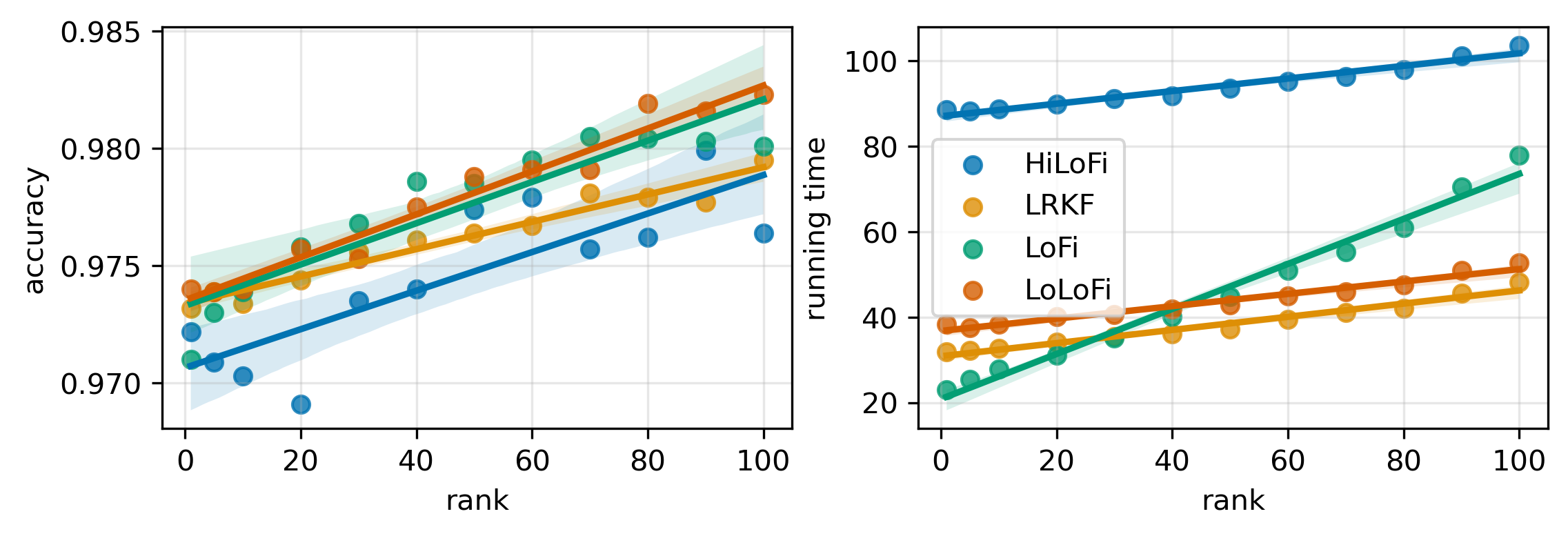}
    \caption{
        Results of MNIST for online classification.
        \textit{Left panel}: accuracy as a function of rank.
        \textit{Right panel:} running time as a function of rank.
    }
    \label{fig:mnist-online-classification-rank-comparison}
\end{figure}
We observe that for low-ranks up to dimension $20$, \methodlofi is faster than both \newmethodlinlow and \methodlrkf.
However, the running time of \methodlofi increases much more rapidly (as a function of rank) than either \newmethodlinlow and \methodlrkf. 
This is a consequence of the computational costs associated 
with points (1,2,3) in Appendix \ref{sec:computational-complexity}.
Next, we observe that the running time of \newmethodlinlow is sightly above \methodlrkf for varying rank;
this is because of the double approximation of the covariance matrix: one for the hidden layers and another one for the last layer.
Finally, we observe that \newmethodlin is the method with highest computational cost in this experiment.
This is due the high-dimensionality of the last-layer, relative to the rest of the methods.

Lastly, we observe that all methods increase their performance as a function of rank, albeit marginally.
The method that most benefit from an increase in performance in \newmethodlin.
We hypothesize that this is due to the total rank of the hidden layer, relative to the rank of the last layer,
which in this experiment is $800$.

\subsubsection{Error analysis for \methodlrkf}
\label{sec:error-analysis-lrkf-mnist}

The top panel of Figure \ref{fig:mnist-lenet-error-analysis} shows
the rolling mean for the two sources of approximation error in the covariance
matrix for the \methodlrkf method (detailed in Proposition \ref{prop:lrkf-upper-bound-error}),
as well as the rolling one-step-ahead accuracy.
\begin{figure}[htb]
    \centering
    \includegraphics[width=0.9\linewidth]{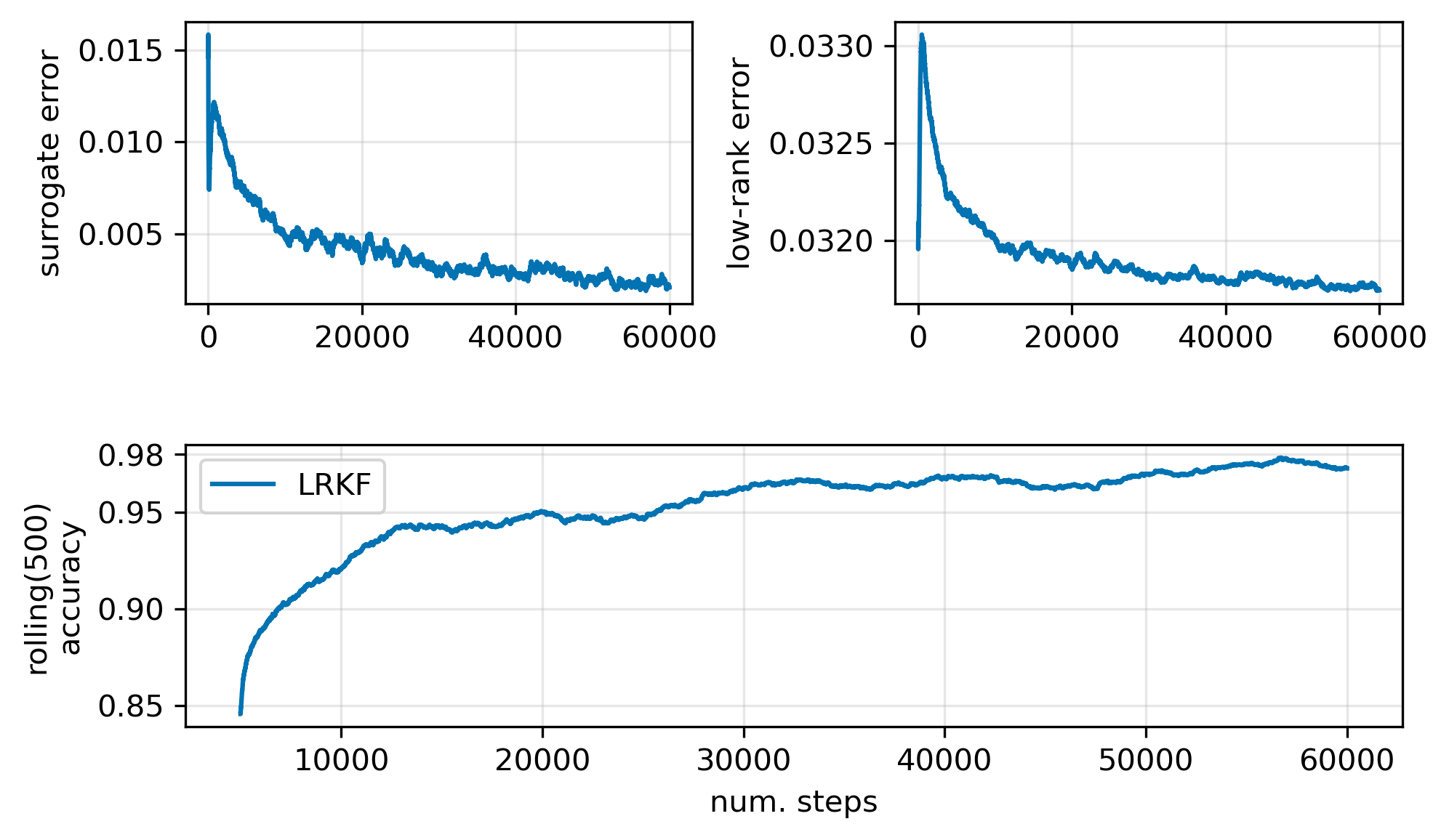}
    \caption{
        Top: upper bound error for the surrogate error and the low-rank approximation.
        Bottom: $5,000$-step rolling one-step-ahead accuracy for the MNIST dataset.
    }
    \label{fig:mnist-lenet-error-analysis}
\end{figure}
We observe that, on average, both sources of error decrease over time and the accuacy of \methodlrkf improves over time.
Here, the only sources of error are the surrogate error and the low-rank error.
This is because \methodlrkf does not distinguish between last-layer and hidden-layer parameters.

\subsubsection{MNIST as multi-armed bandit}
\label{sec:further-results-mnist-bandit}

Here, we present further results from Section \ref{experiment:mnist-bandits}
More precisely,  we study the multi-armed bandit approach to solving the MNIST classification problem.

\paragraph{Choice of hyperparameters.}
The hyperparameters for this experiment were chosen as follows:
For Adamw and \methodshampoo (used in $\varepsilon$-greedy and in conjunction with \methodlaplace),
we use a learning rate of $10^{-4}$, and $\varepsilon=0.05$
For Adamw, we take 5 inner iterations and a buffer size of one and for \methodshampoo, we take 1 inner iteration and a buffer size of one.
Next, \newmethodlin considers a rank of $50$ for the hidden parameters, we take $q_{\phidden, t} = 10^{-6}$ and $q_{\plast} = 10^{-6}$.
The initial covariances $\vSigma_{\phidden, 0}$ and $\vSigma_{\plast, 0}$ are both initialized as identity times a factor of $10^{-1}$.
Similarly, \newmethodlinlow is initialized as with \newmethodlin, but with rank in the last-layer to be $100$.
For \methodlrkf, we consider a rank of $50$, $\vSigma_{0}$ is initialized as the identity matrix, and $q_t = 10^{-6}$.
Finally, for \methodlofi, we also considered rank $50$;
then, following the experiments in \cite{chang23lofi},
we considered the initial covariance to be $a\vI$, with $a=\exp(-8)$
(this corresponds to low-rank comprised of a matrix of zeros and diagonal terms, all set to $a$); lastly,
 the dynamics covariance is $q_t = 0$.

All initial model parameters from the neural network are shared across methods and trials.

\paragraph{Regret analysis.}
Below, we present results considering regret. Here, the reward is either $1$,
if the classification is done correctly and $0$ otherwise.
Let $y_t \in \{0, 1\}$ be the reward obtained at time $t$,
and $\vy_{t,a}$ be the value of arm $a$.
Then the regret obtain at time $t$ is given by
$$
    \sum_{\tau=1}^t (\max_a \vy_{\tau,a} - y_\tau) = t - \sum_{\tau=1}^t y_{\tau}.
$$
Figure \ref{fig:mnist-bandit-regret} shows the average regret (across 10 trials)
for \newmethodlin, \newmethodlinlow, \methodlrkf, \methodlofi, and \methodlaplace.
Here, we consider $\epsilon$-greedy variants for \newmethodlin, \newmethodlinlow, and \methodlrkf.
Next,
TS with \methodlaplace and Adamw optimizer is denoted adamw-TS,
TS with \methodlaplace and   \methodshampoo optimizer is denoted muon-TS.
$\epsilon$-greedy with Adamw optimizer is the denoted adamw-eps,
and
$\epsilon$-greedy with \methodshampoo optimizer is denoted muon-eps.
\begin{figure}[htb]
    \centering
    \includegraphics[width=\linewidth]{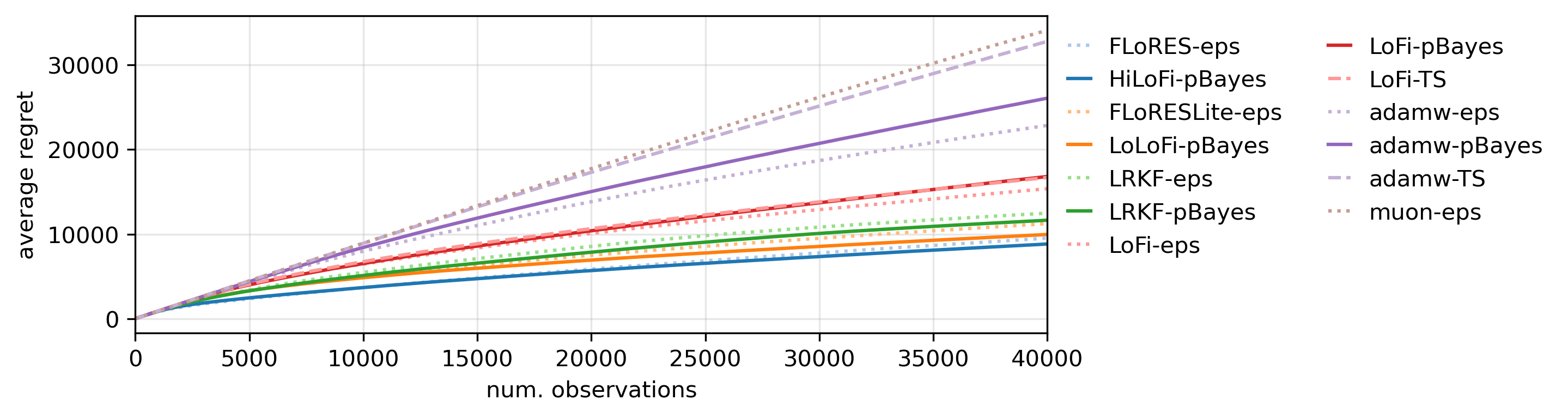}
    \caption{
        Total running time for experiments ($x$-axis) versus
        average regret across ten trials ($y$-axis).
    }
    \label{fig:mnist-bandit-regret}
\end{figure}

\paragraph{Time-regret analysis.}
Figure \ref{fig:mnist-bandit-time-reward}
shows the total running time to run the ten trials in the $x$-axis,
the regret around one-standard deviation in the $y$-axis.
\begin{figure}[htb]
    \centering
    \includegraphics[width=\linewidth]{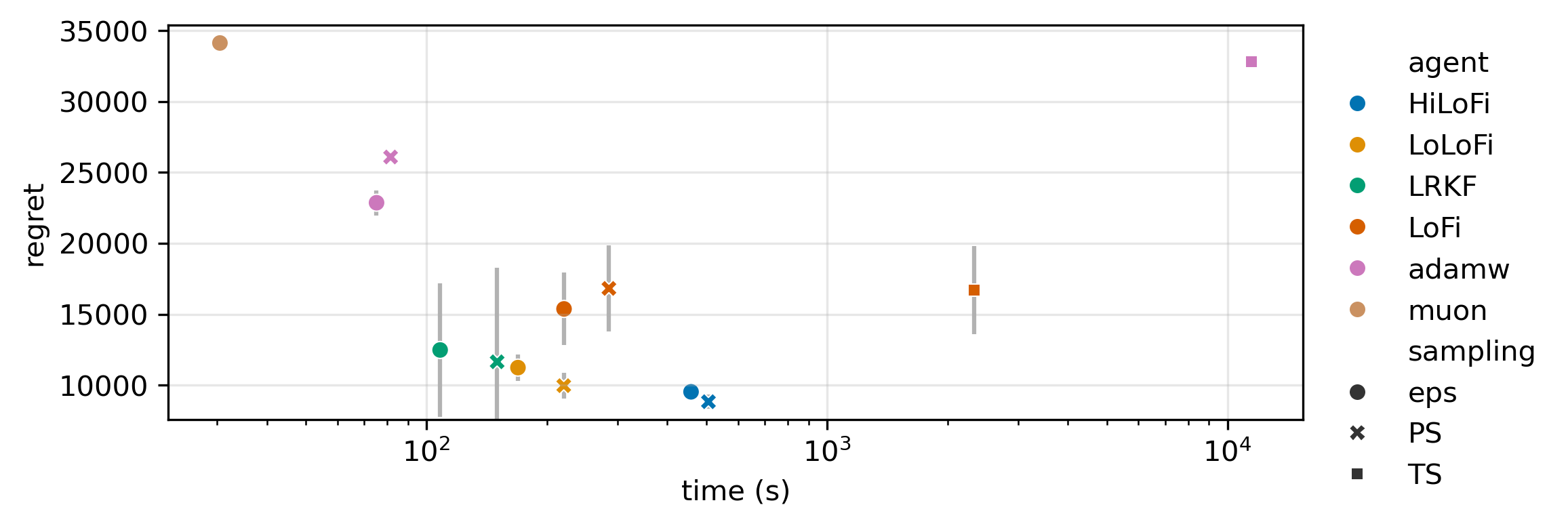}
    \caption{
        Total running time for experiments ($x$-axis) versus
        mean relative cumulative reward ($y$-axis).
    }
    \label{fig:mnist-bandit-time-reward}
\end{figure}
We observe Adamw and \methodshampoo are the methods with lowest performance and lowest relative cumulative reward
for either $\epsilon$-greedy or TS using \methodlaplace.
Next, we observe that \methodlrkf is faster than \newmethodlinlow, and \newmethodlinlow is faster than \newmethodlin.
This result is expected and is explained in Appendix \ref{sec:computational-complexity}.
We also observe that \methodlofi with $\epsilon$-greedy matches \newmethodlinlow with PS in time;
however, \newmethodlinlow has lower regret.
We conjecture that this result is due to the explicit modeling of the last layer done for \newmethodlinlow.

Finally, we observe that for all methods, $\epsilon$-greedy is faster than TS.
This is because $\epsilon$-greedy only requires the evaluation of the posterior predictive mean,
whereas TS samples from the posterior predictive,
which in turn requires building the posterior predictive covariance shown in \eqref{eq:pp-hilofi}.

\subsection{Online classification using \methodlrkf on the CIFAR-10 dataset}
\label{sec:cifar-10-vgg}
In this Section, we study the ability of \methodlrkf to scale to million-dimensional parameters
for online classification on the CIFAR-10 dataset.
We consider two VGG-style convolutional networks of different capacity. Both follow the pattern of stacked
$3\times 3$ convolutions, ELU activations.
We summarize the architectures below
\begin{table}[htb]
    \centering
    \begin{tabular}{lccc}
    \toprule
    \textbf{Model} & \textbf{Conv Blocks} & \textbf{Dense Layers} & \textbf{Params} \\
    \midrule
    VGG        & $64\times2 \rightarrow 128\times2 \rightarrow 256$ & $256 \rightarrow 256$ & 1.7M \\
    VGG+Block  & $64\times2 \rightarrow 128\times2 \rightarrow 256 \rightarrow 512\times2$ & $256 \rightarrow 256$ & 4.7M \\
    \bottomrule
    \end{tabular}
    \caption{Summary of the two VGG-style architectures used in our experiments. 
    ``$C\times n$'' denotes $n$ convolutional layers with $C$ channels.}
    \label{tab:vgg-architectures}
\end{table}
We follow a similar setup as that of Appendix \ref{sec:classification-as-regression}.
We take the data $\data_{1:T}$ with $T=10,000$ and $\data_t = (\vx_t, \vy_t)$,
where
$\vx$ is a $(32\times32\times 3)$-dimensional array and $\vy_t \in \{0,1\}^{10}$
a one-hot encoded vector such that $\vy_t$.
At every timestep $t=1,\ldots, T$
each agent is presented the image $\vx_t$ which it has to classify.
A predicted classification is made through the prediction
$\vy_{t|t-1} = \nn(\vmu_{t|t-1}, \vx_t)$ with $\vmu_{t|t-1} = \mathbb{E}[\vtheta_t \cond \data_{1:t-1}]$
and then updates its beliefs given the (true) reward $\vy_t$. 
Updates are done as outlined in Appendix \ref{sec:classification-as-regression}.

Table \ref{tab:vgg-results} shows the one-step-ahead accuracy for \methodlrkf
for ranks $1$, $5$, and $10$.
\begin{table}[htb]
    \centering
    \begin{tabular}{lccc}
    \toprule
    \textbf{Model} & \textbf{\methodlrkf-1} & \textbf{\methodlrkf-5} & \textbf{\methodlrkf-10} \\
    \midrule
    VGG (1.7M)       & 0.3355 & 0.3384 & 0.3379 \\
    VGG+Block (4.7M) & 0.3129 & 0.3141 & 0.3148 \\
    \midrule
    \textbf{AdamW (baseline)} & \multicolumn{3}{c}{VGG: 0.3442 \quad\; VGG+Block: 0.3097} \\
    \bottomrule
    \end{tabular}
    \caption{
    Online classification results on CIFAR-10 using \methodlrkf with varying ranks (1, 5, 10),
    compared against AdamW baselines.
    Lower values are better.
    }
    \label{tab:vgg-results}
\end{table}
These results show that \methodlrkf scales reliably to multi-million parameter networks
while maintaining stable performance across different low-rank configurations.
This highlights the broader applicability of our approach (including \newmethodlin and \newmethodlinlow)
to large-scale architectures in vision and related domains where sequential decision making is needed.

\subsection{Bandits as recommendation systems}
\label{sec:further-results-recommend}
Further results from Section \ref{experiment:bandits}

\paragraph{Choice of hyperparameters.}
In this experiment, all agents shared a \textit{rank} of $20$.
The choice of hyperparameters for \methodlaplace, \methodlofi, \methodvbll, and \newmethodvb
were chosen following analysis of performance on the first $10,000$ observations of the datasets.
In particular, we tried Bayesian optimization techniques, but found that manual tuning of
hyperparameters on the first $10,000$ observations yielded higher overall performance.

For \newmethodvb, we considered a buffer size of $10$, a regularization weight of $1.0$, and $50$ inner iterations.
For \methodvbll, we considered a buffer size of $1000$,
a regularization weight of $10^{-3}$,
and
$100$ inner iterations.

Next, for \newmethodlin, we take $q_{\phidden,t} = q_{\plast,t} = 0.0$ and set the initial covariances to
be the identity.
Next, for \methodlrkf, we take $q_t = 0$ and identity matrix as initial covariance.

\paragraph{Further results.}
Figure \ref{fig:bandits-cumulative-rewards} shows the daily cumulative reward for all methods.
To produce this plot, we sum the cumulative rewards for all users at the end of each day and then perform
a cumulative sum over the end-of-day reward.
\begin{figure}[htb]
    \centering
    \includegraphics[width=0.8\linewidth]{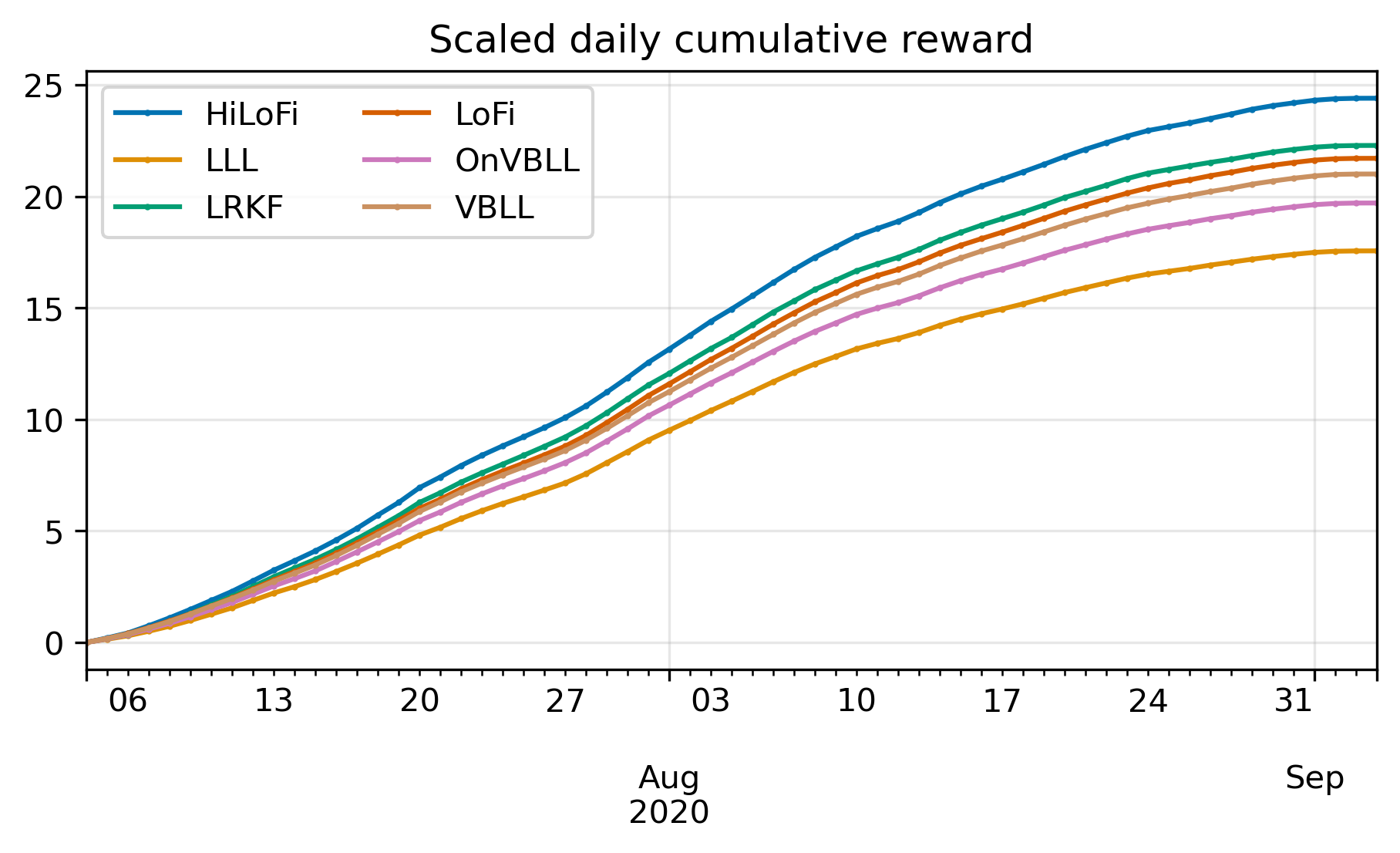}
    \caption{
        Daily cumulative reward.
    }
    \label{fig:bandits-cumulative-rewards}
\end{figure}
We observe that \newmethodlin obtains the highest daily cumulative reward.
The method with second highest daily cumulative reward is \methodlrkf, closely followed by \methodlofi and \methodvbll.
However, as shown in Figure \ref{fig:bandits-reward-boxplot}. \methodlrkf is more than an order of magnitude faster than \methodvbll.
From Figure \ref{fig:bandits-cumulative-rewards} and Figure \ref{fig:bandits-cumulative-rewards},
we observe that the top performing methods in this experiment are \newmethodlin and \methodlrkf.
However, \methodlrkf is slightly faster than \newmethodlin relative to all other methods.

\subsection{Bayesian optimization}
\label{sec:further-results-BO}
Further results from Section \ref{experiment:bayesopt}

\paragraph{Choice of hyperparameters.}  We fix hyperparameters across methods to ensure fairness and reproducibility. 
For rank-based methods, we use a rank of 50. 
Methods requiring buffers (e.g., \methodlaplace, \newmethodvb) use a FIFO replay buffer of size 20, 
with 50 inner iterations per update. 
\methodvbll uses the full dataset and 100 iterations per step. 


Where applicable, aleatoric uncertainty is discarded ($\vR_t = 0$) to isolate epistemic effects. 
We consider equal learning rates across optimizer-based methods (e.g., \methodlaplace, \newmethodvb) at $10^{-4}$. 
In filtering-based methods (e.g., \newmethodlin, {\methodlofi}), this learning rate corresponds to the initial hidden-layer covariance; 
we set the final-layer prior variance to 1, emphasising epistemic modelling in the output layer. 
Hyperparameters for {\methodvbll} (Wishart scale, regularization weight) follow prior work \cite{brunzema2024bayesian}.

\paragraph{Per-step results.}
Figure \ref{fig:bayesopt-steps} shows the median performance and the interquartile range
of the best value found by each method for all test datasets.
\begin{figure}[htb]
    \centering
    \includegraphics[width=\linewidth]{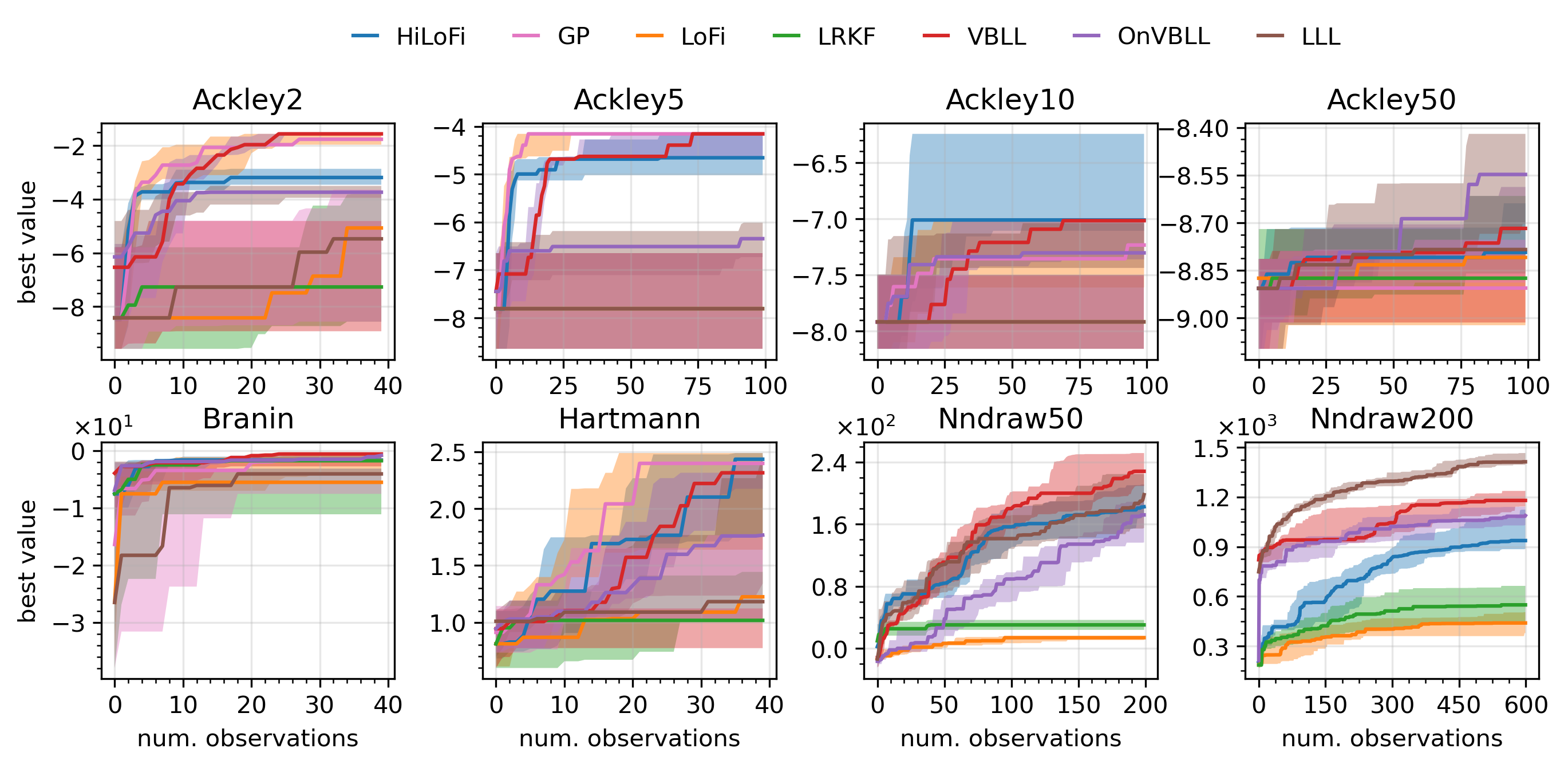}
    \caption{
        Bayesian optimization benchmark on stationary environments.
    }
    \label{fig:bayesopt-steps}
\end{figure}
We observe that \methodvbll, \newmethodvb, and \newmethodlin are among the top performers for all datasets.
Next, \methodlaplace is the least performant method for the low-dimensional datasets, but among the top performers
for Nndraw50 and the top perfomer in Nndraw200.

Next, Figure \ref{fig:bayesopt-iterations-expected-improvement} shows the median performance and the
interquartile range of the best value found by each method for all test datasets using expected improvement.
\begin{figure}[htb]
    \centering
    \includegraphics[width=\linewidth]{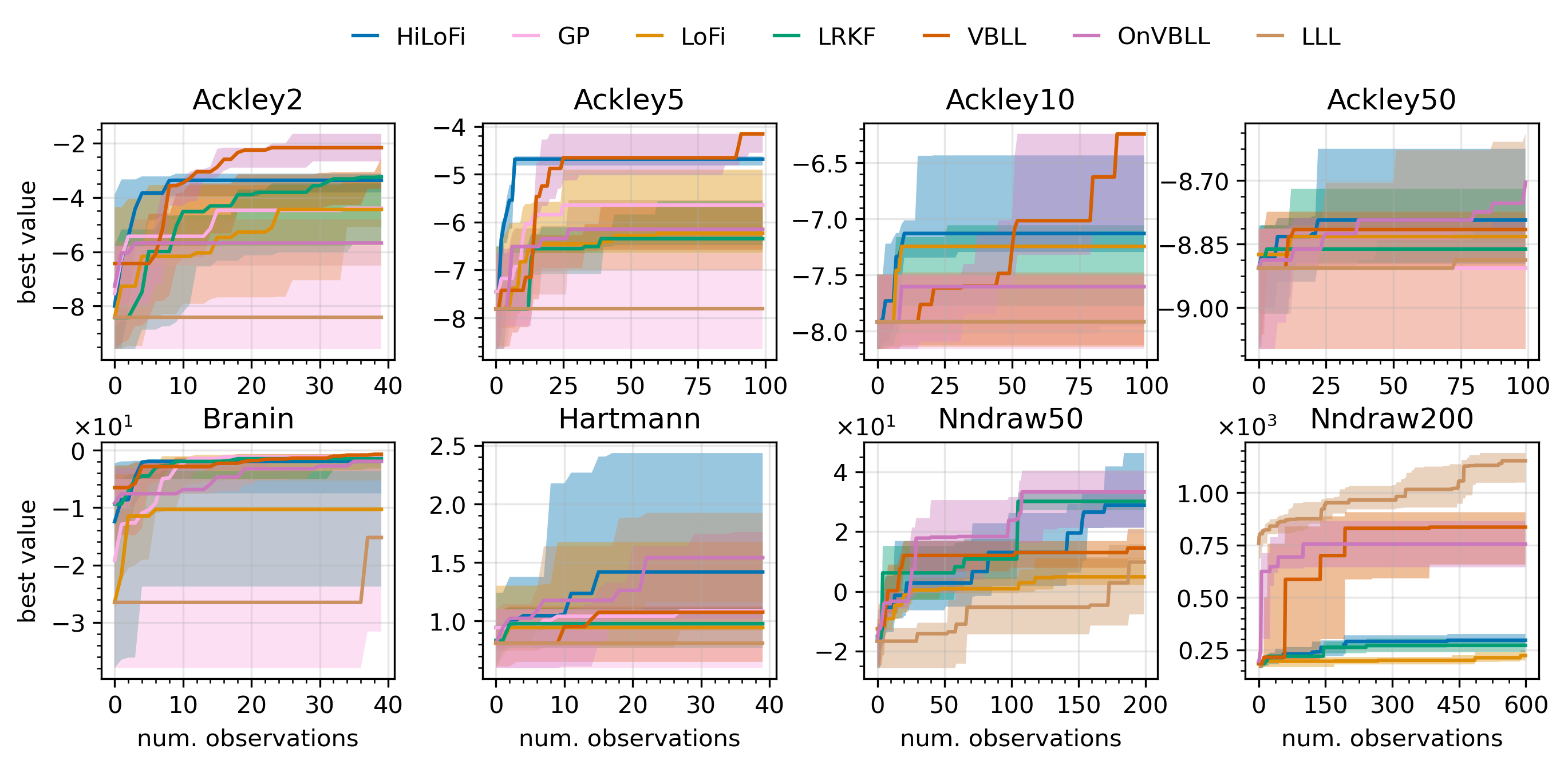}
    \caption{
        Bayesian optimization benchmark on stationary environments.
        Query points are chosen using expected improvement.
    }
    \label{fig:bayesopt-iterations-expected-improvement}
\end{figure}

\subsubsection{NN Draw}
Here, we provide further results for the performance of \newmethodlin and \methodlrkf
for the Bayesian optimization problem on the DrawNN dataset.

Figure~\ref{fig:nn-draw-hilofi-varying-dim} reports the best value found over the course of the optimization as a function of iteration, for different ranks.
\begin{figure}[htb]
    \centering
    \includegraphics[width=0.8\linewidth]{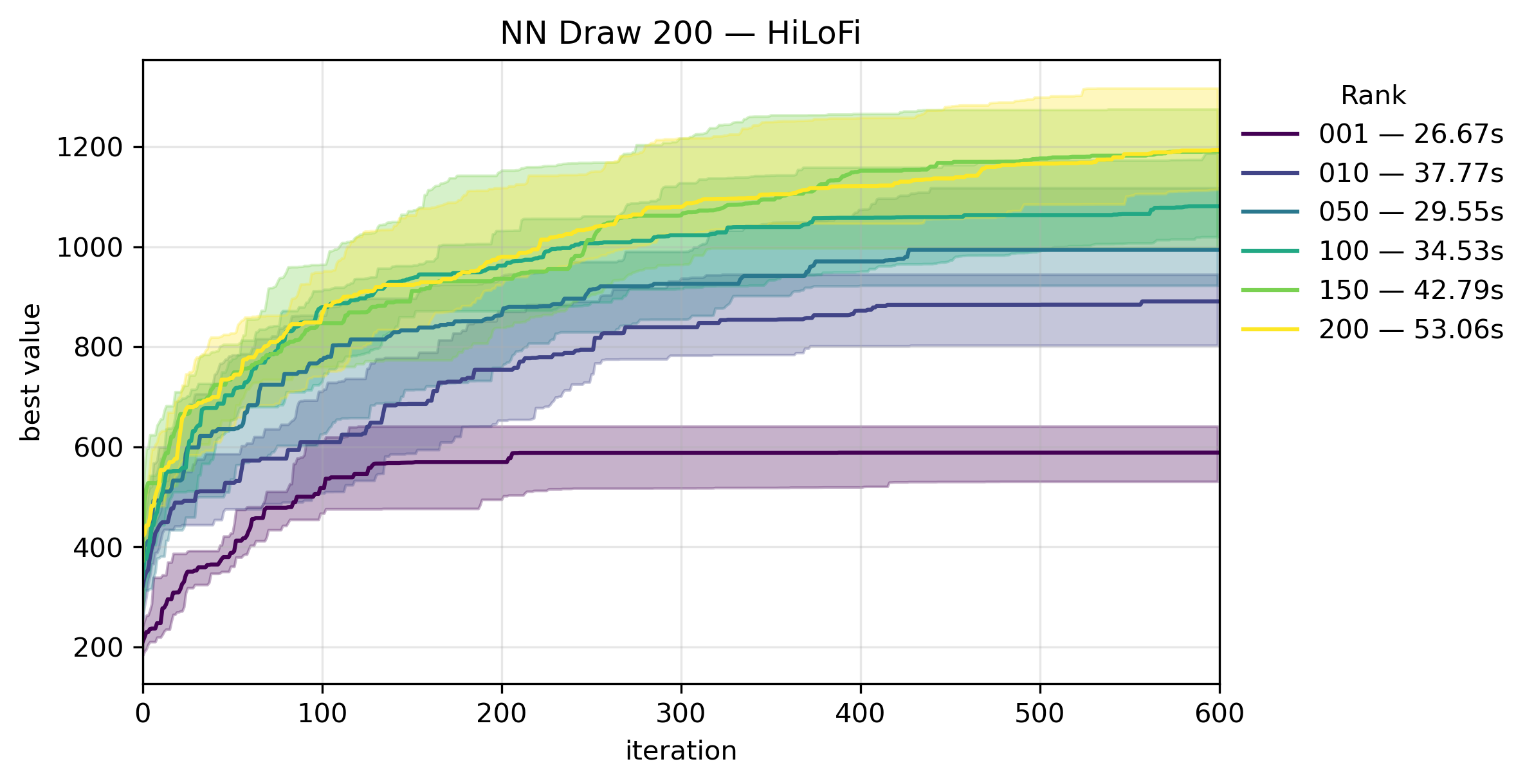}
    \caption{
        Solid lines show the median best value over 20 runs, and shaded regions denote the interquartile range.
    }
    \label{fig:nn-draw-hilofi-varying-dim}
\end{figure}
We observe that increasing the rank generally improves performance,
with higher ranks yielding sharper improvements early in the search that compound into stronger final results.
This comes with a trade-off between computational cost and the ability to capture uncertainty more effectively,
as illustrated in Table~\ref{tab:nn-draw-hilofi-varying-dim}, which reports the final performance and running time across ranks.
\begin{table}[htb]
\centering
\begin{tabular}{rcc}
\toprule
\textbf{Rank} & \textbf{Time (s)} & \textbf{Final $y_{\text{best}}$} \\
\midrule
1   & 26.226  & 656.558 \\
10  & 38.4492 & 630.556 \\
20  & 28.8612 & 670.913 \\
50  & 30.6409 & 684.009 \\
100 & 34.5302 & 782.205 \\
110 & 35.1504 & 774.416 \\
120 & 39.2540 & 807.298 \\
130 & 38.2288 & 846.218 \\
140 & 39.1417 & 848.219 \\
150 & 40.7939 & 864.806 \\
160 & 42.1907 & 911.116 \\
170 & 44.5485 & 900.566 \\
180 & 45.5640 & 864.260 \\
190 & 47.1553 & 911.262 \\
200 & 48.2127 & 921.062 \\
\bottomrule
\end{tabular}
\caption{Performance metrics across different low-rank configurations.}
\label{tab:nn-draw-hilofi-varying-dim}
\end{table}
We observe that both performance and running time generally increase with rank.

Finally, Figure~\ref{fig:nn-draw-lrkf-varying-dim} shows the best value obtained during optimization as a function of the number of observations, for different ranks using \methodlrkf.
\begin{figure}[htb]
\centering
\includegraphics[width=0.85\linewidth]{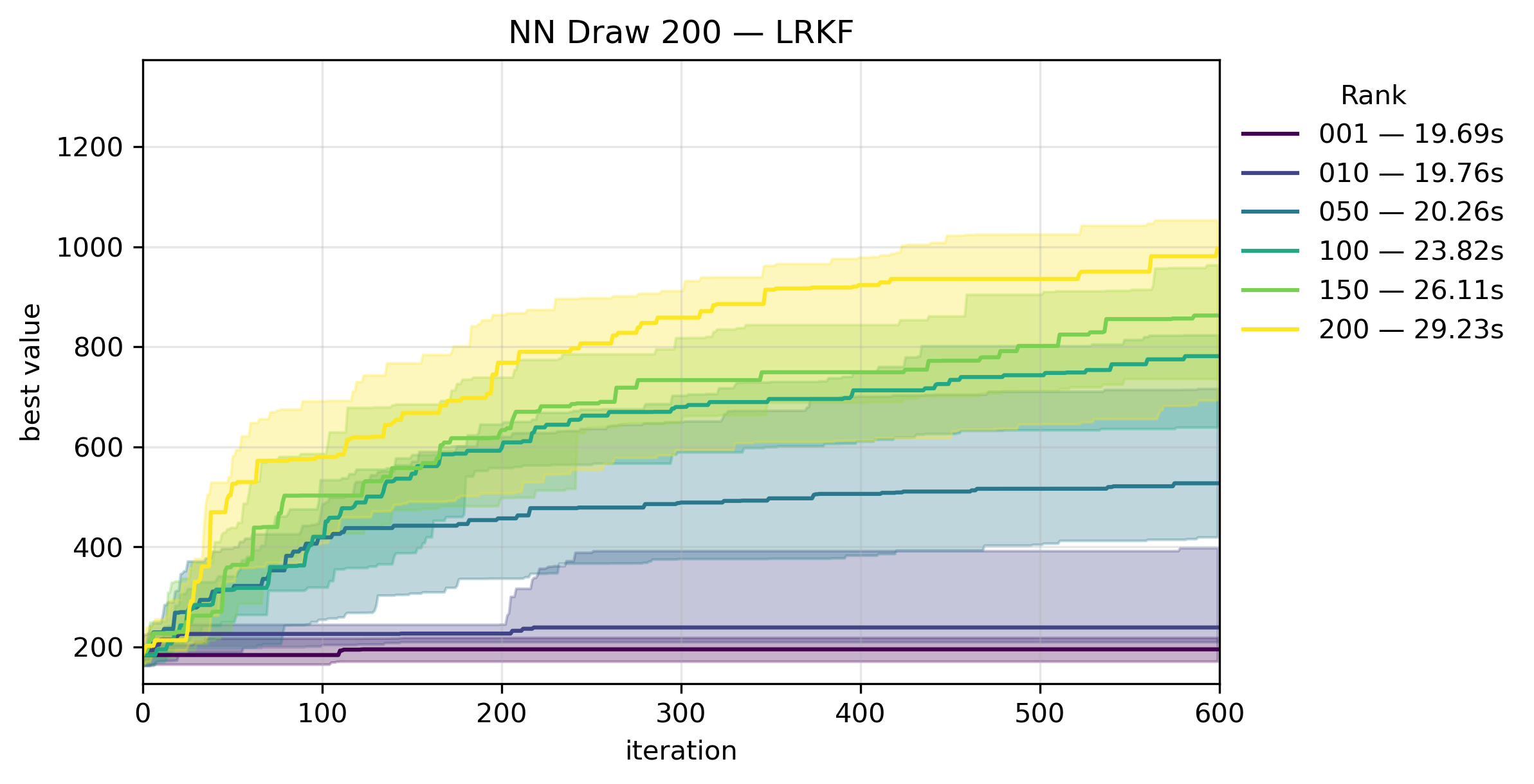}
\caption{Bayesian optimization on the NNDraw dataset using \methodlrkf.
Solid lines show the median best value over 20 runs, and shaded regions denote the interquartile range.}
\label{fig:nn-draw-lrkf-varying-dim}
\end{figure}
Compared to \newmethodlin, \methodlrkf requires substantially higher ranks to achieve similar median performance.
For instance, performance is comparable at rank~300 for \methodlrkf (73s) and rank~200 for \newmethodlin (under 50s).
This demonstrates that explicitly modeling the last layer in full rank,
as done in \newmethodlin, yields a more favorable accuracy–efficiency trade-off.

\subsection{In-between uncertainty}
\label{experiment:in-between-uncertainty}

In this experiment, we test the ability of \newmethodlin to capture the \textit{in-between uncertainty} \cite{foong2019between}
after a single pass of the data.
We consider the one-dimensional dataset introduced in \cite{van2021feature} (Figure 1).
For this experiment, we consider a four hidden-layer MLP with 128 units per layer and ELU activation function.

\paragraph{Result for \newmethodlin.}
We take $q_{\phidden, t} = q_{\plast, t} = 0$,
initial covariances for last and hidden layers to be identity times $1/2$, and
$\vR_t = 0.0$, which corresponds to having no aleatoric uncertainty in the data generating process.

Figure \ref{fig:in-between-uncertainty-by-rank}
shows the posterior predicted mean
surrounded by the two-standard deviation posterior predictive for various ranks in the hidden layer.

\begin{figure}[htb]
    \centering
    \includegraphics[width=\linewidth]{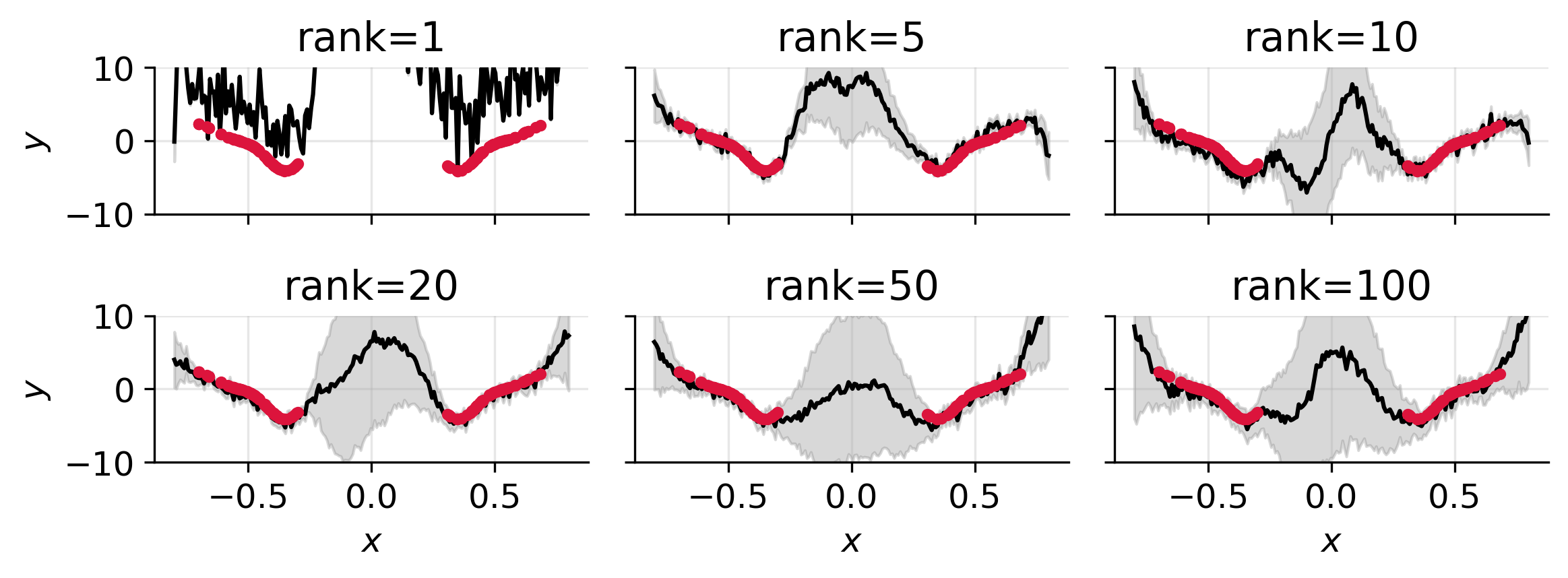}
    \caption{
        In-between uncertainty of the posterior predictive induced by \newmethodlin as a function of rank.
    }
    \label{fig:in-between-uncertainty-by-rank}
\end{figure}

We observe that a rank of $1$ is not able to capture any uncertainty around the region with no observations;
however, as we increase the rank, we observe that the posterior predictive becomes less and less confident about
the \textit{true} value of the mean on regions without data.

Next, Figure \ref{fig:in-between-uncertainty-by-timestep}
shows the evolution of the posterior predictive mean and the posterior predictive variance
as a function of the number of seen observations.

\begin{figure}[htb]
    \centering
    \includegraphics[width=\linewidth]{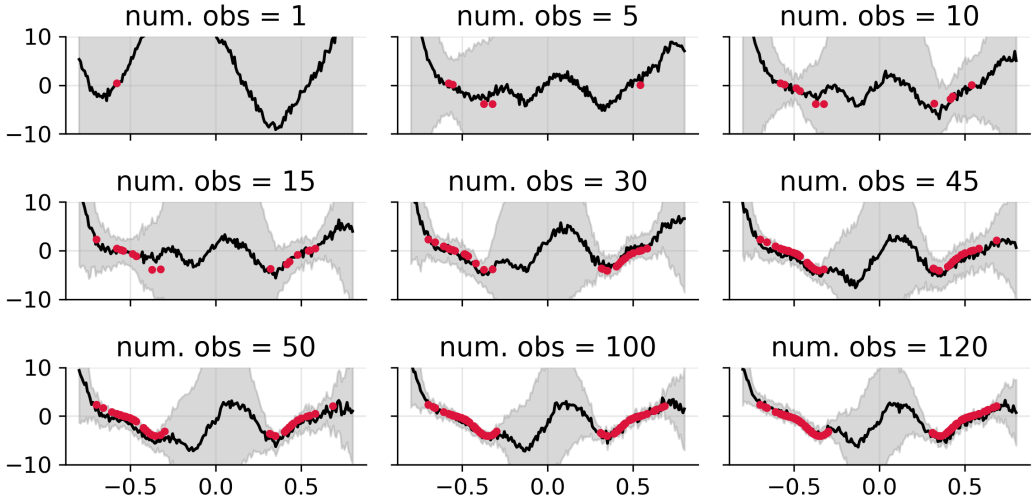}
    \caption{
        In-between uncertainty of the posterior predictive induced by \newmethodlin as a function of seen observations.
    }
    \label{fig:in-between-uncertainty-by-timestep}
\end{figure}

We observe that the uncertainty around the posterior predictive mean is wide
(covering the limit from $-10$ to $10$) and starts to decrease after $10$ observations.
By $30$ observations, the uncertainty is narrower over regions where data has been observed,
and
by $120$ observations, most of the posterior predictive uncertainty is on regions where no data has been observed.

\paragraph{Result for \methodlrkf}

We repeat the experiment above for \methodlrkf.
Figure \ref{fig:in-between-uncertainty-by-rank-lrkf}
shows the posterior predicted mean
surrounded by the two-standard deviation posterior predictive for various ranks.
\begin{figure}[htb]
    \centering
    \includegraphics[width=\linewidth]{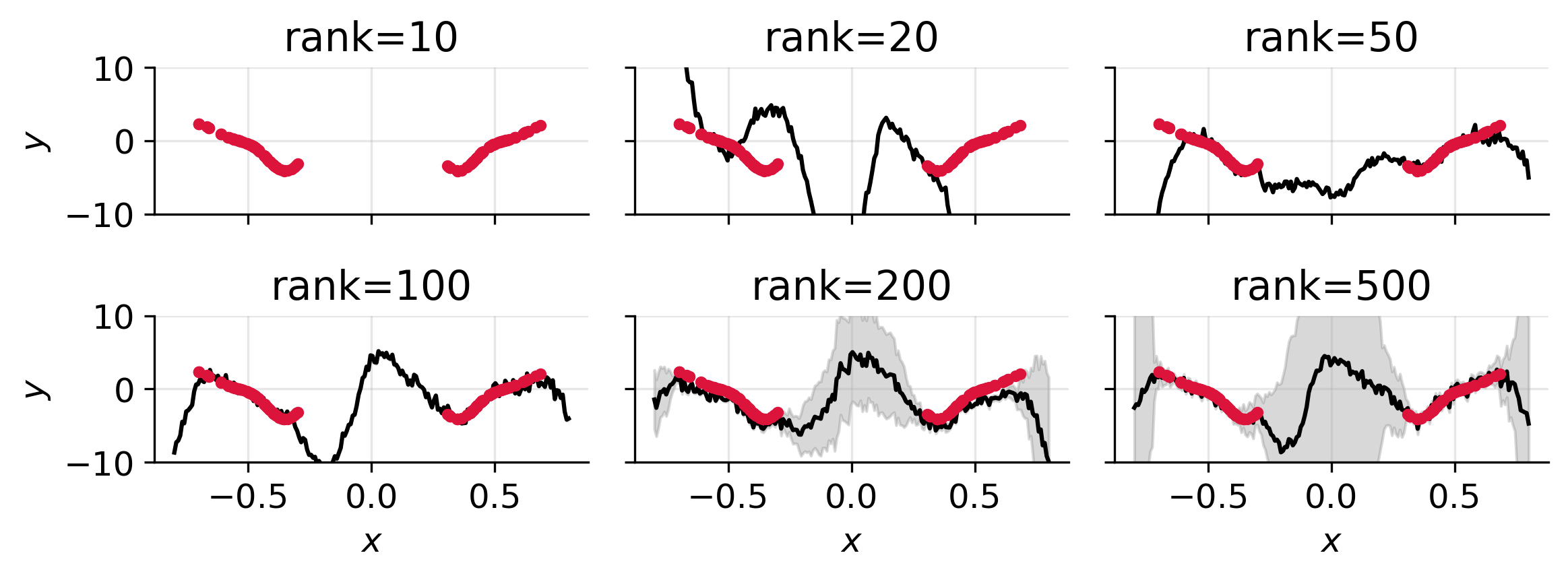}
    \caption{
        In-between uncertainty of the posterior predictive induced by \methodlrkf as a function of rank.
    }
    \label{fig:in-between-uncertainty-by-rank-lrkf}
\end{figure}
In contrast to \newmethodlin,
we observe that \methodlrkf requires a much higher rank to capture a reasonable level of \textit{in-between} uncertainty.
The panel for rank $10$ is empty because the posterior predictive (with $\vR_t = 0$) is psd,
so it is not defined.

An important distinction between \methodlrkf and \newmethodlin is the \newmethodlin considers a full-rank covariance
matrix in the last layer. Despite this, \methodlrkf with rank $200$ (which shows some notion of uncertainty)
requires around $3$ times more coefficients in the covariance than \newmethodlin with rank $50$ in the hidden layers.

We show the evolution of the posterior predictive mean and variance with rank $200$ of \methodlrkf
as a function of seen observations in Figure \ref{fig:in-between-uncertainty-by-timestep-lrkf}.

\begin{figure}[htb]
    \centering
    \includegraphics[width=\linewidth]{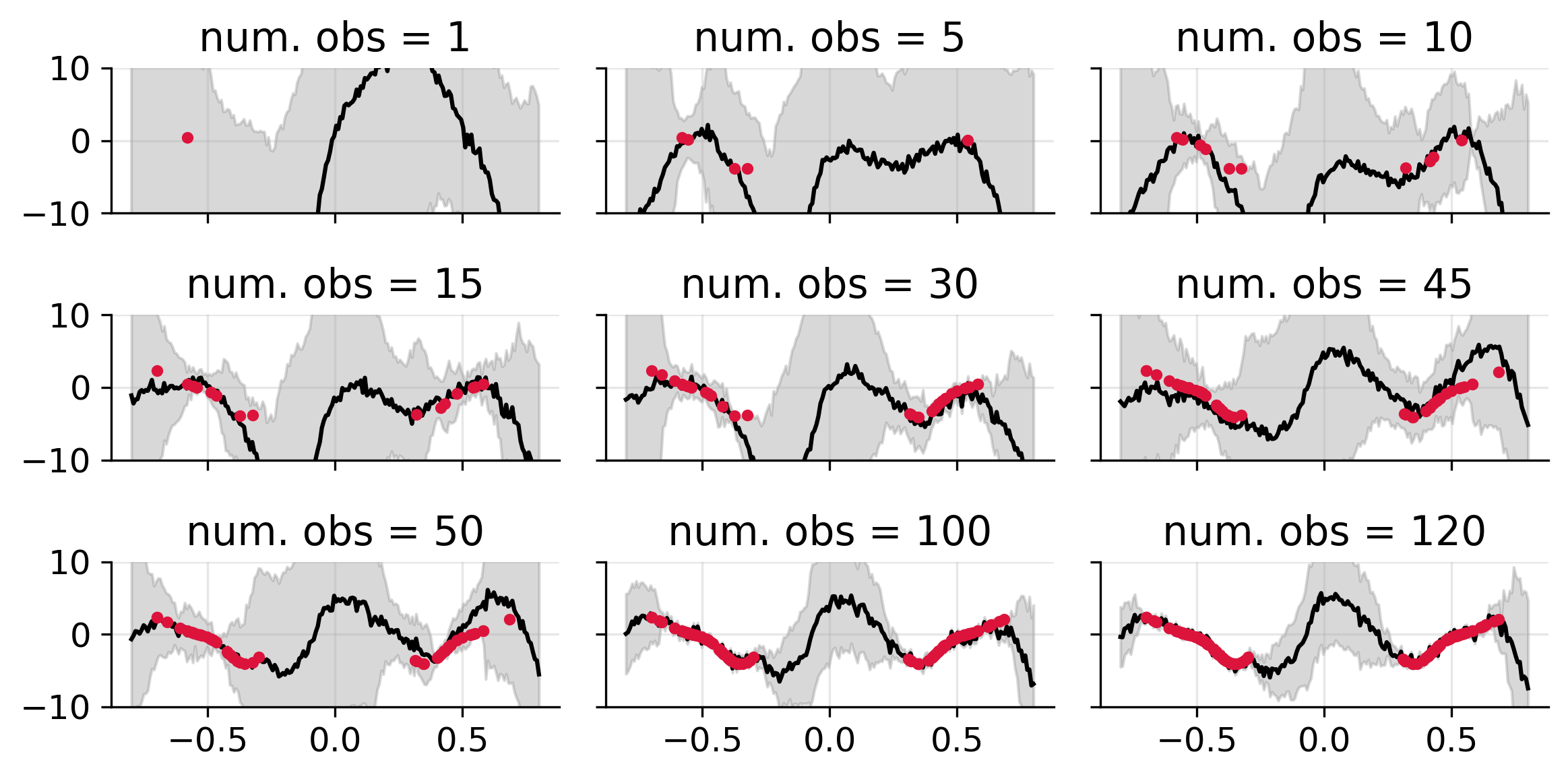}
    \caption{
        In-between uncertainty of the posterior predictive induced by
         \methodlrkf
        as a function of seen observations.
    }
    \label{fig:in-between-uncertainty-by-timestep-lrkf}
\end{figure}
We see that the evolution of the posterior predictive resembles that of \newmethodlin,
but at the cost of $3$ times more number of coefficients in the covariance.

\paragraph{Result for \methodvbll.}

We contrast the result of \newmethodlin with the offline \methodvbll method.
For \methodvbll, the data is presented all at once and we perform full-batch gradient descent using Adamw with learning rate $10^{-3}$.
Figure \ref{fig:in-between-uncertainty-vbll} shows the posterior predictive mean and two standard deviations
around the posterior mean, as well as the loss curve for various epochs.

We observe that \methodvbll takes around $10^4$ epochs to converge, whereas \newmethodlin takes only one.
We emphasize that each epoch in \methodvbll considers all datapoints at once,
so the gradient update is more informative, whereas for \newmethodlin,
the method only gets to observe the data once and in a sequence.
This highlights the efficiency of our method to produce approximate posterior predictives that
can be used online.

\begin{figure}[htb]
    \centering
    \includegraphics[width=0.8\linewidth]{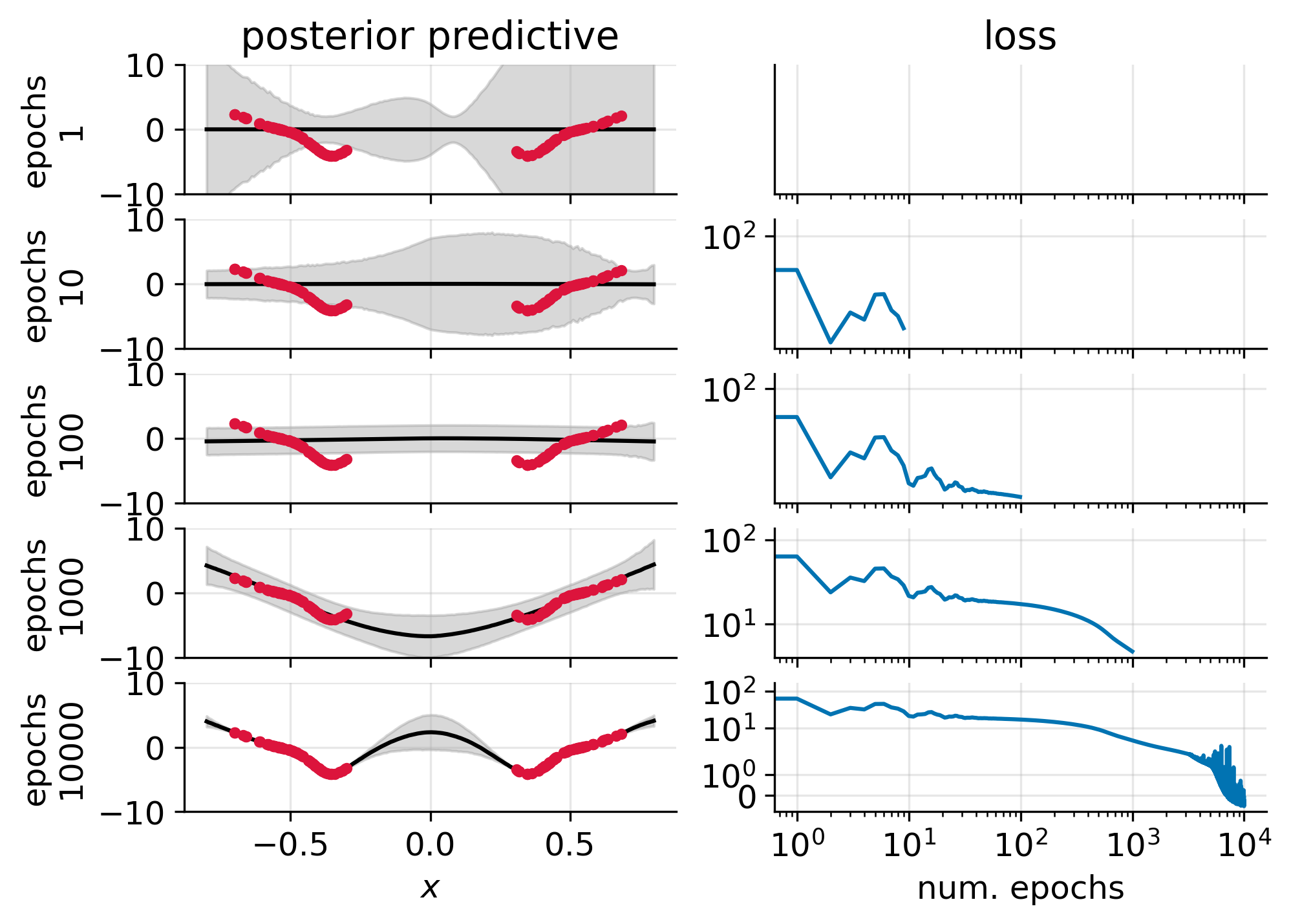}
    \caption{
        In-between uncertainty of the posterior predictive induced by \methodvbll
        as a function of number of epochs.
    }
    \label{fig:in-between-uncertainty-vbll}
\end{figure}

\clearpage
\section*{NeurIPS Paper Checklist}

\begin{enumerate}

\item {\bf Claims}
    \item[] Question: Do the main claims made in the abstract and introduction accurately reflect the paper's contributions and scope?
    \item[] Answer: \answerYes{} 
    \item[] Justification: The frequentist and Bayesian approaches to filtering are introduced in Section \ref{sec:EKF-summary} and detailed in Appendix \ref{sec:EKF}. Our method and its assumptions are detailed in Section \ref{sec:method}.
    Mathematical proofs of our claism are collected in Appendix \ref{sec:ll-lrkf-proofs}.
    Experiments and comparisons to other methods validating the experimental claims are provided in Section \ref{sec:experiments}.
    The motivation and contribution for our method is provided in Section \ref{sec:introduction}.
    \item[] Guidelines:
    \begin{itemize}
        \item The answer NA means that the abstract and introduction do not include the claims made in the paper.
        \item The abstract and/or introduction should clearly state the claims made, including the contributions made in the paper and important assumptions and limitations. A No or NA answer to this question will not be perceived well by the reviewers. 
        \item The claims made should match theoretical and experimental results, and reflect how much the results can be expected to generalize to other settings. 
        \item It is fine to include aspirational goals as motivation as long as it is clear that these goals are not attained by the paper. 
    \end{itemize}

\item {\bf Limitations}
    \item[] Question: Does the paper discuss the limitations of the work performed by the authors?
    \item[] Answer: \answerYes{} 
    \item[] Justification: Limitations of the method are discussed in Section \ref{sec:conclusion}.
    \item[] Guidelines:
    \begin{itemize}
        \item The answer NA means that the paper has no limitation while the answer No means that the paper has limitations, but those are not discussed in the paper. 
        \item The authors are encouraged to create a separate "Limitations" section in their paper.
        \item The paper should point out any strong assumptions and how robust the results are to violations of these assumptions (e.g., independence assumptions, noiseless settings, model well-specification, asymptotic approximations only holding locally). The authors should reflect on how these assumptions might be violated in practice and what the implications would be.
        \item The authors should reflect on the scope of the claims made, e.g., if the approach was only tested on a few datasets or with a few runs. In general, empirical results often depend on implicit assumptions, which should be articulated.
        \item The authors should reflect on the factors that influence the performance of the approach. For example, a facial recognition algorithm may perform poorly when image resolution is low or images are taken in low lighting. Or a speech-to-text system might not be used reliably to provide closed captions for online lectures because it fails to handle technical jargon.
        \item The authors should discuss the computational efficiency of the proposed algorithms and how they scale with dataset size.
        \item If applicable, the authors should discuss possible limitations of their approach to address problems of privacy and fairness.
        \item While the authors might fear that complete honesty about limitations might be used by reviewers as grounds for rejection, a worse outcome might be that reviewers discover limitations that aren't acknowledged in the paper. The authors should use their best judgment and recognize that individual actions in favor of transparency play an important role in developing norms that preserve the integrity of the community. Reviewers will be specifically instructed to not penalize honesty concerning limitations.
    \end{itemize}

\item {\bf Theory assumptions and proofs}
    \item[] Question: For each theoretical result, does the paper provide the full set of assumptions and a complete (and correct) proof?
    \item[] Answer: \answerYes{} 
    \item[] Justification: The main assumptions of the methods we introduced are detailed in Section \ref{sec:EKF-summary} and Appendix \ref{sec:EKF}.
    The proof of Proposition \ref{prop:new-predict-step} is in Appendix \ref{proof:new-predict-step},
    the proof of Proposition \ref{prop:new-innovation-variance} is in Appendix \ref{proof:new-innovation-variance},
    the proof of Proposition \ref{prop:new-gain-new-update} is in Appendix \ref{proof:new-gain-new-update},
    and the proof of Proposition \ref{prop:hilofi-cov-per-step-error} is in Appendix \ref{proof:hilofi-cov-per-step-error}.
    Further propositions and proofs for the \methodlrkf are stated and proved in Appendix \ref{sec:LRKF}.
    \item[] Guidelines:
    \begin{itemize}
        \item The answer NA means that the paper does not include theoretical results. 
        \item All the theorems, formulas, and proofs in the paper should be numbered and cross-referenced.
        \item All assumptions should be clearly stated or referenced in the statement of any theorems.
        \item The proofs can either appear in the main paper or the supplemental material, but if they appear in the supplemental material, the authors are encouraged to provide a short proof sketch to provide intuition. 
        \item Inversely, any informal proof provided in the core of the paper should be complemented by formal proofs provided in appendix or supplemental material.
        \item Theorems and Lemmas that the proof relies upon should be properly referenced. 
    \end{itemize}

    \item {\bf Experimental result reproducibility}
    \item[] Question: Does the paper fully disclose all the information needed to reproduce the main experimental results of the paper to the extent that it affects the main claims and/or conclusions of the paper (regardless of whether the code and data are provided or not)?
    \item[] Answer: \answerYes{} 
    \item[] Justification: To facilitate reproducibility of the methods we present,
    Algorithm \ref{algo:low-rank-filter} details \methodlrkf,
    Algorithm \ref{algo:low-rank-full-rank-update} details \newmethodlin,
    and Algorithm \ref{algo:low-rank-low-rank-update} details \newmethodlinlow.
    The experiments are detailed in Section \ref{sec:experiments}, which mentions the dataset and the neural architecture used.
    Further information (such as hyperparameter selection) for all experiments is detailed in Appendix \ref{sec:experiments-further-results}.
    Furthermore, we provide code to reproduce the results in the anonymized link in the introduction.
    \item[] Guidelines:
    \begin{itemize}
        \item The answer NA means that the paper does not include experiments.
        \item If the paper includes experiments, a No answer to this question will not be perceived well by the reviewers: Making the paper reproducible is important, regardless of whether the code and data are provided or not.
        \item If the contribution is a dataset and/or model, the authors should describe the steps taken to make their results reproducible or verifiable. 
        \item Depending on the contribution, reproducibility can be accomplished in various ways. For example, if the contribution is a novel architecture, describing the architecture fully might suffice, or if the contribution is a specific model and empirical evaluation, it may be necessary to either make it possible for others to replicate the model with the same dataset, or provide access to the model. In general. releasing code and data is often one good way to accomplish this, but reproducibility can also be provided via detailed instructions for how to replicate the results, access to a hosted model (e.g., in the case of a large language model), releasing of a model checkpoint, or other means that are appropriate to the research performed.
        \item While NeurIPS does not require releasing code, the conference does require all submissions to provide some reasonable avenue for reproducibility, which may depend on the nature of the contribution. For example
        \begin{enumerate}
            \item If the contribution is primarily a new algorithm, the paper should make it clear how to reproduce that algorithm.
            \item If the contribution is primarily a new model architecture, the paper should describe the architecture clearly and fully.
            \item If the contribution is a new model (e.g., a large language model), then there should either be a way to access this model for reproducing the results or a way to reproduce the model (e.g., with an open-source dataset or instructions for how to construct the dataset).
            \item We recognize that reproducibility may be tricky in some cases, in which case authors are welcome to describe the particular way they provide for reproducibility. In the case of closed-source models, it may be that access to the model is limited in some way (e.g., to registered users), but it should be possible for other researchers to have some path to reproducing or verifying the results.
        \end{enumerate}
    \end{itemize}

\item {\bf Open access to data and code}
    \item[] Question: Does the paper provide open access to the data and code, with sufficient instructions to faithfully reproduce the main experimental results, as described in supplemental material?
    \item[] Answer: \answerYes{} 
    \item[] Justification: All our datasets are open source and available at the mentioned sources in Section \ref{sec:experiments}.
    Preprocessing of the data is detailed in Section \ref{sec:experiments} and Appendix \ref{sec:experiments-further-results}.
    Detailed pseudocode for the methods we introduce are detailed in
    Algorithm \ref{algo:low-rank-filter} for \methodlrkf,
    Algorithm \ref{algo:low-rank-full-rank-update} for \newmethodlin,
    and
    Algorithm \ref{algo:low-rank-low-rank-update} for \newmethodlinlow.
    Furthermore, we provide code to reproduce the results in the anonymized link in the introduction.
    \item[] Guidelines:
    \begin{itemize}
        \item The answer NA means that paper does not include experiments requiring code.
        \item Please see the NeurIPS code and data submission guidelines (\url{https://nips.cc/public/guides/CodeSubmissionPolicy}) for more details.
        \item While we encourage the release of code and data, we understand that this might not be possible, so “No” is an acceptable answer. Papers cannot be rejected simply for not including code, unless this is central to the contribution (e.g., for a new open-source benchmark).
        \item The instructions should contain the exact command and environment needed to run to reproduce the results. See the NeurIPS code and data submission guidelines (\url{https://nips.cc/public/guides/CodeSubmissionPolicy}) for more details.
        \item The authors should provide instructions on data access and preparation, including how to access the raw data, preprocessed data, intermediate data, and generated data, etc.
        \item The authors should provide scripts to reproduce all experimental results for the new proposed method and baselines. If only a subset of experiments are reproducible, they should state which ones are omitted from the script and why.
        \item At submission time, to preserve anonymity, the authors should release anonymized versions (if applicable).
        \item Providing as much information as possible in supplemental material (appended to the paper) is recommended, but including URLs to data and code is permitted.
    \end{itemize}

\item {\bf Experimental setting/details}
    \item[] Question: Does the paper specify all the training and test details (e.g., data splits, hyperparameters, how they were chosen, type of optimizer, etc.) necessary to understand the results?
    \item[] Answer: \answerYes{} 
    \item[] Justification:  The training and test details are in Section \ref{sec:experiments} and Appendix \ref{sec:experiments-further-results}. 
    \item[] Guidelines:
    \begin{itemize}
        \item The answer NA means that the paper does not include experiments.
        \item The experimental setting should be presented in the core of the paper to a level of detail that is necessary to appreciate the results and make sense of them.
        \item The full details can be provided either with the code, in appendix, or as supplemental material.
    \end{itemize}

\item {\bf Experiment statistical significance}
    \item[] Question: Does the paper report error bars suitably and correctly defined or other appropriate information about the statistical significance of the experiments?
    \item[] Answer: \answerYes{} 
    \item[] Justification: Error bars are reported and explained for all experiments in Section \ref{sec:experiments} and Appendix \ref{sec:experiments-further-results}. For the experiment in Section \ref{experiment:mnist-bandits},
    suitable error bars are deferred to Figure \ref{fig:mnist-bandit-time-reward} in Appendix \ref{sec:further-results-mnist-bandit}
    as including them in the main figure would have made it overly cluttered and difficult to interpret.
    \item[] Guidelines:
    \begin{itemize}
        \item The answer NA means that the paper does not include experiments.
        \item The authors should answer "Yes" if the results are accompanied by error bars, confidence intervals, or statistical significance tests, at least for the experiments that support the main claims of the paper.
        \item The factors of variability that the error bars are capturing should be clearly stated (for example, train/test split, initialization, random drawing of some parameter, or overall run with given experimental conditions).
        \item The method for calculating the error bars should be explained (closed form formula, call to a library function, bootstrap, etc.)
        \item The assumptions made should be given (e.g., Normally distributed errors).
        \item It should be clear whether the error bar is the standard deviation or the standard error of the mean.
        \item It is OK to report 1-sigma error bars, but one should state it. The authors should preferably report a 2-sigma error bar than state that they have a 96\% CI, if the hypothesis of Normality of errors is not verified.
        \item For asymmetric distributions, the authors should be careful not to show in tables or figures symmetric error bars that would yield results that are out of range (e.g. negative error rates).
        \item If error bars are reported in tables or plots, The authors should explain in the text how they were calculated and reference the corresponding figures or tables in the text.
    \end{itemize}

\item {\bf Experiments compute resources}
    \item[] Question: For each experiment, does the paper provide sufficient information on the computer resources (type of compute workers, memory, time of execution) needed to reproduce the experiments?
    \item[] Answer: \answerYes{} 
    \item[] Justification:  Compute workers are detailed in Section \ref{sec:experiments}.
    An estimate of the running time to replicate the results with the compute used are provided in Section \ref{sec:experiments}.
    \item[] Guidelines:
    \begin{itemize}
        \item The answer NA means that the paper does not include experiments.
        \item The paper should indicate the type of compute workers CPU or GPU, internal cluster, or cloud provider, including relevant memory and storage.
        \item The paper should provide the amount of compute required for each of the individual experimental runs as well as estimate the total compute. 
        \item The paper should disclose whether the full research project required more compute than the experiments reported in the paper (e.g., preliminary or failed experiments that didn't make it into the paper). 
    \end{itemize}
    
\item {\bf Code of ethics}
    \item[] Question: Does the research conducted in the paper conform, in every respect, with the NeurIPS Code of Ethics \url{https://neurips.cc/public/EthicsGuidelines}?
    \item[] Answer: \answerYes{} 
    \item[] Justification: The authors have reviewed the NeurIPS Code of Ethics, and the research conducted in the paper conforms, in every respect, with these guidelines.
    \item[] Guidelines:
    \begin{itemize}
        \item The answer NA means that the authors have not reviewed the NeurIPS Code of Ethics.
        \item If the authors answer No, they should explain the special circumstances that require a deviation from the Code of Ethics.
        \item The authors should make sure to preserve anonymity (e.g., if there is a special consideration due to laws or regulations in their jurisdiction).
    \end{itemize}

\item {\bf Broader impacts}
    \item[] Question: Does the paper discuss both potential positive societal impacts and negative societal impacts of the work performed?
    \item[] Answer: \answerYes{} 
    \item[] Justification: Potential societal impacts are acknowledged in Section \ref{sec:conclusion}.
    \item[] Guidelines:
    \begin{itemize}
        \item The answer NA means that there is no societal impact of the work performed.
        \item If the authors answer NA or No, they should explain why their work has no societal impact or why the paper does not address societal impact.
        \item Examples of negative societal impacts include potential malicious or unintended uses (e.g., disinformation, generating fake profiles, surveillance), fairness considerations (e.g., deployment of technologies that could make decisions that unfairly impact specific groups), privacy considerations, and security considerations.
        \item The conference expects that many papers will be foundational research and not tied to particular applications, let alone deployments. However, if there is a direct path to any negative applications, the authors should point it out. For example, it is legitimate to point out that an improvement in the quality of generative models could be used to generate deepfakes for disinformation. On the other hand, it is not needed to point out that a generic algorithm for optimizing neural networks could enable people to train models that generate Deepfakes faster.
        \item The authors should consider possible harms that could arise when the technology is being used as intended and functioning correctly, harms that could arise when the technology is being used as intended but gives incorrect results, and harms following from (intentional or unintentional) misuse of the technology.
        \item If there are negative societal impacts, the authors could also discuss possible mitigation strategies (e.g., gated release of models, providing defenses in addition to attacks, mechanisms for monitoring misuse, mechanisms to monitor how a system learns from feedback over time, improving the efficiency and accessibility of ML).
    \end{itemize}
    
\item {\bf Safeguards}
    \item[] Question: Does the paper describe safeguards that have been put in place for responsible release of data or models that have a high risk for misuse (e.g., pretrained language models, image generators, or scraped datasets)?
    \item[] Answer: \answerNA{} 
    \item[] Justification: The methods developed in this paper are general-purpose tools for online learning and sequential decision-making. We do not believe they pose a high risk of misuse, and therefore no specific safeguards were necessary.
    \item[] Guidelines:
    \begin{itemize}
        \item The answer NA means that the paper poses no such risks.
        \item Released models that have a high risk for misuse or dual-use should be released with necessary safeguards to allow for controlled use of the model, for example by requiring that users adhere to usage guidelines or restrictions to access the model or implementing safety filters. 
        \item Datasets that have been scraped from the Internet could pose safety risks. The authors should describe how they avoided releasing unsafe images.
        \item We recognize that providing effective safeguards is challenging, and many papers do not require this, but we encourage authors to take this into account and make a best faith effort.
    \end{itemize}

\item {\bf Licenses for existing assets}
    \item[] Question: Are the creators or original owners of assets (e.g., code, data, models), used in the paper, properly credited and are the license and terms of use explicitly mentioned and properly respected?
    \item[] Answer: \answerYes{} 
    \item[] Justification: All assets used, whether code, data, or models have been properly cited in the main text.
    \item[] Guidelines:
    \begin{itemize}
        \item The answer NA means that the paper does not use existing assets.
        \item The authors should cite the original paper that produced the code package or dataset.
        \item The authors should state which version of the asset is used and, if possible, include a URL.
        \item The name of the license (e.g., CC-BY 4.0) should be included for each asset.
        \item For scraped data from a particular source (e.g., website), the copyright and terms of service of that source should be provided.
        \item If assets are released, the license, copyright information, and terms of use in the package should be provided. For popular datasets, \url{paperswithcode.com/datasets} has curated licenses for some datasets. Their licensing guide can help determine the license of a dataset.
        \item For existing datasets that are re-packaged, both the original license and the license of the derived asset (if it has changed) should be provided.
        \item If this information is not available online, the authors are encouraged to reach out to the asset's creators.
    \end{itemize}

\item {\bf New assets}
    \item[] Question: Are new assets introduced in the paper well documented and is the documentation provided alongside the assets?
    \item[] Answer: \answerYes{} 
    \item[] Justification: Assets are detailed in the introduction of the main text. We include all details to reproduce our methods and the licence is provided within the code. Consent was not required.
    \item[] Guidelines:
    \begin{itemize}
        \item The answer NA means that the paper does not release new assets.
        \item Researchers should communicate the details of the dataset/code/model as part of their submissions via structured templates. This includes details about training, license, limitations, etc. 
        \item The paper should discuss whether and how consent was obtained from people whose asset is used.
        \item At submission time, remember to anonymize your assets (if applicable). You can either create an anonymized URL or include an anonymized zip file.
    \end{itemize}

\item {\bf Crowdsourcing and research with human subjects}
    \item[] Question: For crowdsourcing experiments and research with human subjects, does the paper include the full text of instructions given to participants and screenshots, if applicable, as well as details about compensation (if any)? 
    \item[] Answer: \answerNA{} 
    \item[] Justification: This paper did not involve crowdsourcing nor research with human subjects.
    \item[] Guidelines:
    \begin{itemize}
        \item The answer NA means that the paper does not involve crowdsourcing nor research with human subjects.
        \item Including this information in the supplemental material is fine, but if the main contribution of the paper involves human subjects, then as much detail as possible should be included in the main paper. 
        \item According to the NeurIPS Code of Ethics, workers involved in data collection, curation, or other labor should be paid at least the minimum wage in the country of the data collector. 
    \end{itemize}

\item {\bf Institutional review board (IRB) approvals or equivalent for research with human subjects}
    \item[] Question: Does the paper describe potential risks incurred by study participants, whether such risks were disclosed to the subjects, and whether Institutional Review Board (IRB) approvals (or an equivalent approval/review based on the requirements of your country or institution) were obtained?
    \item[] Answer: \answerNA{} 
    \item[] Justification: This paper did not involve crowdsourcing nor research with human subjects.
    \item[] Guidelines:
    \begin{itemize}
        \item The answer NA means that the paper does not involve crowdsourcing nor research with human subjects.
        \item Depending on the country in which research is conducted, IRB approval (or equivalent) may be required for any human subjects research. If you obtained IRB approval, you should clearly state this in the paper. 
        \item We recognize that the procedures for this may vary significantly between institutions and locations, and we expect authors to adhere to the NeurIPS Code of Ethics and the guidelines for their institution. 
        \item For initial submissions, do not include any information that would break anonymity (if applicable), such as the institution conducting the review.
    \end{itemize}

\item {\bf Declaration of LLM usage}
    \item[] Question: Does the paper describe the usage of LLMs if it is an important, original, or non-standard component of the core methods in this research? Note that if the LLM is used only for writing, editing, or formatting purposes and does not impact the core methodology, scientific rigorousness, or originality of the research, declaration is not required.
    \item[] Answer: \answerNA{} 
    \item[] Justification: The core method development in this research does not involve LLMs as any important, original, or non-standard components.
    \item[] Guidelines:
    \begin{itemize}
        \item The answer NA means that the core method development in this research does not involve LLMs as any important, original, or non-standard components.
        \item Please refer to our LLM policy (\url{https://neurips.cc/Conferences/2025/LLM}) for what should or should not be described.
    \end{itemize}

\end{enumerate}

\end{document}